\documentclass{article}

\usepackage{microtype}
\usepackage{graphicx}
\usepackage{subcaption}
\usepackage{booktabs} 

\usepackage{hyperref}

\usepackage[accepted]{icml2026}
\usepackage{booktabs}
\usepackage{longtable}
\usepackage{array}
\usepackage{amsmath}
\usepackage{amssymb}
\usepackage{mathtools}
\usepackage{amsthm}
\usepackage{makecell}
\usepackage{algorithmic}
\usepackage{xspace}
\usepackage{bbm}
\usepackage{dsfont}
\usepackage{xcolor}
\usepackage{booktabs}
\usepackage{float}
\usepackage{multirow}

\usepackage[capitalize,noabbrev]{cleveref}

\theoremstyle{plain}
\newtheorem{theorem}{Theorem}[section]

\theoremstyle{definition}
\newtheorem{definition}[theorem]{Definition}

\theoremstyle{remark}

\DeclareMathOperator{\bx}{\mathbf{x}}

\newcommand{\eps}{\varepsilon}
\newcommand{\indic}{\mathbbm{1}}

\newcommand{\framealg}{\texttt{FRAME}\xspace}
\newcommand{\corels}{\texttt{CORELs}\xspace}
\newcommand{\gravitree}{\textbf{GRAVITree}\xspace}

\def\lt{\textrm{\rm left}}
\def\rt{\textrm{\rm right}}
\def\leaf{\textrm{\rm leaf}}

\icmltitlerunning{Falling Trees: A Model Class for Interpretable Risk Prioritization}

\begin{document}

\twocolumn[\icmltitle{Falling Trees: A Model Class for Interpretable Risk Prioritization}



  \icmlsetsymbol{equal}{*}

  \begin{icmlauthorlist}
    \icmlauthor{Varun Babbar}{equal,duke}
    \icmlauthor{Zachery Boner}{equal,duke}
    \icmlauthor{Margo Seltzer}{ubc}
    \icmlauthor{Cynthia Rudin}{duke}
  \end{icmlauthorlist}

  \icmlaffiliation{duke}{Department of Computer Science, Duke University, Durham, NC, USA}
  \icmlaffiliation{ubc}{Department of Computer Science, University of British Columbia, Vancouver, BC, CA}

  \icmlcorrespondingauthor{Varun Babbar}{varun.babbar@duke.edu}
  \icmlcorrespondingauthor{Zachery Boner}{zachery.boner@duke.edu}

  \icmlkeywords{Decision Tree Optimization, Interpretable Machine Learning}

  \vskip 0.3in
]



\printAffiliationsAndNotice{\icmlEqualContribution}

\begin{abstract}
Many real-world decisions require prioritizing high-risk cases, such as clinicians prioritizing high-risk patients before lower-risk ones. Falling rule lists (FRLs), which are ordered if--then rules with monotonically decreasing risks, provide an interpretable framework for such tasks; however, their single-path structure yields a highly restricted model class. We introduce falling trees, a new family of interpretable models that enforces the same monotonic risk constraint while permitting tree-structured branching. We present \gravitree, a novel dynamic-programming-with-bounds algorithm for learning the Rashomon set of falling trees under depth and branching constraints. Our formulation can interpolate between rule lists and full decision trees, enabling user-desired model expressivity. In a new clinical dataset and in many public classification benchmarks, falling trees match or outperform FRLs and other interpretable baselines, often producing more sparse decisions for high-risk instances. Our results show that falling trees strike a practical balance between interpretability, expressiveness, and risk prioritization for high-stakes settings.
\end{abstract}


\section{Introduction}\label{sec:intro}

\begin{figure}
    \centering
    \includegraphics[width=.99\linewidth]{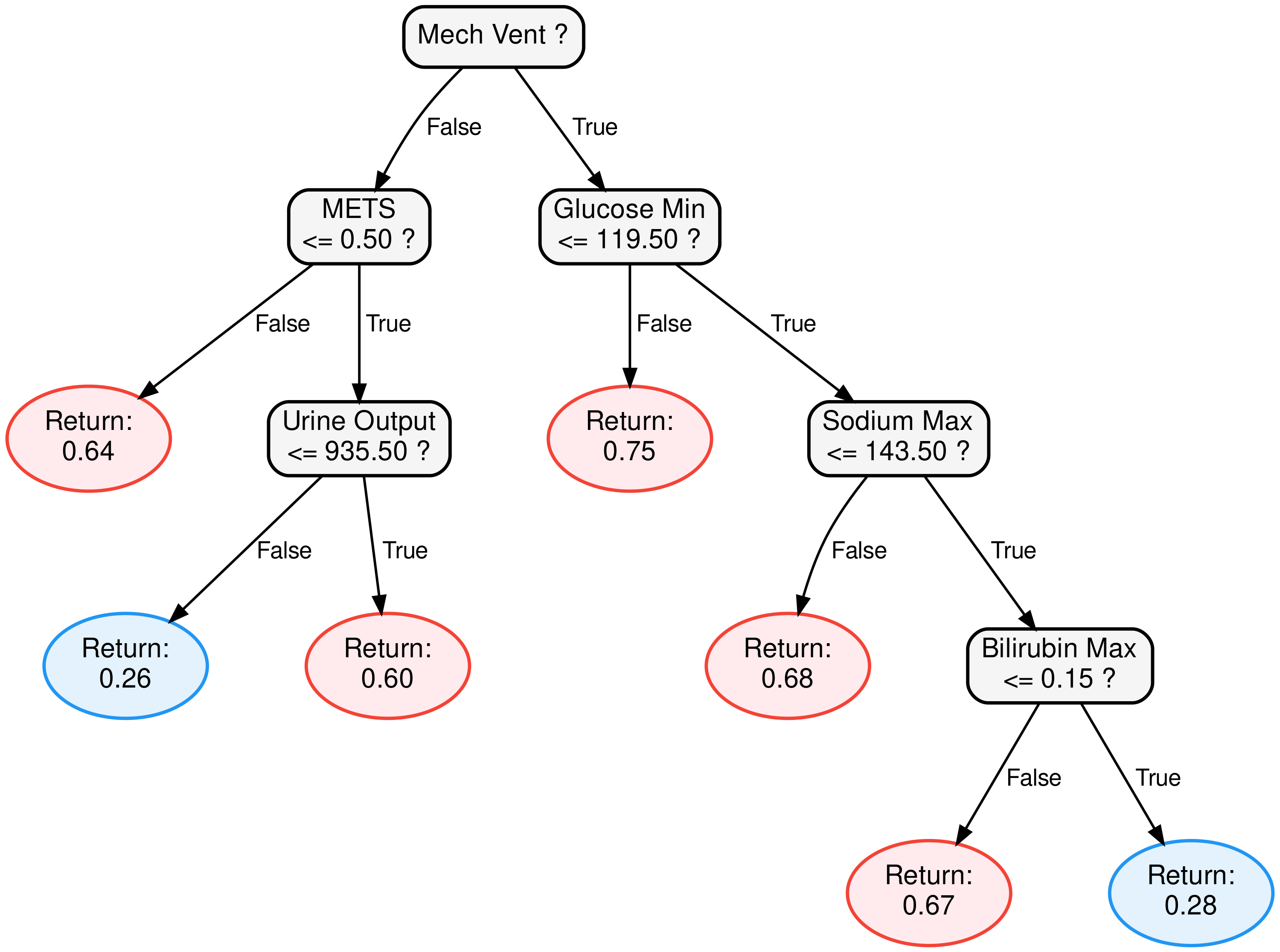}
    \caption{An example of a falling tree found by our algorithm on the MIMIC-III dataset. This model predicts mortality of ICU patients. All features represent the worst value in the first 24h of the patient's stay. Further details on the data and models are in Appendix \ref{app:mimic-case-study}.}
    \label{fig:headline_figure}
\end{figure}


\textit{Falling Rule Lists} (FRLs) are an important logical model class for high-stakes decisions \citep{fulton,chaofan}. FRLs are composed of a sequence of if-then rules such that the risks $P(y=1|\mathbf{x})$ for the outcome $y$ given features $\bx$ decrease (``fall'') along the sequence.  This falling constraint allows high-risk cases to be identified more quickly, because they are decided within the first few rules. In that sense, falling rule lists are most sparse for the highest risk examples.

However, the current algorithms for optimizing FRLs suffer from three major drawbacks. 
First, they are unable to guarantee full optimality; the work of \citet{fulton} uses a Bayesian sampling approach, and the work of \citet{chaofan} uses a constrained Monte Carlo sampling approach to generate candidate rule lists.
Second, they use rules generated beforehand with algorithms such as FPGrowth \cite{fpgrowth}.
These rules are too numerous to be fully optimized with methods that compute provably optimal models \citep[e.g.,][]{corels,treefarms, dl85}. 
Finally, existing algorithms for optimizing FRLs suffer in practice from the \textit{interaction bottleneck}, i.e., the difficulty of working with a machine learning algorithm to design predictive models \citep{RudinEtAlAmazing2024}. The algorithms proposed by \citet{fulton} and \citet{chaofan} output a single model in each iteration.
This results in a slow and difficult feedback cycle between domain experts critiquing models and practitioners responding to these critiques.

We present a new model class called \textit{falling trees}, which are constrained decision trees designed to address these three issues. 
We provide a corresponding algorithm called \gravitree to discover the set of all near-optimal falling trees \citep[i.e., the \textit{Rashomon set},][]{RudinEtAlAmazing2024,MCR_2019,semenova2022existence}.
Our new model class generalizes the notion of the falling constraint to trees rather than decision lists.
Figure \ref{fig:headline_figure} shows an example of a falling tree on a dataset from the MIMIC-III dataset \citep{johnson_mimic-iii_2016}. 

Our optimization procedure incorporates a \textit{branching penalty} that softly penalizes branches that deviate from the `if/then' rule list structure.
This results in models that branch out from the rule list structure either to avoid violating the falling constraint or when branching results in a sufficient performance increase on the training data.
This effectively allows the model to create higher-order rules when necessary for good performance and avoids the precomputation of rules required for previous falling rule list optimization algorithms.

We present experiments showing that our approach significantly decreases the average number of decisions needed to make predictions on the positive class while maintaining similar or better test performance than other methods. We also present a case study in Appendix \ref{app:mimic-case-study}  that uses the MIMIC-III data to demonstrate {\gravitree}'s ability to control the structure of trees and provide interpretable models.

\section{Related Work}\label{sec:related-work}

\subsection{Decision Tree and Decision List Optimization}

Classical approaches to decision tree optimization have focused on greedy induction, i.e., expanding on a split by choosing features that are locally optimal according to a heuristic such as the Gini index or classification accuracy \cite{breiman1984classification,quinlan2014c45}, but forgo closeness to global optimality. More recent work has substantially improved on these methods by leveraging dynamic-programming-with-bounds techniques to prune parts of the search space and recover provably optimal trees \cite{gosdt_guesses,dl85,streed,chaoukibranches}. Other recent work uses principled heuristics for fast recovery of near-optimal trees \cite{split,praxis}. There is also work on learning the related model class of rule lists; this dates back at least to \citet{Rivest87}; there are many papers on creating sets of rules and assembling them into lists. The \corels algorithm of \citet{corels} is unique in fully optimizing rule lists using a dynamic-programming-with-bounds approach. 

Our work falls into this general category of optimization for trees and lists with bounds, but the model class and problem formulation are unique, yielding new bounds and necessitating new optimization techniques. Specifically, we show that monotonicity (falling) constraints imposed on trees can be used to further prune the search space, enhancing the efficiency of \gravitree. 

To our knowledge, only two works have proposed algorithms for learning falling rule lists. \citet{fulton}, who introduced FRLs, proposed a Bayesian (generative) modeling approach and performed maximum a posteriori optimization.  \citet{chaofan} proposed an optimization-based approach that is more efficient for finding high-quality FRLs. 
At each iteration, they incrementally build a rule list by randomly selecting among rules (mined beforehand) that satisfy a set of bounds. 
These bounds are used to prune the search space and are the key to finding good models quickly. The algorithm reports the best FRL found after a fixed number of iterations. We modified this algorithm to output a Rashomon set of falling rule lists as a baseline for comparison with our method. 
In contrast to these methods, our approach does not rely on rules being constructed beforehand; it outputs trees with structures deviating from rule lists only when it is necessary to achieve good performance.

\subsection{Why Rashomon Sets are Useful}
There is rich theoretical, algorithmic, and sociotechnical literature demonstrating that Rashomon sets are useful in myriad ways. 
\citet{kobylińska2023explorationrashomonsetassists} show that analyzing predictions from diverse models in the Rashomon set can lead to more trust in model outputs and more robust decision making in high stakes settings. \citet{hsu2026the} show that Rashomon sets contain models that can perform well on various trustworthiness criteria such as fairness, robustness, and privacy.
\citet{rashomon_sets_active_learning} demonstrate accelerated convergence and improved predictive accuracy when using Rashomon set ensembles to perform active learning. 
\citet{RudinEtAlAmazing2024} reviews many reasons why the Rashomon Effect is useful, from reliable variable importance analysis \citep{donnelly2023the,DonnellyEtAl2026} to proofs of the existence of competitive simpler models on noisy tabular data \citep{semenova2022existence, semenova2023path, boner2024using}.

\subsection{Rule-Based Models and Rashomon Sets}
Previous work on finding the Rashomon set of sparse decision trees \citep{treefarms,split,arslan2025sorted} uses dynamic-programming-with-bounds on search graphs to explore the space of decision trees efficiently.
The bounds in these algorithms depend on a sparsity regularization term, which penalizes the number of leaves in a tree.
\citet{mata2022computing} find Rashomon sets of rule lists, using an approach combining ideas from \citet{corels} and \citet{treefarms}. \citet{rulesets} find Rashomon sets of the related hypothesis space of rule sets. 

These are all algorithms for Rashomon sets of logical models, but none of them consider class probabilities or constraints on the class probabilities, such as falling constraints.






\section{Notation and Definitions}\label{sec:methodology}

We now present background concepts and notation for formally defining the model class of falling trees.

\subsection{Notation}
Let $D = \{\bx_i, y_i\}_{i=1}^n$ be a dataset of size $n$, where $\bx_i \in \{0,1\}^K$ is a binary feature vector of size $K$ and $y_i \in \{0,1\}$ is a binary label. Our formulation is general enough to extend to continuous and categorical features that have been binarized; binarized features are also more intuitive for users \cite{WarrenEtAl23}. Given a search space $\mathcal{F}_d$ of decision trees of depth $\leq d$, we define a tree $T \in \mathcal{F}_d$ as a function $T: \{0,1\}^K\rightarrow [0,1]$ that takes a feature vector $\bx$ as input and returns a probability (which can be thresholded to produce a binary outcome) after making a series of queries on $\bx$, where each query involves one feature (e.g., \texttt{age $>70$}). A \textit{rule list} also involves a series of queries, but rule lists use multiple features per query (e.g., a rule may be ``\texttt{age $>70$} and \texttt{chest\_pain} $=1$'') and every node has at least one child that is a leaf. 
The probabilities in the leaves of FRLs monotonically decrease down the list.
We denote $N(T)$ as the set of non-leaf nodes in $T$, $L(T)$ as the leaf set of $T$, $p^+(l)$ as the positive class probability of any leaf $l \in L(T)$, depth$(l)$ as the depth of a leaf (with the root at depth $0$), and $1 \leq |l|\leq n$ as the support size of $l$ (the number of data points captured by the leaf).

\subsection{Generalizing the Falling Constraint to Trees}
Falling rule lists have simple constraints: the leaf probabilities decrease as we move down the list. We now generalize this definition to trees, which have a branching structure. We want the highest predicted risk samples to be positioned higher in the tree than the lower risk samples. 


\begin{definition}[Adjacent Leaves]
Given a root-to-leaf path $P \subseteq N(T) \cup L(T)$ in tree $T$, a leaf $l$ is \textit{adjacent} to $P$ if it is either in $P$ or is the child of a node in $P$, and it is \textit{not} a sibling of the terminal leaf.
Define $L(P,T) \subseteq L(T)$ to be the set of leaves adjacent to path $P$, ordered by depth. 
\end{definition}

\begin{definition}[Falling Constraint for Trees]
Let $\mathcal{P}(T)$ be the set of all root-to-leaf paths in tree $T$. $T$ is \textit{falling} if, $\forall \ P \in \mathcal{P}(T)$, $\forall l \in L(P,T)$, $p^+(l)$ is monotonically decreasing in depth. We denote $F(T, D) \in \{0,1\}$ as the falling constraint, where $1$ is satisfaction and $0$ otherwise.
\end{definition}

This constraint implies that along any given path, leaves with higher positive class probabilities must be closer to the root. It is easy to verify that a falling rule list satisfies this definition; each leaf in an FRL must necessarily be the highest probability leaf in the subtree rooted in the internal node above it, and this leaf is at a higher depth than any other leaf in the subtree.
Figure \ref{fig:path-root-leaf} shows an example of a falling tree, providing a visual aid for its definitions.

\begin{figure}[t]
    \centering
    \includegraphics[width=0.63\linewidth]{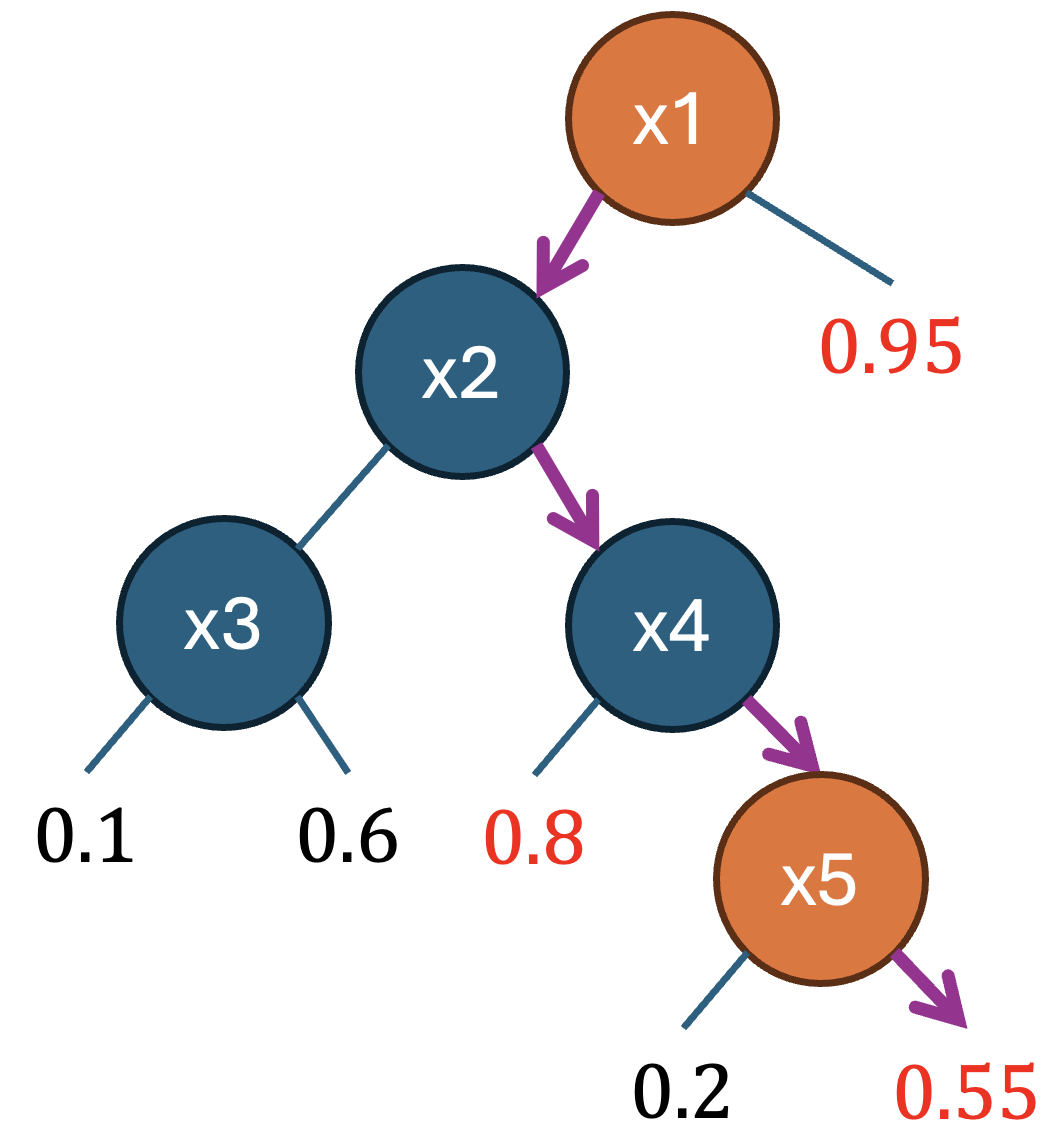}
    \caption{A toy example decision tree that satisfies the falling constraint.
    One path from root to leaf is denoted by purple arrows.
    The `0.2' leaf is not on the path, because it is a sibling of the terminal `0.55' leaf.
    The red highlighted leaves are adjacent to the path and their empirical positive proportions decrease monotonically.
    The falling constraint is satisfied for all paths in this tree.
    This model incurs branching cost $H(T)=1$.}
    \label{fig:path-root-leaf}
\end{figure}

\subsection{Branching Cost Formulation}
Sparse decision tree algorithms optimize  the number of leaves but not the shape of the tree. A falling rule list (with one condition per query) is a highly \textit{imbalanced} tree, as it is completely one-sided. We may find trees that are more useful, intuitive, or have lower decision sparsity (fewer queries per decision) if we can control tree shape.  
We now introduce a novel formulation that enables an optimization algorithm to control tree shape, specifically imbalance, allowing the user to gradually transition between complete decision trees and rule lists.
Given a decision tree $T$ with internal nodes $N(T)$, we define the branching cost of $T$, $H(T)$, as
\[
    H(T) := \sum_{n \in N(T)} \indic[\text{both of $n$'s children are internal nodes}].
\]
In other words, the branching cost is the number of internal nodes that branch into two different non-leaf subtrees. Any tree with a branching cost of $0$ is a rule list, and a tree on $d$ variables with a branching cost of $2^{d-1}$ is a complete tree.
This formulation, however, still does not ensure sparsity with respect to the number of leaves. One could hypothetically construct a rule list of arbitrary length that incurs no branching cost penalty. To resolve this, we impose a maximum depth and a minimum support in each leaf to eliminate arbitrarily deep falling trees from our search space. These are standard constraints used in many decision tree approaches \cite{breiman1984classification, breiman2001random, quinlan2014c45, streed}. With the branching cost, depth constraint, and support constraint, we are able to achieve various attributes of trees we care about, i.e., \textit{sparsity}, \textit{falling-ness}, and \textit{rule-listy-ness}.






\section{\gravitree{:} Finding the Rashomon Set of Falling Trees}

\gravitree{} uses a novel objective function that penalizes the number of misclassifications as well as the number of \textit{branches} in the tree: 
\begin{equation}
    \label{eqn:obj_function}
    \mathcal{L}(T,D) = \frac{1}{n}\sum_{i=1}^n \mathds{1}[T(\bx_i) \neq y_i] + \mu H(T)
\end{equation}
where $\mu$ is the branching penalty. We now define the following optimization problem:
\begin{align}
    \mathcal{L}^*(D) & = \min_{T \in \mathcal{F}_d} \mathcal{L}(T,D) \ \textrm{such that} \\
                    &\min_{l \in L(T)} \frac{|l|}{n} \geq \alpha, F(T,D) = 1
\end{align}
where $\alpha$ is the minimum leaf support and $d$ is the depth budget. Given tolerance $\varepsilon \ge 0$, the Rashomon set of falling trees is
\begin{equation}
\label{eqn:rashomon_set}
\mathcal{R}(D,d,\mu,\varepsilon) :=\left\{
\begin{aligned}
& T \in \mathcal{F}_d :\mathcal{L}(T;D) \le (1+\varepsilon)\mathcal{L}^*(D),\\
& \min_{l \in L(T)} \tfrac{|l|}{n} \ge \alpha,\ F(T,D)=1
\end{aligned}
\right\}.
\end{equation}
Our algorithm for finding $\mathcal{R}(D,d,\mu,\varepsilon)$ recursively constructs a search graph. It assigns a budget to each node in the search graph, pruning candidates whose optimal completions do not keep them in the Rashomon set. It also applies pruning and caching strategies that exploit the falling constraint, making the search process efficient. Once the Rashomon sets for a node's left and right children are found, it merges and filters them according to the falling constraint.

As part of finding Rashomon sets defined in \autoref{eqn:rashomon_set}, we find Rashomon sets of subproblems, where a subproblem is defined by a subset of data. For subset $D_{\textrm{sub}} \subseteq D$, we need to find optimal trees according to $\mathcal{L}^*(D_{\textrm{sub}})$.

\paragraph{Finding the optimal falling tree for a subproblem.}
A key subroutine in \gravitree is Algorithm~\ref{alg:opt_falling_tree}, which uses dynamic programming with bounds to compute the optimal falling tree for a given subproblem. It returns two solutions: the best overall solution $S^* = \{\mathcal{L}^*(D), T^*, p_{\max}^+(T^*)\}$ (loss, best tree, and maximum leaf probability observed in the tree), and, if $T^*$ is a leaf, the best solution that is not a leaf $S^{\diamond} = \{\mathcal{L}^\diamond(D), T^\diamond, p_{\max}^+(T^\diamond)\}$ (or $\emptyset$ if none is feasible). The alternative solution $S^{\diamond}$ handles interactions between the falling constraint and branching penalty $\mu$.

After reaching the base case (terminal depth), we construct a set of candidate features and prune those whose child subproblems (a) violate min-support $\alpha$ or (b) have empirical positive proportion exceeding $p^+_{\text{recent}}$ and thus inevitably violate falling (Theorem~\ref{thm:path_pruning}). Each remaining feature is evaluated by recursively solving its child subproblems.

Unlike standard tree search, child subproblems are coupled: the branching penalty depends on whether both children are internal nodes, and the falling constraint ties the bound on each child's leaves to the sibling's leaf probabilities. To resolve this, we recurse first on the child $h$ with the larger positive leaf probability. Let $T_h$ be the corresponding optimal tree found on child $h$ and $p^+_{\max}(T_h)$ the largest leaf probability observed in any leaf in $T_h$ (resp. $T_l, p^+_{\max}(T_l)$). We now examine $2$ cases: 

\textbf{Case 1 (line~14): The best solution for child $h$ is a leaf.} 
Two sub-options are compared: \emph{(A)} keep the leaf $T_h$ and solve $\ell$ under the tightened bound $p^+_{\max}(T_h)$ (lines 16--19), since any leaf in $\ell$ adjacent to a leaf at $h$ must satisfy falling; or \emph{(B)} don't use the best solution, rather use $S_h^{\diamond}$ (the best non-leaf at $h$, which is the second best solution) and solve $\ell$ under the original $p^+_{\text{recent}}$ (lines 21--23), which is feasible because $h$ being internal removes the adjacent-leaf constraint. Note that if $S_h^\diamond$ does not exist (i.e. is an empty set) then case B will not be encountered.

\textbf{Case 2 (line~24): The best solution for child $h$ is a tree.}
We solve $\ell$ under $p^+_{\text{recent}}$. If $T_\ell$ is a leaf, that induces a new constraint on the other child, which creates two sub-options: \emph{(A)} use the optimal solution for the $\ell$ side (which is a leaf, and which induces a constraint on the $h$ side) and re-solve the $h$ side, or \emph{(B)} use the optimal solution for the $h$ side and use the second best solution for the $\ell$ side (which is a tree). That is, we either: \emph{(A)} re-solve $h$ under the tightened bound $p^+_{\max}(T_\ell)$ (lines 29--32), or \emph{(B)} keep $T_h$ and use $S_\ell^{\diamond}$ (lines 34--36). Sub-option B incurs no extra computational cost, since $S_\ell^{\diamond}$ was already returned alongside $S_\ell$. As mentioned above, if $S_h^\diamond$ does not exist (i.e. is an empty set) then we need not consider case B. \\

Lastly, if $T_\ell$ is a tree, neither child constrains the other, and the solutions combine directly. The loss-minimizing solution is returned as $S^*$, and the loss-minimizing non-leaf solution (i.e., the second best solution) as $S^{\diamond}$, if $S^*$ is a leaf.
\paragraph{Falling constraint-based pruning strategies.}
We can exploit the falling constraint to prune large parts of the search space for both the optimal solutions $\mathcal{L}^*(D_{\textrm{sub}})$ for each subproblem $D_{\textrm{sub}}$ using Algorithm \ref{alg:opt_falling_tree} and for finding the Rashomon set, which is presented in Algorithm \ref{alg:opt_falling_rset}. The pruning is done distinctly at local and global levels. Local pruning occurs before making any recursive calls on the child subproblems. If the fraction of positive examples in a child node exceeds the positive probability of the most recent preceding leaf $p^+_{\text{recent}}$ on the path containing the parent, the feature is immediately discarded as a violation of monotonicity. We prove that this is a valid pruning method in Theorem \ref{thm:path_pruning}.




\begin{algorithm}[H]
\caption{\textsc{OptFallingTree}}
\label{alg:opt_falling_tree}
\small
\begin{algorithmic}[1]
\REQUIRE Feature matrix $X$, labels $y$, $I$ row indices of $X$, depth budget $d$, branching cost $\mu$, feature set $F$, min support $\alpha$, global dataset size $n$, current path $P$, max positive-class proportion (seen on the most recent leaf adjacent to $P$) $p^+_{\text{recent}}$,

\STATE $p^+_I \gets \mathbb{E}[y[I]]$;\quad
       $\mathcal{L}_{\text{leaf}}(I) \gets \tfrac{\#\text{minority in }I}{n}$
\STATE $S^* \gets \big(\mathcal{L}_{\text{leaf}}(I),\, \textsc{Leaf}(p^+_I),\, p^+_I\big)$;\quad $S^{\diamond} \gets \emptyset$
\IF{$d = 0$} \STATE \textbf{return} $(S^*, S^{\diamond})$ \COMMENT{\textcolor{blue}{Base case}} \ENDIF

\STATE Build candidate feature set $C$ (prune features $j \in F$ based on min-support, $\max(p^+_L,p^+_R) \le p^+_{\text{recent}}$)

\FOR{each $(j, \ldots) \in C$}
  \STATE $(I_h, P_h) \leftarrow$  row indices and path of higher-$p^+$ child
  \STATE $(I_\ell, P_\ell) \leftarrow$ row indices and path of lower $p^+$ child.
  \STATE $\big(S_h, S_h^{\diamond}\big) \gets \textsc{OptFallingTree}(\ldots, I_h, P_h,\, p^+_{\text{recent}})$
  \STATE $(L_h, T_h, p^+_{\max}(T_h)) \leftarrow S_h$ \COMMENT{\textcolor{blue}{$S_h$ contains these $3$ things}}
  \STATE $(L^{\diamond}_h, T^{\diamond}_h, p^+_{\max}(T^\diamond_h)) \leftarrow S_h^{\diamond}$ 
  \STATE $S_j \gets \emptyset$ \COMMENT{\textcolor{blue}{Initializing solution for split $j$}}

  \IF{$T_h$ is a leaf}
    \STATE \textcolor{blue}{$\triangleright$ A: solve low problem under tighter $p^+_{\max}(T_h)$ bound}
    \IF{$p^+_{\max}(T_h) \le p^+_{\text{recent}}$}
      \STATE $(S_\ell^A, \cdot) \gets \textsc{OptFallingTree}(\ldots, I_\ell, P_\ell,\, p^+_{\max}(T_h))$
      
      \STATE $S_{j,T_h, A} \gets \textsc{Combine}(j, S_h, S_\ell^A)$
    \ENDIF
    \STATE \textcolor{blue}{$\triangleright$ B: solve low problem under original $p^+_{\text{recent}}$ bound }
      \STATE $(S_\ell^B, \cdot) \gets \textsc{OptFallingTree}(\ldots, I_\ell, P_\ell,\, p^+_{\text{recent}})$
      \STATE $S_{j, T_h, B} = \textsc{Combine}(j, S_h^{\diamond}, S_\ell^B)$
      \STATE $S_j \gets \min\big(S_{j,T_h, A},S_{j, T_h, B}\big)$
  \ELSE
  \STATE \textcolor{blue}{$T_h$ is a tree}
    \STATE $\big(S_\ell, S_\ell^{\diamond}\big) \gets \textsc{OptFallingTree}(\ldots, I_\ell, P_\ell,\, p^+_{\text{recent}})$
    \IF{$T_\ell$ is a leaf}
      \STATE \textcolor{blue}{$\triangleright$ A: re-solve high prob under tighter $p^+_{\max}(T_\ell)$ bound}
      \IF{$p^+_{\max}(T_\ell) \le p^+_{\text{recent}}$}
        \STATE $(S_h', \cdot) \gets \textsc{OptFallingTree}(\ldots, I_h, P_h,\, p^+_{\max}(T_\ell))$
        \STATE $S_{j,T_\ell, A} \gets \textsc{Combine}(j, S_h', S_\ell)$
      \ENDIF
      \STATE \textcolor{blue}{$\triangleright$ B: keep $T_h$, use second best solution $T_\ell^{\diamond}$}
      \STATE $S_{j,T_\ell, B} = \textsc{Combine}(j, S_h, S_\ell^{\diamond}$)
     \STATE $S_j \gets \min\big(S_{j,T_\ell, A},\, S_{j, T_\ell, B})\big)$
    \ELSE
     \STATE \textcolor{blue}{Optimal solutions for $h$ and $l$ problems are trees} 
      \STATE $S_j \gets \textsc{Combine}(j, S_h, S_\ell)$ 
    \ENDIF
  \ENDIF

  \IF{$S_j \neq \emptyset$}
    \STATE $S^* \gets \min(S^*, S_j)$;\quad $S^{\diamond} \gets \min(S^{\diamond}, S_j)$
  \ENDIF
\ENDFOR

\STATE \textbf{return} Best solution $S^*$ and best non-leaf solution $S^{\diamond}$
\end{algorithmic}
\end{algorithm}
\begin{algorithm}[H]
\caption{\textsc{Combine}}
\label{alg:combine}
\small
\begin{algorithmic}[1]
\REQUIRE Split feature $j$, left solution $S_a = (\mathcal{L}_a, T_a, p^+_a)$, right solution $S_b = (\mathcal{L}_b, T_b, p^+_b)$, branching cost $\mu$
\STATE $\mathcal{L} \gets \mathcal{L}_a + \mathcal{L}_b + \mu \cdot \mathbf{1}[T_a, T_b \text{ both internal}]$
\STATE $T \gets \textsc{Node}(j, T_a, T_b)$ \COMMENT{\textcolor{blue}{Split on $j$}}
\STATE $p^+ \gets \max(p^+_a, p^+_b)$ \COMMENT{\textcolor{blue}{Store the highest leaf probability associated with this new tree}}
\STATE \textbf{return} Combined sol $S = (\mathcal{L}, T, p^+)$
\end{algorithmic}
\end{algorithm}

\begin{algorithm}[H]
\caption{\gravitree}
\label{alg:opt_falling_rset}
\begin{algorithmic}[1]
\REQUIRE Feature matrix $X$, labels $y$, $I$ row indices of $X$,  depth budget $d$, branching cost $\mu$, feature set $F$, min support $\alpha$, global dataset size $n$, 
current path $P$, max positive-class proportion (seen on the most recent leaf adjacent to $P$) $p^+_{\text{recent}}$, Rashomon bound $B = (1+\eps)\mathcal{L^*}$
\vspace{0.01mm}
\STATE $\mathcal{L}_{\leaf}(I) \gets \frac{\#\text{minority class points in }I}{n}$
\STATE \textbf{if} $(\mathcal{L}_{\leaf}(I) \le B)$ \textbf{then} $\mathcal{R} \gets \textsc{Leaf}(\mathcal{L}_{\leaf}(I))$ \textbf{else} $\mathcal{R} \gets \emptyset$
\IF{$d=0$}
\STATE \textbf{return} $\mathcal{R}$ \COMMENT{\textcolor{blue}{Base case}} 
\ENDIF
\STATE $C \gets \emptyset$
 \STATE Build candidate feature set $C$ (prune feaures $j \in F$ based on min-support, both left and right positive class probabilities are $\le p^+_{\text{recent}}$) \COMMENT{\textcolor{blue}{Same as Algorithm \ref{alg:opt_falling_tree}}}
\FOR{each $(j,...) \in C$}
 \STATE $I^j_{\lt}, I^j_{\rt} = $ split $I$ into left and right children
 \STATE $(S_{\lt}, .)\gets \textsc{OptFallingTree}(.., I^j_{\lt}, p^+_{\text{recent}})$
  \STATE $(S_{\rt}, .)\gets \textsc{OptFallingTree}(.., I^j_{\rt}, p^+_{\text{recent}})$
  \STATE $(L_{\lt}, T_{\lt}, .) \gets S_{\lt}$ 
  \STATE $(L_{\rt}, T_{\rt}, .) \gets S_{\rt}$
  \IF{\\$(L_{\lt}+ L_{\rt}+\mu \cdot \mathbf{1}[T_{\lt}, T_{\rt} \text{ are internal nodes}]) > B$} \STATE \textbf{continue} \COMMENT{\textcolor{blue}{Prune if lower bounds exceed budget}} \ENDIF
  \STATE Let $h, \ell$ be the children of split $j$ with higher and lower $p^+$ respectively
  \STATE $(I_h, P_h, L_h) \leftarrow$ indices, path, loss of child $h$
  \STATE $(I_\ell, P_\ell, L_\ell) \leftarrow$ indices, path, loss of child $\ell$
  \STATE $\mathcal{R}_{h} \gets \gravitree (\dots, I_{h}, B-L_{\ell}, P_{h}, \dots, p^+_{\text{recent}})$ \COMMENT{\textcolor{blue}{Initial recursion on the higher probability leaf}}
  \STATE $p^j_{\text{temp}} = p_{\text{recent}}$
  \IF{$\mathcal{R}_{\rm h}$ only contains a single leaf}
    \STATE $\text{Leaf}_{\mathcal{R}_{\rm h}} \gets$ leaf in $\mathcal{R}_{h}$
    \STATE \textbf{if} $p^+(\text{Leaf}_{\mathcal{R}_{\rm h}}) \geq p^+_{\text{recent}}$ \textbf{then \ continue} 
  \STATE \textbf{else} $p^j_{\text{temp}}= p^+(\text{Leaf}_{\mathcal{R}_{\rm h}})$  \COMMENT{\textcolor{blue}{Tighten falling constraint}}
  \ENDIF
  \STATE $\mathcal{R}_{\ell} \gets \gravitree (\dots, I_{\rm \ell}, B-L_{\rm h}, P_{\rm \ell}, \dots, p^j_{\text{temp}})$ \COMMENT{\textcolor{blue}{Recurse lower probability leaf on tighter falling constraint}}
\STATE $\mathcal{R} \gets \mathcal{R} \cup 
  \textsc{MergeAndFilterRset}(\mathcal{R}_l, \mathcal{R}_h, j, B, \mu)$
\ENDFOR
\STATE \textbf{return} $\textsc{Sort}(\mathcal{R})$
\end{algorithmic}
\end{algorithm}
Global pruning is achieved by propagating constraints across sibling nodes. We first explore the subproblem with the higher positive class proportion because it is more likely to be solved quickly and result in a leaf. This value becomes a strict upper bound ($p^+_{\text{tight}}$) for the low-probability sibling ($I_{\rm low}$). Any tree in the $I_{\rm low}$ subspace that contains a leaf with probability greater than $p^+_{\text{tight}}$ would violate the global falling order and can be pruned without full evaluation. We prove this constraint in Theorem \ref{thm:fallback_condition} below:

\newcommand{\FallingCheck}{
    Let $T$ be a decision tree. 
    Let $v \in N(T)$ be an internal node of $T$ with children $v_1$ and $v_2$ such that $v_1$ is a leaf and $v_2$ is an internal node.
    Note $v_1$ and $v_2$ can be either the left or right child of $v$.
    Denote $p_1^+$ to be the empirical positive proportion of $v_1$, and
    let $p^+_{\max}(v_2)$ be the maximum positive proportion of any leaf in the subtree rooted at $v_2$.
    
    $T$ satisfies the falling constraint if and only if, for all nodes $v \in N(T)$ where one child is a leaf and the other is an internal node, $p^+_1 \geq p^+_{\max}(v_2)$.
}
\begin{theorem}[Necessary and sufficient condition for falling trees]
\label{thm:fallback_condition}
\FallingCheck
\end{theorem}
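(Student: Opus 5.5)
We prove both directions by unpacking the definition of adjacent leaves. The key structural observation is this. Fix a root-to-leaf path $P$ with terminal leaf $t$. The leaves in $L(P,T)$, other than $t$ itself, are exactly the leaf children $v_1$ of internal nodes $v \in P$ whose other child $v_2$ lies on $P$. The case where $v_1$ is the sibling of $t$ is excluded by definition. Hence every non-terminal adjacent leaf sits at a node $v$ with one leaf child and one internal child, and the path continues into the internal child $v_2$. Moreover, each internal node of $P$ contributes at most one adjacent leaf, so the leaves of $L(P,T)$ have distinct depths and are totally ordered by depth. This makes ``monotonically decreasing in depth'' a chain condition along $P$.

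\emph{Sufficiency.} Assume $p_1^+ \ge p^+_{\max}(v_2)$ at every mixed node $v$. Take any path $P$ and two leaves $a, b \in L(P,T)$ with $\mathrm{depth}(a) < \mathrm{depth}(b)$. Leaf $a$ is not terminal, so it is the leaf child of some $v \in P$ whose internal child $v_2$ is on $P$. Every deeper node of $P$ lies in the subtree rooted at $v_2$, so any adjacent leaf deeper than $a$ is a leaf of that subtree. In particular $b$ is, whether it is another adjacent leaf child or the terminal leaf. Thus $p^+(b) \le p^+_{\max}(v_2) \le p^+(a)$, and the falling condition holds along $P$.

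\emph{Necessity.} Assume $T$ is falling and let $v$ be a mixed node with leaf child $v_1$ and internal child $v_2$. Choose a leaf $m$ in the subtree of $v_2$ attaining $p^+_{\max}(v_2)$, and let $P$ be the root-to-$m$ path. This path passes through $v$ and $v_2$. Since $v_2 \ne m$ is internal, $v_1$ is not the sibling of the terminal leaf $m$, so $v_1 \in L(P,T)$. Also $m \in L(P,T)$, and $\mathrm{depth}(m) > \mathrm{depth}(v_1)$. The falling constraint then gives $p_1^+ \ge p^+(m) = p^+_{\max}(v_2)$.

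The main subtlety is the characterization step: confirming that all non-terminal adjacent leaves come from mixed nodes with the path continuing through the internal child. Two cases need care. If both children of $v$ are leaves, $v$ is the parent of the terminal leaf, and the sibling is excluded by definition. If both children are internal, $v$ contributes no leaf. We also read ``monotonically decreasing'' as non-increasing, so that equality $p_1^+ = p^+_{\max}(v_2)$ is permitted, matching the $\ge$ in the statement.
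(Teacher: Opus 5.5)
Your proof is correct and follows the same argument as the paper. For sufficiency, you place the deeper adjacent leaf inside the subtree of the shallower leaf's internal sibling; for necessity, you take the root-to-leaf path to the maximizing leaf below $v_2$. You argue directly rather than by contradiction, and you justify the structural facts the paper only asserts: an earlier adjacent leaf must have an internal sibling on the path, adjacent leaves have distinct depths, and $v_1$ is adjacent to the path ending at the maximizer.
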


\newcommand{\PruningFeatures}{
Suppose that we run Algorithm \ref{alg:opt_falling_tree}, and we are solving a subproblem
corresponding to path $P$ consisting of a sequence of feature splits
$P = \{(j_1, \gamma_1), (j_2, \gamma_2), \ldots, (j_P, \gamma_P)\}$,
where $1 \le j_k \le K$ and $\gamma_k \in \{0,1\}$ indicates that $P$ splits on the rule $j_k = \gamma_k$.
Let $p^+_{\text{recent}}$ denote the empirical positive proportion of the deepest leaf adjacent to $P$ (this leaf is the optimal solution to its corresponding subproblem).
Let $j \in \{1, \ldots, K\}$ be a candidate split feature, and let $p^+_L, p^+_R$ denote the empirical positive proportions of the left ($j=0$) and right ($j=1$) children created by splitting on $j$.
Then, if $\max(p^+_L, p^+_R) > p^+_{\text{recent}}$, any tree $T_P$ grown from $P$ via a split on $j$ will violate the falling constraint, and $j$ may be eliminated as a candidate.
}
\begin{theorem}[Falling trees allow for enhanced pruning]
\label{thm:path_pruning}
\PruningFeatures
\end{theorem}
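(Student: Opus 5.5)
Let $T_P$ be any tree grown from $P$ by splitting on $j$ at the current node $u$, with children $u_L$ (data where $j=0$) and $u_R$ (data where $j=1$). The argument has one key observation: the leaves of the subtree rooted at a child partition that child's data, so the child's empirical positive proportion is a support-weighted average of its leaf proportions. Hence some leaf reaches at least that average. I would then use this leaf, together with the leaf that defines $p^+_{\text{recent}}$, to exhibit a root-to-leaf path along which adjacent leaves fail to decrease.

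\textbf{Step 1 (averaging).} Without loss of generality let $p^+_R = \max(p^+_L,p^+_R) > p^+_{\text{recent}}$. Let $L_R$ be the set of leaves of $T_P$ in the subtree rooted at $u_R$. These leaves partition the points reaching $u_R$, so
\[
p^+_R=\sum_{l\in L_R}\frac{|l|}{|u_R|}\,p^+(l),
\]
with nonnegative weights summing to one. Therefore some leaf $l^\star\in L_R$ satisfies $p^+(l^\star)\ge p^+_R>p^+_{\text{recent}}$. If $u_R$ is itself a leaf, then $l^\star=u_R$.

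\textbf{Step 2 (adjacency).} Let $l_r$ be the deepest leaf adjacent to $P$, so $p^+(l_r)=p^+_{\text{recent}}$. By construction $l_r$ is the child of some node on $P$ strictly above $u$, and it is not on the branch leading to $u$. Take $Q$ to be the root-to-leaf path that follows $P$ down to $u$ and then descends to $l^\star$. Then $l_r$ is a child of a node on $Q$. It is not a sibling of the terminal leaf $l^\star$, because $l^\star$ lies at or below $u_R$, strictly deeper than $l_r$'s parent's children. So $l_r\in L(Q,T_P)$, and likewise $l^\star\in L(Q,T_P)$ since it is on $Q$. Moreover $\mathrm{depth}(l_r)<\mathrm{depth}(l^\star)$.

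\textbf{Step 3 (conclusion).} Along $Q$ we have the shallower leaf $l_r$ and the deeper leaf $l^\star$ with $p^+(l^\star)>p^+(l_r)$. This violates the monotone decrease required by the Falling Constraint for Trees, so $F(T_P,D)=0$. Since $T_P$ was arbitrary, $j$ can be eliminated. Equivalently, one can apply Theorem~\ref{thm:fallback_condition} at the ancestor $v$ of $u$ whose leaf child is $l_r$: the other child's subtree contains $l^\star$, so $p^+_{\max}>p^+_1$.

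\textbf{Expected obstacle.} The delicate step is Step 2, which has to handle the degenerate cases carefully:
\begin{itemize}
\item If $P$ has no adjacent leaf yet, the convention is $p^+_{\text{recent}}=1$, and the hypothesis $\max(p^+_L,p^+_R)>1$ is vacuous.
\item One must check that the sibling-of-terminal exclusion in the definition of adjacency never removes $l_r$. This holds because $l_r$'s parent lies strictly above $u$ on $Q$, so $l_r$ and $l^\star$ cannot be siblings.
\end{itemize}
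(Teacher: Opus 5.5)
Your proof is correct and follows the same approach as the paper: the child's positive proportion is a support-weighted average of its leaves' proportions, so some leaf in that child's subtree exceeds $p^+_{\text{recent}}$, and the leaf that defines $p^+_{\text{recent}}$ is adjacent to every root-to-leaf path through that child. The paper argues by contradiction (if the tree were falling, every leaf below would satisfy $p^+ \le p^+_{\text{recent}}$, and hence so would their average), whereas you argue directly by exhibiting the violating path $Q$; your explicit check that the sibling-of-terminal exclusion never removes $l_r$ fills in a step the paper only asserts.
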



\paragraph{Recursive filtering and merging process.}
An important aspect of our algorithm is the recursive filtration and merging process. The Rashomon set output during a recursive call is sorted by objective, enabling early pruning when cross products of the left and right children are taken (Algorithm \ref{alg:merge_rset}). More precisely, a Rashomon set $\mathcal{R}_s$ corresponding to subproblem $s$ is stored as a min-heap (list) of tuples $[(T_1,\mathcal{L}(T_1,D), \text{metadata}_{T_1}), ....]$ containing trees and their metadata, with the heap created with respect to the objective function. For any tree $T \in \mathcal{R}_s$, the metadata we store includes the tree objective and the maximum positive leaf probability over all of its existing leaves $\max_{l \in L(T)} p^+(l)$. During the merge step, for every tree in $s$'s left child's Rashomon set, $T_L \in \mathcal{R}_L$, we iterate through trees within $s$'s right child's Rashomon set, $T_R \in \mathcal{R}_R$. Since $\mathcal{R}_R$ is sorted by loss, as soon as the combined loss $\mathcal{L}(T_L) + \mathcal{L}(T_R)$ exceeds the budget $B$, we can terminate the inner loop immediately. This transforms the potentially quadratic complexity of the merge operation into a log-linear time operation.

\paragraph{Caching subproblems.}
Since the same subset of data indices $D_{\textrm{sub}}$ may be reached via different splits in the search graph (permutation of features), we memoize to avoid redundant computations. Regular tree algorithms such as those of \citet{gosdt_guesses,split} maintain a hash map where the key is defined by the bitmask of sample indices $D_{\textrm{sub}}$ and the current depth $d$. However, for the falling tree problem, caching becomes more challenging as the state must strictly include the incoming constraint $p^+_{\text{recent}}$, as the valid set of trees for a subproblem depends on the permissible probability ceiling. To still leverage the benefits of caching in this setting, we cache the optimal solution associated with the tuple $\big(\text{subproblem } D_{\textrm{sub}}, d, \lceil \frac{p^+_{\text{recent}}}{q} \rceil\big)$, where $q \in [0,1]$ is a quantization parameter that allows for some slack in $p^+_{\text{recent}}$ and enables caching of at least a subset of the $d!$ paths that reach a subproblem. This approach makes the algorithm faster with minimal changes to the optimal objective and the Rashomon set size. Note that if we retrieve cached solutions during the search process, we still check them for falling violations given the local value of $p^+_{\text{recent}}$ in the current call, so all output trees are guaranteed to obey the falling constraint.



\section{Experiments}\label{sec:experiments}

\begin{figure*}[t]
    \centering
    \begin{subfigure}[t]{.25\textwidth}
        \centering
        \includegraphics[width=\linewidth]{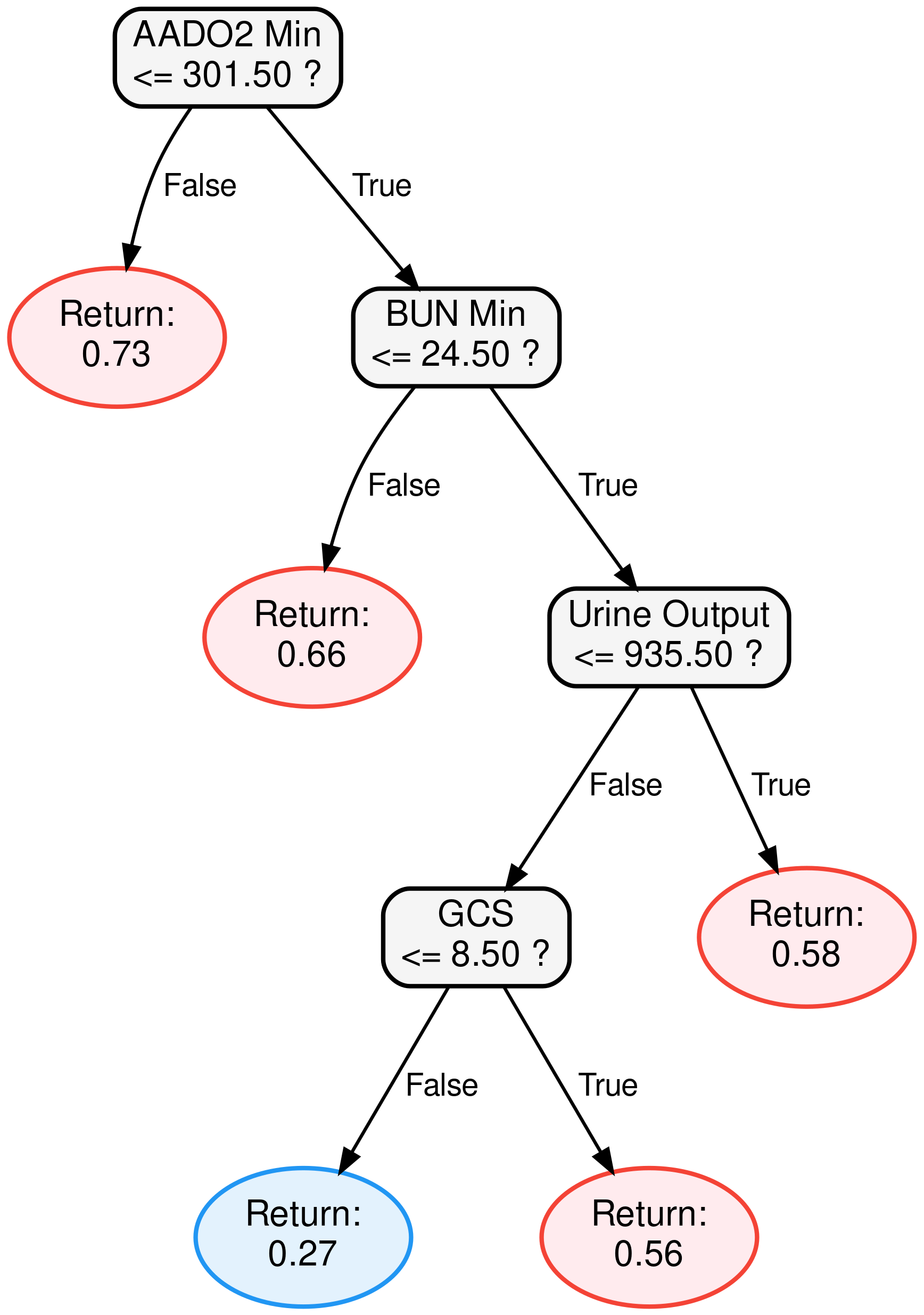}
        \label{fig:nci_rule_list}
    \end{subfigure}
    \begin{subfigure}[b]{.74\linewidth}
        \centering
        \includegraphics[width=\linewidth]{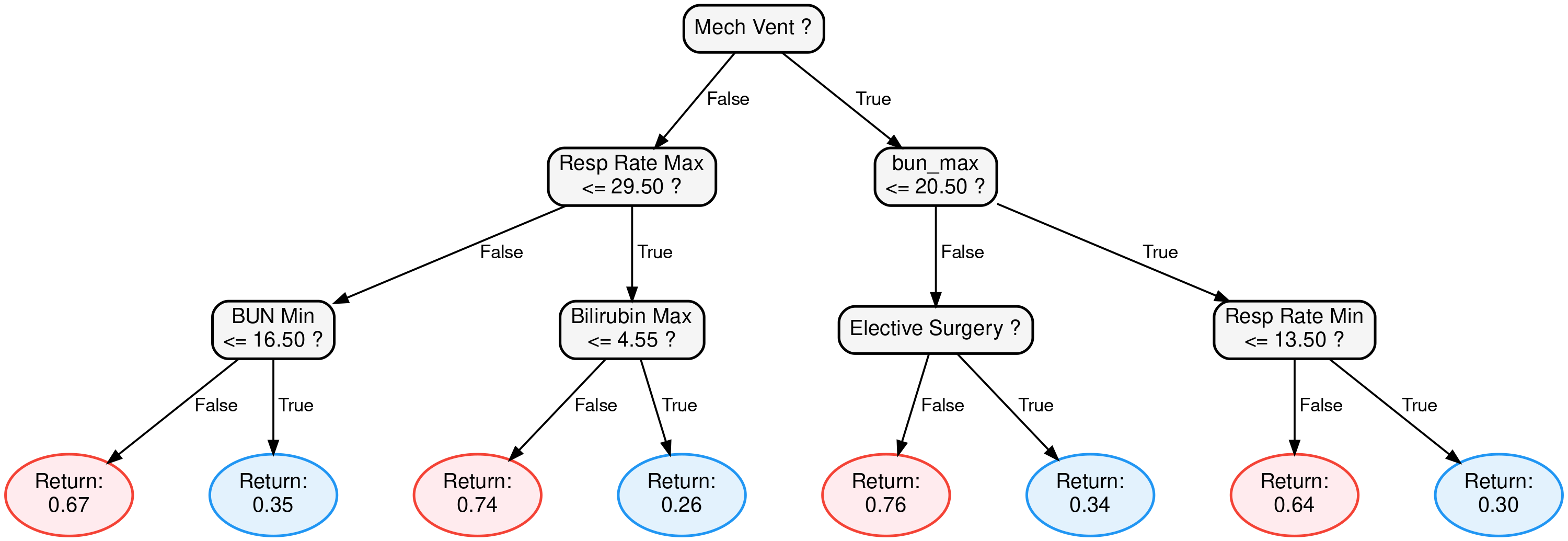}
        \label{fig:nci_complete_tree}
    \end{subfigure}
    \caption{Two example trees found on the MIMIC dataset. The rule list is in the Rashomon set with a depth of 4, a Rashomon parameter of 0.05, branching cost 0.01, and minimum support 0.04. The fully grown tree was found by setting the depth to 3, with no branching cost or min support to allow the tree to be fully grown.}
    \label{fig:mimic-nci}
\end{figure*}

We now show three sets of experiments that evaluate {\gravitree}'s ability to achieve the desired tradeoffs between the global loss, performance on positive examples, and the expected decision sparsity on high-risk examples.
The first set of experiments shows that we can effectively interpolate between wide decision trees, `\textit{rule-listy}' decision trees, and rule lists.
In these experiments, we examine the runtime, the `\textit{rule-listy-ness}' of models, and the size of the Rashomon set for different branching penalties $\mu$. As an illustration, \autoref{fig:mimic-nci} shows two decision trees, one with a normalized Colless index of 1 and another with an NCI of 0.

\begin{table*}[t]
\centering
\scriptsize
\setlength{\tabcolsep}{2pt}
\begin{tabular}{l|cccc|cccc}
\toprule
Dataset & \multicolumn{4}{c}{\gravitree $\mu=0.01$ (trees)} & \multicolumn{4}{c}{\gravitree $\mu=1$ (rule lists)} \\
 & RSet Size & Test Loss $\downarrow$ & $\beta(T,D+) \downarrow$ & FNR $\downarrow$ & RSet Size & Test Loss $\downarrow$ & $\beta(T,D+) \downarrow$ & FNR $\downarrow$\\
\midrule
NIJ Recidivism & $227.4 \pm 30.0$ & \underline{$0.305 \pm 0.003$} & $2.79 \pm 0.04$ & {\boldmath $0.126 \pm 0.010$} & $16.6 \pm 5.5$ & $0.311 \pm 0.007$ & {\boldmath $2.47 \pm 0.16$} & $0.183 \pm 0.018$ \\
Adult & $1.0 \pm 0.0$ & {\boldmath $0.204 \pm 0.003$} & $3.39 \pm 0.05$ & $0.206 \pm 0.008$ & $6.6 \pm 2.1$ & $0.290 \pm 0.002$ & {\boldmath $2.39 \pm 0.28$} & {\boldmath $0.089 \pm 0.011$} \\
Aging & $57.0 \pm 90.4$ & \underline{$0.385 \pm 0.020$} & $3.83 \pm 0.11$ & $0.379 \pm 0.043$ & $13.0 \pm 12.1$ & $0.402 \pm 0.056$ & {\boldmath $3.18 \pm 0.51$} & \underline{$0.368 \pm 0.060$} \\
Bar & $8.8 \pm 2.9$ & \underline{$0.299 \pm 0.016$} & $2.82 \pm 0.07$ & {\boldmath $0.414 \pm 0.036$} & $3.4 \pm 3.1$ & $0.317 \pm 0.019$ & {\boldmath $1.99 \pm 0.11$} & $0.567 \pm 0.034$ \\
Bar7 & $4.6 \pm 2.5$ & \underline{$0.309 \pm 0.016$} & $2.31 \pm 0.41$ & \underline{$0.523 \pm 0.067$} & $2.0 \pm 0.0$ & $0.311 \pm 0.014$ & \underline{$2.02 \pm 0.12$} & $0.556 \pm 0.037$ \\
BCW & $1.0 \pm 0.0$ & \underline{$0.070 \pm 0.014$} & $2.96 \pm 0.31$ & {\boldmath $0.075 \pm 0.019$} & $2.0 \pm 0.0$ & $0.081 \pm 0.019$ & \underline{$2.92 \pm 0.13$} & $0.221 \pm 0.056$ \\
Car Evaluation & $1.0 \pm 0.0$ & {\boldmath $0.141 \pm 0.017$} & $2.09 \pm 0.20$ & $0.000 \pm 0.000$ & $3.0 \pm 1.0$ & $0.376 \pm 0.018$ & {\boldmath $1.07 \pm 0.06$} & $0.000 \pm 0.000$ \\
Carryout Takeaway & $12.4 \pm 9.4$ & $0.371 \pm 0.008$ & $3.12 \pm 0.23$ & $0.333 \pm 0.004$ & $5.0 \pm 4.5$ & \underline{$0.342 \pm 0.034$} & {\boldmath $2.13 \pm 0.29$} & {\boldmath $0.258 \pm 0.037$} \\
Coffee House & $77.2 \pm 99.7$ & {\boldmath $0.300 \pm 0.010$} & $3.39 \pm 0.09$ & {\boldmath $0.253 \pm 0.011$} & $2.6 \pm 3.0$ & $0.341 \pm 0.020$ & {\boldmath $2.54 \pm 0.15$} & $0.354 \pm 0.022$ \\
COMPAS & $1248.4 \pm 962.5$ & \underline{$0.331 \pm 0.013$} & \underline{$2.98 \pm 0.05$} & $0.379 \pm 0.017$ & $154.6 \pm 55.7$ & $0.349 \pm 0.013$ & $3.08 \pm 0.08$ & \underline{$0.360 \pm 0.014$} \\
Heart & $1.0 \pm 0.0$ & {\boldmath $0.190 \pm 0.028$} & $3.49 \pm 0.20$ & \underline{$0.186 \pm 0.030$} & $23.4 \pm 26.4$ & $0.272 \pm 0.030$ & {\boldmath $1.85 \pm 0.19$} & $0.263 \pm 0.073$ \\
FICO & $538 \pm 383$ & {\boldmath $0.292 \pm 0.008$} & $2.73 \pm 0.22$ & $0.161 \pm 0.013$ & $12.0 \pm 20.0$ & $0.312 \pm 0.007$ & {\boldmath $2.15 \pm 0.52$} & {\boldmath $0.103 \pm 0.015$} \\
MGH & $2.6 \pm 0.5$ & {\boldmath $0.061 \pm 0.006$} & $2.01 \pm 0.01$ & \underline{$0.025 \pm 0.010$} & $2.4 \pm 0.9$ & $0.232 \pm 0.013$ & {\boldmath $1.22 \pm 0.31$} & $0.028 \pm 0.012$ \\
Monks1 & $12.0 \pm 6.9$ & \underline{$0.178 \pm 0.123$} & $2.81 \pm 0.57$ & {\boldmath $0.151 \pm 0.150$} & $2.2 \pm 0.4$ & $0.288 \pm 0.100$ & {\boldmath $1.27 \pm 0.22$} & $0.423 \pm 0.129$ \\
Monks2 & $1.0 \pm 0.0$ & \underline{$0.441 \pm 0.095$} & $3.55 \pm 0.31$ & {\boldmath $0.615 \pm 0.054$} & $2.2 \pm 0.4$ & $0.418 \pm 0.053$ & {\boldmath $0.47 \pm 0.51$} & $0.938 \pm 0.138$ \\
Monks3 & $1.0 \pm 0.0$ & {\boldmath $0.096 \pm 0.100$} & $3.52 \pm 0.32$ & {\boldmath $0.150 \pm 0.137$} & $27.0 \pm 7.2$ & $0.351 \pm 0.026$ & {\boldmath $1.65 \pm 0.06$} & $0.346 \pm 0.070$ \\
Mushroom & $1.0 \pm 0.0$ & {\boldmath $0.030 \pm 0.002$} & $2.51 \pm 0.02$ & $0.034 \pm 0.006$ & $3.0 \pm 0.0$ & $0.053 \pm 0.003$ & {\boldmath $1.43 \pm 0.02$} & \underline{$0.030 \pm 0.003$} \\
Restaurant 20 & $3.4 \pm 1.5$ & $0.298 \pm 0.012$ & $1.65 \pm 0.23$ & $0.250 \pm 0.014$ & $2.8 \pm 0.4$ & $0.298 \pm 0.012$ & \underline{$1.55 \pm 0.10$} & \underline{$0.249 \pm 0.012$} \\
Spambase & $1.0 \pm 0.0$ & {\boldmath $0.105 \pm 0.013$} & $3.17 \pm 0.32$ & $0.189 \pm 0.031$ & $12.8 \pm 6.7$ & $0.126 \pm 0.007$ & {\boldmath $2.43 \pm 0.06$} & \underline{$0.178 \pm 0.027$} \\
Student & $1.0 \pm 0.0$ & \underline{$0.192 \pm 0.051$} & $3.75 \pm 0.29$ & $0.182 \pm 0.080$ & $2.8 \pm 1.3$ & $0.224 \pm 0.060$ & {\boldmath $1.38 \pm 0.55$} & \underline{$0.120 \pm 0.088$} \\
Wine & $349 \pm 292$ & {\boldmath $0.294 \pm 0.022$} & $3.06 \pm 0.15$ & $0.184 \pm 0.017$ & $19.4 \pm 20.2$ & $0.321 \pm 0.016$ & {\boldmath $2.73 \pm 0.21$} & {\boldmath $0.160 \pm 0.025$} \\
\bottomrule
\end{tabular}
\caption{Comparison between \gravitree with $\mu = 0.01$ and $\mu=1$. We set the min support $\alpha = 0.05$ for both configurations. The latter penalty forces all models in the Rashomon set to be falling rule lists. \textbf{Bold} indicates a statistically significant difference (Welch's $t$-test, $p < 0.05$); \underline{underlined} values indicate the method is numerically better but not statistically significantly so. While \gravitree with $\mu = 0.01$ consistently has better test loss than its rule list counterpart, the latter generally makes more sparse decisions on the positive class. Performance on false negative rate (FNR) is mixed, with neither configuration uniformly dominating the other.
\vspace{-4mm}}
\label{tab:frl_vs_flexible_gravitree}
\end{table*}


The second set of experiments compares \gravitree to \text{TreeFARMS} \cite{treefarms}.
\text{TreeFARMS} is an algorithm for finding the Rashomon set of sparsity-regularized decision trees (where sparsity is defined as the number of leaves).
This model class has been established to be (a) competitive in performance with more complicated models \cite{gosdt_guesses, arslan2025sorted, chaoukibranches}, and (b) a conveniently interpretable model class \cite{timbertrek}.
We show that, despite a small tradeoff to overall loss, we typically achieve better performance and decision sparsity on the positive class.

The third set of experiments compares \gravitree with a mild branching cost to \gravitree with a branching cost of $1.0$, which forces the algorithm to find the Rashomon set of falling rule lists.
We show that falling trees achieve better overall loss, comparable performance on positive examples, and slightly worse decision sparsity compared to falling rule lists.

\begin{table*}[ht]
    \centering   
    \footnotesize
    \setlength{\tabcolsep}{5pt}
    \begin{tabular}{l|cc|cc|cc}
        \toprule
        & \multicolumn{2}{c|}{\textbf{Test Loss ($\downarrow$)}} 
        & \multicolumn{2}{c|}{\textbf{FNR ($\downarrow$)}} 
        & \multicolumn{2}{c}{\textbf{Pos Sparsity $\beta(T,D^+)$ ($\downarrow$)}} \\
        \textbf{Dataset} 
        & \textbf{\gravitree} & \textbf{TreeFARMS}
        & \textbf{\gravitree} & \textbf{TreeFARMS}
        & \textbf{\gravitree} & \textbf{TreeFARMS} \\
        \midrule
        NIJ Recidivism & 0.305 $\pm$ 0.003 & \underline{0.304 $\pm$ 0.002} & \textbf{0.126 $\pm$ 0.010} & 0.154 $\pm$ 0.006 & 2.79 $\pm$ 0.04 & \textbf{2.27 $\pm$ 0.16} \\
        Adult & \textbf{0.204 $\pm$ 0.003} & 0.227 $\pm$ 0.014 & 0.206 $\pm$ 0.008 & \textbf{0.177 $\pm$ 0.023} & 3.39 $\pm$ 0.05 & \textbf{2.94 $\pm$ 0.09} \\
        Aging & 0.385 $\pm$ 0.020 & \textbf{0.349 $\pm$ 0.026} & 0.379 $\pm$ 0.043 & \underline{0.349 $\pm$ 0.037} & \textbf{3.83 $\pm$ 0.11} & 4.25 $\pm$ 0.19 \\
        Bar7 & 0.309 $\pm$ 0.016 & \underline{0.291 $\pm$ 0.008} & 0.523 $\pm$ 0.067 & \underline{0.492 $\pm$ 0.047} & \textbf{2.31 $\pm$ 0.41} & 3.18 $\pm$ 0.26 \\
        Bar & 0.299 $\pm$ 0.016 & \underline{0.281 $\pm$ 0.008} & 0.414 $\pm$ 0.036 & \underline{0.390 $\pm$ 0.019} & \textbf{2.82 $\pm$ 0.07} & 3.66 $\pm$ 0.03 \\
        BCW Bin & 0.070 $\pm$ 0.014 & \underline{0.069 $\pm$ 0.008} & \underline{0.075 $\pm$ 0.019} & 0.076 $\pm$ 0.009 & \underline{2.96 $\pm$ 0.31} & 3.07 $\pm$ 0.24 \\
        Bike & 0.175 $\pm$ 0.006 & \textbf{0.148 $\pm$ 0.006} & \textbf{0.084 $\pm$ 0.015} & 0.095 $\pm$ 0.009 & \textbf{3.01 $\pm$ 0.14} & 3.38 $\pm$ 0.09 \\
        Car Evaluation & 0.141 $\pm$ 0.017 & 0.141 $\pm$ 0.017 & 0.000 $\pm$ 0.000 & 0.000 $\pm$ 0.000 & 2.09 $\pm$ 0.20 & \underline{2.00 $\pm$ 0.00} \\
        Carryout Takeaway & \textbf{0.371 $\pm$ 0.008} & 0.395 $\pm$ 0.018 & \textbf{0.333 $\pm$ 0.004} & 0.390 $\pm$ 0.028 & \underline{3.12 $\pm$ 0.23} & 3.41 $\pm$ 0.18 \\
        Coffee House & 0.300 $\pm$ 0.010 & \underline{0.295 $\pm$ 0.009} & 0.253 $\pm$ 0.011 & \underline{0.248 $\pm$ 0.013} & \underline{3.39 $\pm$ 0.09} & 3.47 $\pm$ 0.06 \\
        Compas & 0.331 $\pm$ 0.013 & \underline{0.330 $\pm$ 0.014} & \underline{0.379 $\pm$ 0.017} & 0.388 $\pm$ 0.017 & 2.98 $\pm$ 0.05 & \textbf{2.70 $\pm$ 0.17} \\
        FICO & 0.296 $\pm$ 0.011 & 0.296 $\pm$ 0.010 & \underline{0.351 $\pm$ 0.032} & 0.353 $\pm$ 0.019 & 2.92 $\pm$ 0.25 & \textbf{2.12 $\pm$ 0.09} \\
        Heart & 0.223 $\pm$ 0.033 & \textbf{0.203 $\pm$ 0.032} & \textbf{0.198 $\pm$ 0.031} & 0.237 $\pm$ 0.039 & \textbf{2.70 $\pm$ 0.35} & 3.26 $\pm$ 0.31 \\
        MGH & 0.061 $\pm$ 0.006 & \textbf{0.031 $\pm$ 0.007} & \underline{0.025 $\pm$ 0.010} & 0.026 $\pm$ 0.010 & \textbf{2.01 $\pm$ 0.01} & 3.05 $\pm$ 0.04 \\
        Monks1 & 0.178 $\pm$ 0.123 & \underline{0.170 $\pm$ 0.113} & \underline{0.151 $\pm$ 0.150} & 0.160 $\pm$ 0.136 & 2.81 $\pm$ 0.57 & \underline{2.55 $\pm$ 0.41} \\
        Monks2 & \underline{0.441 $\pm$ 0.095} & 0.442 $\pm$ 0.084 & 0.615 $\pm$ 0.054 & \underline{0.577 $\pm$ 0.147} & \textbf{3.55 $\pm$ 0.31} & 4.28 $\pm$ 0.17 \\
        Monks3 & 0.096 $\pm$ 0.100 & \underline{0.091 $\pm$ 0.066} & 0.150 $\pm$ 0.137 & \underline{0.140 $\pm$ 0.113} & \underline{3.52 $\pm$ 0.32} & 3.60 $\pm$ 0.27 \\
        Mushroom & 0.030 $\pm$ 0.002 & \textbf{0.007 $\pm$ 0.002} & 0.034 $\pm$ 0.006 & \textbf{0.014 $\pm$ 0.004} & \textbf{2.51 $\pm$ 0.02} & 3.63 $\pm$ 0.01 \\
        Restaurant 20 & \textbf{0.298 $\pm$ 0.012} & 0.316 $\pm$ 0.008 & \textbf{0.250 $\pm$ 0.014} & 0.285 $\pm$ 0.014 & \textbf{1.65 $\pm$ 0.23} & 2.31 $\pm$ 0.49 \\
        Spambase & 0.126 $\pm$ 0.006 & \textbf{0.102 $\pm$ 0.004} & 0.181 $\pm$ 0.022 & \textbf{0.154 $\pm$ 0.032} & \textbf{2.43 $\pm$ 0.05} & 3.33 $\pm$ 0.28 \\
        Student & \underline{0.192 $\pm$ 0.051} & 0.201 $\pm$ 0.043 & \underline{0.182 $\pm$ 0.080} & 0.194 $\pm$ 0.055 & \textbf{3.75 $\pm$ 0.29} & 4.40 $\pm$ 0.39 \\
        Wine & 0.291 $\pm$ 0.017 & \textbf{0.277 $\pm$ 0.008} & \textbf{0.169 $\pm$ 0.017} & 0.203 $\pm$ 0.021 & \textbf{2.58 $\pm$ 0.03} & 2.59 $\pm$ 0.16 \\
        \bottomrule
    \end{tabular}
    
    \caption{A comparison between \gravitree and TreeFARMS across many datasets. We set $\mu = 0.01$, $\alpha = 0.05$ for \gravitree, $\lambda = 0.005$ for TreeFARMS, and $\varepsilon = 0.01$ for both methods. \textbf{Bold} indicates a statistically significant improvement (Welch's $t$-test, $p < 0.05$); \underline{underlined} values indicate the method is numerically better but not statistically significantly so. While TreeFARMS generally achieves lower overall loss than \gravitree due to its unconstrained model class, \gravitree is competitive on false negative rates and generally sparser on positive examples.}
    \label{tab:treefarms_vs_gravitree}
\end{table*}




\subsection{Experimental Setup}
We ran all methods on $21$ datasets sourced from the UCI Machine Learning Repository and former Kaggle competitions. Details on the datasets can be found in Appendix \ref{app:dataset_descriptions}. For some experiments, we vary the branching penalty $\mu$ to test its effect on tree structure and performance. For each dataset, we take five $80$-$20$ train-test splits, compute the Rashomon set on the training split, compute the metrics described below on the test set (averaged across all trees in the Rashomon set), and take the mean and standard error over the splits. 

\paragraph{Baselines.}

Our main external baseline is the TreeFARMS algorithm \cite{treefarms} for finding the unconstrained Rashomon set of decision trees (albeit on a different objective). We also show results with other Rashomon set methods, such as \citet{arslan2025sorted} and a modified version of \citet{chaofan} in the appendix.

\paragraph{Metrics.}
We evaluate the properties of the Rashomon set of falling trees using several metrics. Our first metric, \emph{decision sparsity}, measures the expected number of decisions made when predicting on a random example. For a tree $T$ and dataset $D$,
\begin{align*}
    &\beta(T,D) = \mathbb{E}_{\mathbf{x}\sim D}[\operatorname{depth}_T(\mathbf{x})]\\
    &= 1+ \mathbb{P}(\mathbf{x}_f=1)\beta(T_1,D_1)+\mathbb{P}(\mathbf{x}_f=0)\beta(T_0,D_0),
\end{align*}
where $f=\operatorname{feature}(T)$, $T_b$ is the $b$-child of $T$, and $D_b=\{\mathbf{x}\in D:\mathbf{x}_f=b\}$ for $b\in\{0,1\}$. We define $\beta(T,D^+)$ as the sparsity of all examples whose label is positive.

To quantify how `\textit{rule-listy}' a tree is (i.e., its imbalance), we use the \emph{normalized Colless index}. For each internal node $v$ of $T$, let $n_L(v)$ and $n_R(v)$ denote the number of leaves in its left and right subtrees. The (unnormalized) Colless index is $C(T)=\sum_{v\in\mathcal{I}(T)}|n_L(v)-n_R(v)|$, where $\mathcal{I}(T)$ is the set of internal nodes. For a binary tree with $n$ leaves, the maximum value is $C_{\max}(n)=\tfrac{(n-1)(n-2)}{2}$, yielding the normalized Colless index:
\(
\mathrm{NCI}(T)=\frac{C(T)}{C_{\max}(n)}\in[0,1],
\)
where $\mathrm{NCI}(T)=0$ corresponds to a perfectly balanced tree and $\mathrm{NCI}(T)=1$ corresponds to a maximally imbalanced (rule-list-like) tree.
We also report the overall misclassification error and the false negative rate.

\paragraph{\gravitree can structurally interpolate between trees and rule lists.}
Our first experiment varies the branching cost $\mu \in [0,0.005,0.01,0.02,0.03,0.04,0.05,0.075,0.1]$ and analyses the effect of the branching cost on $\mathrm{NCI}(T)$, the Rashomon set size, and the runtime. To ensure a fair starting point, we set the same Rashomon bound ($1+\eps$ times the loss of the optimal tree with $\mu = 0$) across all branching costs.

\begin{figure}[h]
    \centering
    \includegraphics[width=\linewidth]{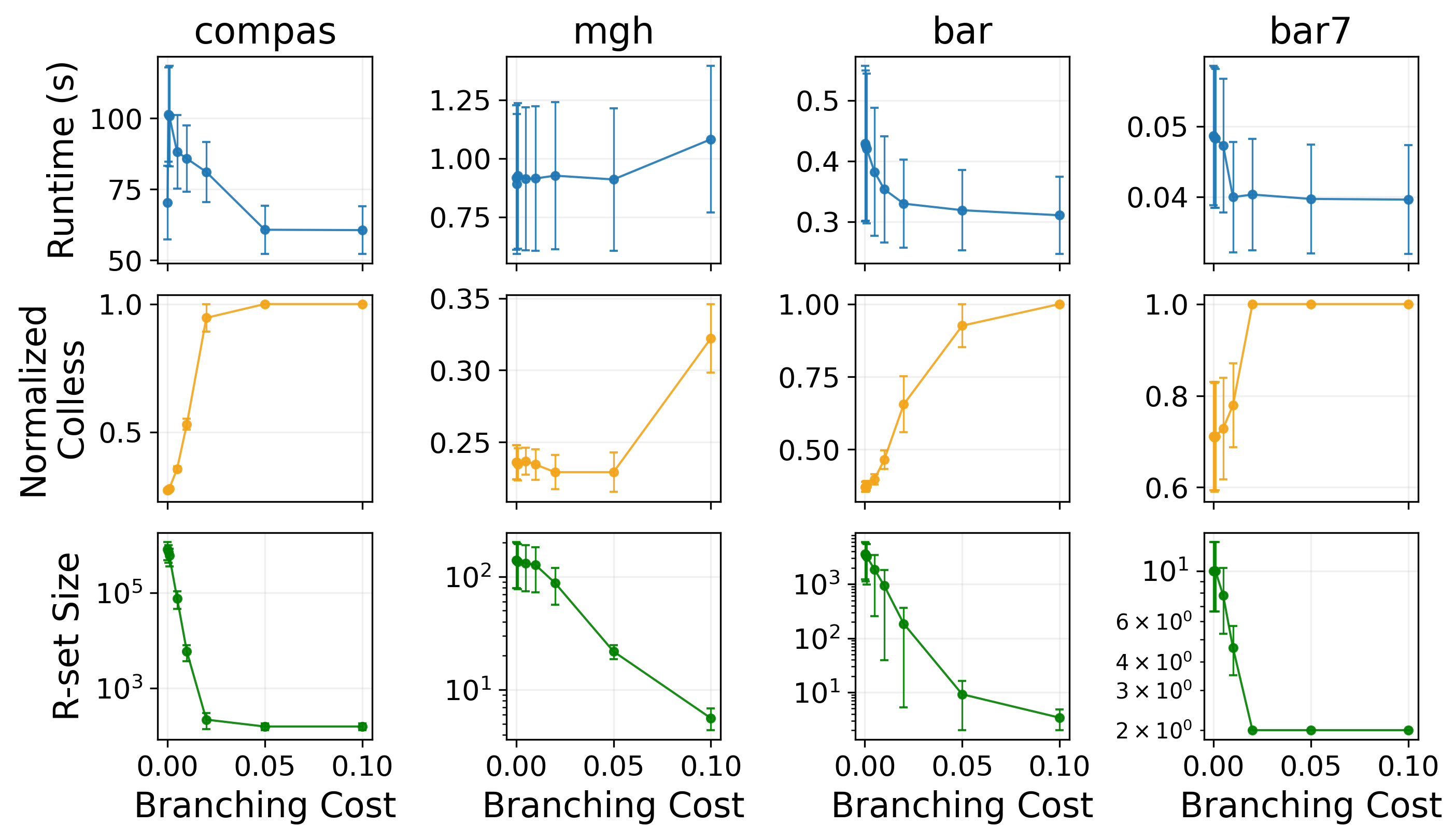}
    \caption{Average runtime, Normalized Colless Index (NCI), and Rashomon set size obtained by \gravitree on four different datasets as the branching cost increases. Of particular interest is the NCI plot, which increases as the branching cost is increased. This is direct evidence of interpolation from rule lists to fully grown trees. Increasing the branching cost for a fixed budget also tends to reduce runtime. }
    \label{fig:branching_cost_interpolation}
\end{figure}

Figure \ref{fig:branching_cost_interpolation} shows that as we increase $\mu$, the average NCI of the Rashomon set approaches $1$, suggesting that all trees become rule lists. Because it operates in a more restricted model class, we also see that the corresponding Rashomon set size and runtime tend to reduce with increasing $\mu$. The $\mu$ parameter is different from the sparsity parameter $\lambda$ often seen in the decision tree literature \cite{gosdt_guesses, streed, split}, and offers the ability to directly control the branching factor.

\paragraph{\gravitree vs Optimal Sparse Decision Trees.}
In this experiment, we compare \gravitree with TreeFARMS \cite{treefarms}, an algorithm for finding the exact Rashomon set of sparse decision trees. TreeFARMS solves a different optimization problem from \gravitree; namely, finding trees that perform within $\varepsilon$ of the minimizer of $\frac{1}{n}\sum_{i=1}^n\mathbf{1}[y_i \neq T(x_i)] + \lambda \times |\text{num leaves}|$. This makes a direct comparison difficult. To make things fair, we choose modest sparsity and branching penalty values for \gravitree, specifically $\mu = 0.01, \alpha = 0.05, d = 5, \varepsilon = 0.01$. For TreeFARMS, we set $\lambda = 0.005$. We provide a more detailed rationale behind these hyperparameter choices in Appendix \ref{app:contemporary_rset_approaches}. From Table \ref{tab:treefarms_vs_gravitree}, we see that while \gravitree has a higher overall loss compared to TreeFARMS, this is generally compensated for by lower decision sparsity and lower loss on the positive examples compared to TreeFARMS.

\paragraph{\gravitree vs Falling Rule Lists.}
We also want to understand how the flexibility of falling trees compares with the interpretability of falling rule lists. We do this by comparing \gravitree with $\mu=0.01$ and $\mu = 1$. Because the latter penalty is high enough to strongly discourage branching, it effectively forces all models to be falling rule lists. From Table \ref{tab:frl_vs_flexible_gravitree}, we observe that the $\mu = 0.01$ configuration generally discovers models with lower test loss and lower false negative rates, reflecting the additional expressive power gained from allowing controlled branching. In contrast, the $\mu = 1$ configuration yields models with smaller Rashomon sets and substantially lower decision sparsity on the positive examples since rule lists consist of a single path with no internal branching. This yields interpretable sequential explanations but at the expense of predictive performance.

\paragraph{Conclusion.}
We introduced the model class of \textit{falling trees}; these are a type of decision tree that is constrained to have descending leaf probabilities on leaves adjacent to any path from root to leaf.
We also introduced a new optimization penalty for decision trees, the \textit{branching cost}, which penalizes the number of branching nodes, rather than the traditional sparsity penalty in tree optimization \cite{streed, murtree, lin2020gosdt}, necessitating the development of \gravitree, a dynamic-programming-with-bounds algorithm. We presented experimental evidence that falling trees have significantly reduced average decision sparsity relative to sparsity-regularized decision trees, and better overall performance than falling rule lists, while maintaining similar sparsity. This model class is well suited to settings where practitioners need to evaluate sparse models by hand in a high risk environment.


\section*{Impact Statement}
This paper presents work on interpretable machine learning, which is essential for many ethical AI applications.

\section*{Acknowledgments}
We acknowledge support from the National Institutes of Health/NIDA grant number R01DA054994. 
This material is based upon work supported by the National Science Foundation Graduate Research Fellowship Program under Grant No. DGE-2139754. Any opinions, findings, and conclusions or recommendations expressed in this material are those of the author(s) and do not necessarily reflect the views of the National Science Foundation.
We acknowledge the support of the Natural Sciences and Engineering Research Council of Canada (NSERC). Nous remercions le Conseil de recherches en sciences naturelles et en génie du Canada (CRSNG) de son soutien.

\bibliography{refs}
\bibliographystyle{icml2026}

\appendix
\onecolumn

\section{Proofs}\label{app:proofs}

\begingroup
\def\thetheorem{\ref{thm:fallback_condition}}
\begin{theorem}
    \FallingCheck
\end{theorem}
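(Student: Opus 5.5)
The plan is to unpack the definition of $L(P,T)$ and show that every pair of adjacent leaves that must be compared along some path corresponds exactly to a node $v$ with a leaf child $v_1$ and an internal child $v_2$, with the deeper leaf lying in the subtree of $v_2$. Both directions then follow by elementary bookkeeping on a root-to-leaf path. I read ``monotonically decreasing'' as non-increasing, to match the non-strict inequality $p_1^+ \ge p^+_{\max}(v_2)$.

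\textbf{Structural lemma.} First I will record a structural fact about a fixed root-to-leaf path $P$ ending at a terminal leaf $t$. Every node $u \in P$ other than $t$ has exactly one child on $P$ and one child off $P$. An adjacent leaf is therefore either $t$ itself or an off-path leaf child of some internal $u \in P$, and the off-path sibling of $t$ is excluded.

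Consequently, $L(P,T)$ contains at most one leaf per depth. At depth $k+1$ the only candidates are the off-path child of the depth-$k$ node of $P$ and, possibly, $t$. If $t$ is at depth $k+1$, the off-path candidate is $t$'s sibling and is excluded. So ordering by depth is well defined, and no two adjacent leaves at equal depth ever need to be compared.

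\textbf{($\Rightarrow$).} Assume $T$ is falling, and let $v$ have a leaf child $v_1$ and an internal child $v_2$. Take any leaf $l$ in the subtree of $v_2$, and let $P$ be the root-to-$l$ path, which passes through $v$ and $v_2$.
\begin{itemize}
\item $v_1$ is a child of $v \in P$, and it is not the sibling of the terminal leaf $l$. Indeed, $l$'s parent lies in the subtree of $v_2$, so it is not $v$.
\item Hence $v_1, l \in L(P,T)$ with $\mathrm{depth}(v_1) < \mathrm{depth}(l)$.
\item Falling therefore gives $p_1^+ \ge p^+(l)$.
\end{itemize}
Maximizing over $l$ yields $p_1^+ \ge p^+_{\max}(v_2)$.

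\textbf{($\Leftarrow$).} Assume the condition, fix $P$, and take $a, b \in L(P,T)$ with $\mathrm{depth}(a) < \mathrm{depth}(b)$.
\begin{itemize}
\item $a$ is not the terminal leaf, since the terminal leaf is the deepest element of $L(P,T)$. By the structural lemma, $a$ is therefore the off-path child of some $v \in P$; let $v_2$ be the on-path child of $v$.
\item $v_2$ must be internal. Otherwise $v_2$ would be the terminal leaf, and $a$, as its sibling, would have been excluded.
\item $b$ lies in the subtree of $v_2$. Either $b$ is the terminal leaf, or $b$ is a child of some $u \in P$ with $\mathrm{depth}(u) \ge \mathrm{depth}(a) = \mathrm{depth}(v)+1$, so $u$ lies on $P$ strictly below $v$, i.e.\ in the subtree of $v_2$.
\end{itemize}
The condition at $v$ then gives $p^+(a) \ge p^+_{\max}(v_2) \ge p^+(b)$, which is exactly the falling constraint along $P$.

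The only delicate step is the careful handling of the ``not a sibling of the terminal leaf'' exclusion. It is needed twice: to guarantee that $v_2$ is internal in the converse direction, and to ensure $v_1$ is genuinely adjacent in the forward direction. I expect this bookkeeping, rather than any inequality, to be the main obstacle.
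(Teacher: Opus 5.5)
Your proof is correct and follows essentially the same route as the paper's. For ($\Leftarrow$) the paper argues by contradiction that the higher adjacent leaf's sibling must be internal and that the lower leaf lies in its subtree; for ($\Rightarrow$) it uses the root-to-$v_{\max}$ path. You make the same moves directly, and your structural lemma and explicit handling of the sibling-of-terminal-leaf exclusion justify steps the paper only asserts.
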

\addtocounter{theorem}{-1}
\endgroup

\begin{proof}
    Assume for contradiction that the condition holds, but $T$ does not satisfy the falling constraint.
    Then, by definition, there must exist a path $P$ in $T$ from root to leaf with leaves $\ell_1$, $\ell_2$ adjacent to $P$, where $\ell_1$ is higher on the path than $\ell_2$ and $p^+_{\ell_1} < p^+_{\ell_2}$.

    Denote $h_1$ to be the parent node of $\ell_1$.
    Since $\ell_1$ is a leaf and it is higher on the path than $\ell_2$, $h_1$ must be an internal node and $\ell_1$'s sibling must be an internal node (denoted $s_1$).
    $\ell_2$ must be a leaf in the subtree rooted at $s_1$.
    Let $p_{max}^+$ be the maximum empirical positive proportion of any leaf in the subtree rooted at $s_1$.
    Then $p_{max}^+ \geq p^+_{\ell_2} > p^+_{\ell_1}$, which contradicts the assumption that $T$ satisfies the condition in the theorem.
    We conclude that $T$ must satisfy the falling constraint.

    For the other direction, assume that $T$ is falling and that there exists an internal node $v \in N(T)$ with children $v_1$ and $v_2$ such that $v_1$ is a leaf and $v_2$ is an internal node. Assume for contradiction that $p_1^+ < p_{max}^+(v_2)$. Denote the leaf at which the maximum probability below $v_2$ is achieved as $v_{max}$. Then, on the path from the root to $v_{max}$, the leaf probabilities are not monotonically decreasing and $T$ cannot be falling. This is a contradiction, so we must have $p_1^+ \geq p_{max}^+(v_2)$.
\end{proof}

\begingroup
\def\thetheorem{\ref{thm:path_pruning}}
\begin{theorem}
    \PruningFeatures
\end{theorem}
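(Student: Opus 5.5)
We reduce the claim to Theorem~\ref{thm:fallback_condition}, using a weighted-average argument. The key observation is that the empirical positive proportion of a node is a convex combination of the positive proportions of the leaves beneath it.

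Let $v$ be the node at the end of $P$, and let $v_L, v_R$ be its children under a split on $j$. Suppose without loss of generality that $p^+_R = \max(p^+_L,p^+_R) > p^+_{\text{recent}}$. Fix any completion $T_P$ containing this split.

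\textbf{Step 1 (averaging).} Let $\ell_1,\dots,\ell_m$ be the leaves of the subtree of $T_P$ rooted at $v_R$. Their supports partition the data reaching $v_R$, so
\[
p^+_R=\sum_{k}\frac{|\ell_k|}{\sum_{k'}|\ell_{k'}|}\,p^+(\ell_k).
\]
Hence some leaf $\ell^\star$ below $v_R$ has $p^+(\ell^\star)\ge p^+_R>p^+_{\text{recent}}$. If $v_R$ is itself a leaf, then $\ell^\star=v_R$.

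\textbf{Step 2 (locating the constraint).} Let $\ell_{\text{rec}}$ be the deepest leaf adjacent to $P$, with positive proportion $p^+_{\text{recent}}$. Since $\ell_{\text{rec}}$ is adjacent to $P$ but is not the endpoint $v$ (which is being split), it is the child of some node $u$ on $P$. Its sibling $s$ is the next node on $P$, an internal node whose subtree contains $v$ and therefore $\ell^\star$.

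\textbf{Step 3 (applying the earlier theorem).} At $u$, one child ($\ell_{\text{rec}}$) is a leaf and the other ($s$) is internal. Theorem~\ref{thm:fallback_condition} therefore requires
\[
p^+_{\text{recent}} \ge p^+_{\max}(s) \ge p^+(\ell^\star) > p^+_{\text{recent}},
\]
which is a contradiction. So $T_P$ is not falling.

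Equivalently, one could argue directly from the definition. Take the root-to-leaf path through $\ell^\star$. Then $\ell_{\text{rec}}$ and $\ell^\star$ are both adjacent to this path, with $\ell_{\text{rec}}$ shallower. The only exception would be $\ell^\star$ being the sibling of the terminal leaf, but $\ell^\star$ is the terminal leaf itself. So the probabilities fail to decrease along the path.

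\textbf{Main obstacle.} The delicate step is Step 2. We must verify that ``the deepest leaf adjacent to $P$'' really sits on a branch off $P$ above $v$, so that the subtree containing the new split lies under its internal sibling. We must also handle the edge case where no such leaf exists. In that case $p^+_{\text{recent}}$ is initialized to $1$ (or $+\infty$), so the hypothesis $\max(p^+_L,p^+_R)>p^+_{\text{recent}}$ can never hold and the statement is vacuous.

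The averaging step is routine, provided leaf probabilities are the empirical proportions on the training data, so that the convex-combination identity holds exactly.
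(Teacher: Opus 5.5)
Your proof is correct and is essentially the paper's argument. The paper also writes $p^+_R$ as a convex combination of the leaf proportions below the split, and uses the fact that the recent leaf is adjacent to every root-to-leaf path through that subtree. It then concludes that every leaf there is at most $p^+_{\text{recent}}$, so their average is too; this is just the contrapositive of your step that some leaf $\ell^\star$ exceeds $p^+_{\text{recent}}$. Going through Theorem~\ref{thm:fallback_condition} instead of the definition is a cosmetic difference. Your Step~2 usefully makes explicit why the recent leaf's sibling lies on $P$, and hence is never the sibling of a terminal leaf below the split, which the paper only asserts in one line.
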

\addtocounter{theorem}{-1}
\endgroup


\begin{proof}
    Assume WLOG $p_R^+ \geq p_L^+$ and $p_R^+ > p^+_{\text{recent}}$.
    Let $I_R$ denote the indices of points in $X$ reaching the right child of the split on $j$ (i.e., satisfying all splits in $P$ together with $j=1$), and let $T_R$ be an arbitrary tree grown from this child.
    Assume for contradiction that $T_R$ does not violate the falling constraint.
    Consider the leaves of $T_R$, $\ell_1, \ldots, \ell_L$, each containing $n_{\ell_z}$ samples and with positive proportion $p_{\ell_z}^+$.
    Since the leaf with proportion $p^+_{\text{recent}}$ is adjacent to $P$, it is also adjacent to every root-to-leaf path in $T_R$.
    Thus, we must have $p^+_{\ell_z} \leq p^+_{\text{recent}}$, $\forall \, 1 \leq z \leq L$ (where $L$ is the number of leaves in $T_R$), since $T_R$ does not violate the falling constraint.
    Then
    \begin{align*}
        p_R^+ &= \frac{1}{|I_R|} \sum_{i \in I_R} y_i \\
        &= \frac{1}{|I_R|} \sum_{z=1}^L n_{\ell_z} \cdot p_{\ell_z}^+ \\
        &= \sum_{z=1}^L \frac{n_{\ell_z}}{|I_R|}\, p_{\ell_z}^+ \\
        &\leq \max_z p_{\ell_z}^+ &\text{convex combination bounded by maximum} \\
        &\leq p^+_{\text{recent}} &\text{since $T_R$ is falling.}
    \end{align*}
    This contradicts the assumption that $p_R^+ > p^+_{\text{recent}}$, and we conclude that if $\max(p_L^+, p_R^+) > p^+_{\text{recent}}$, any tree resulting from a split on feature $j$ will violate the falling constraint.
\end{proof}

\newpage
\section{Case Study on ICU Data}\label{app:mimic-case-study}

The falling trees in our paper were found in the Rashomon set of falling trees on the MIMIC-III dataset \cite{johnson_mimic-iii_2016}. The features and outcomes were processed using the preprocessing pipeline developed by \citet{Zhu_2025} in their work developing an interpretable risk score for critical care patients. The data is heavily imbalanced in its initial state -- there are 3148 positive examples and 27090 negative examples.
To find nontrivial models that do not always predict negative, we balanced the data to a 1:1 ratio by downsampling the negative class and keeping the positive examples.
After rebalancing the data, we used the GOSDT+Guesses method presented by \citet{gosdt_guesses} to binarize the continuous features in the data.
We randomly split the balanced and binarized data into a training, validation, and test set along a 70/10/20 train/val/test split.

In \autoref{fig:four_trees}, we present several models found in the Rashomon set of falling trees, computed by \gravitree, with different numbers of branches. 
Each of these trees has the highest validation accuracy (using a threshold for classification of $0.5$) among trees with the same number of branches in the Rashomon set.
The tree with three branches has a test accuracy of $0.69$, the tree with two branches has a test accuracy of $0.71$, the tree with one branch has a test accuracy $0.69$, and the tree with zero branches has a test accuracy of $0.66$.
The test accuracy of an XGBoost \citep{chen2016xgboost} model trained on the same data as GraviTree was $0.73$.
The test accuracy of an XGBoost model trained on the balanced continuous data, before binarization with GOSDT+Guesses, was $0.78$; 
determining how to binarize while maintaining accuracy comparable to the continuous data is a subject of future work.
These performances are not directly comparable to those presented by \citet{Zhu_2025} because of the balancing operation we performed on the data.

\begin{figure}[ht]
    \centering
    \begin{subfigure}[t]{0.55\textwidth}
        \centering
        \includegraphics[width=\textwidth]{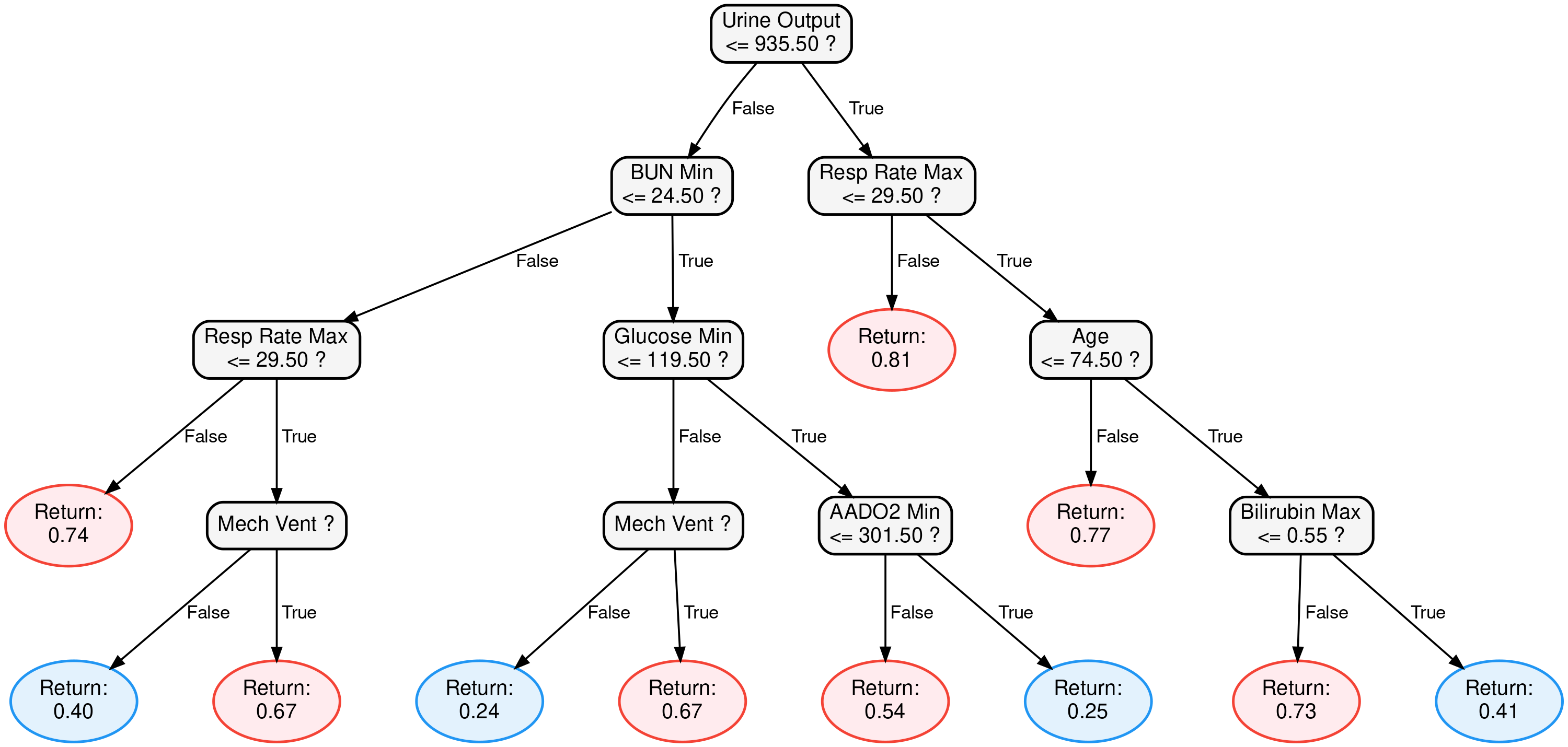}
        \label{fig:three_branches}
    \end{subfigure}
    \begin{subfigure}[t]{0.44\textwidth}
        \centering
        \includegraphics[width=\textwidth]{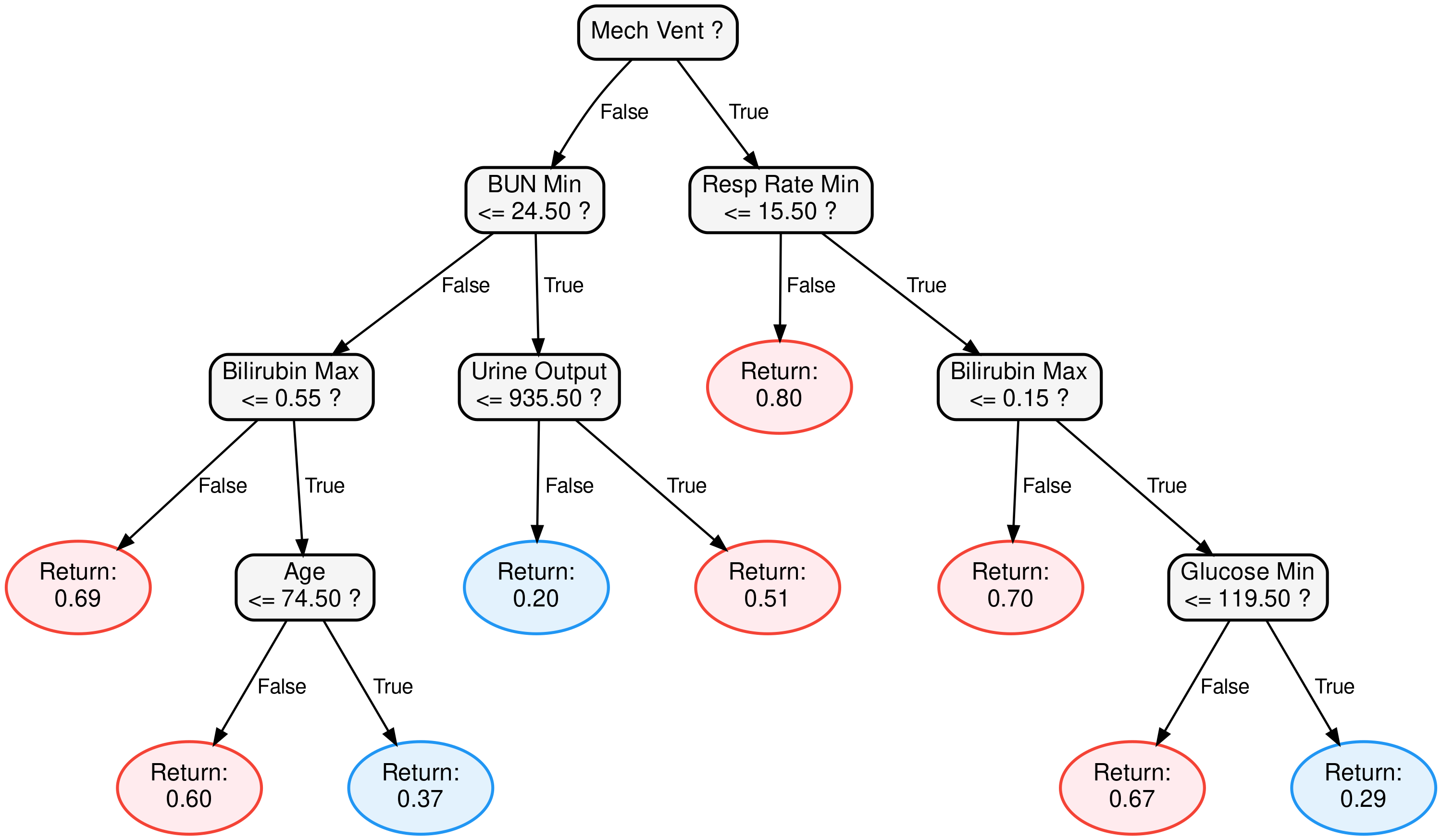}
        \label{fig:two_branches}
    \end{subfigure}
    \begin{subfigure}[b]{0.63\textwidth}
        \centering
        \includegraphics[width=\textwidth]{sections/figures/trees/branch_1.png}
        \label{fig:one_branch}
    \end{subfigure}
    \begin{subfigure}[b]{0.36\textwidth}
        \centering
        \includegraphics[width=\textwidth]{sections/figures/trees/branch_0.png}
        \label{fig:rule_list}
    \end{subfigure}
    \caption{Four falling trees in the Rashomon set on the MIMIC dataset. The top left tree has three branches, top right has two branches, bottom left has one branch, and bottom right zero branches. Leaf nodes with blue outlines represent positive classifications greater than the 0.5 threshold for the classification decision boundary. The trees are all in the Rashomon set with a depth of 4, Rashomon parameter 0.05, branching cost of 0.01, and minimum support of 0.04.}
    
    \label{fig:four_trees}
\end{figure}




\clearpage
\section{Algorithms}\label{app:algorithms}


\subsection{The merge and filter subroutine of \gravitree}

In this section, we describe the subroutine responsible for combining the Rashomon sets of the left and right subproblems associated with a candidate split on feature~$j$. This operation is nontrivial; although each subtree individually satisfies both the Rashomon loss bound and the falling constraint, not all pairs of left and right models can be combined without violating these global conditions.

The subroutine constructs candidate internal-node models by taking the Cartesian product of the Rashomon sets from the left and right subtrees. Each candidate pair is evaluated by summing the corresponding subtree losses and adding a branching penalty when both subtrees are internal. Candidates whose total loss ($+$ the branching cost $\mu$, if neither are leaves) exceeds the current budget~$B$ are discarded (line $10$). To improve efficiency, both Rashomon sets are sorted by loss, allowing early termination when even the minimum-loss pairing violates the budget (lines $2$-$5$).

In addition to enforcing the objective bound, the procedure filters out combinations that violate the falling constraint. Specifically, if one child subtree is a leaf and the other is internal, the leaf must not have a smaller positive-class proportion than the purest leaf in the internal subtree (lines $13$-$19$). This ensures that the maximum positive-class probability along any root-to-leaf path remains non-increasing, preserving the falling structure of the resulting model.

The surviving combinations are assembled into new internal-node models splitting on feature~$j$ and added to the merged Rashomon set (lines $20$-$21$). This subroutine is a key component of \gravitree, enabling efficient exploration of valid model combinations while aggressively pruning infeasible or suboptimal candidates.

\begin{algorithm}[ht]
\caption{\textsc{MergeAndFilterRset}}
\label{alg:merge_rset}
\small
\begin{algorithmic}[1]
\REQUIRE $\mathcal{R}_L,\mathcal{R}_R$, Input feature $j$, Budget $B$, branching cost $\mu$
\STATE $\mathcal{R}\gets\emptyset$
\STATE Sort $\mathcal{R}_L, \mathcal{R}_R$ according to increasing loss
\STATE $\mathcal{L}_R^{\min}\gets\min\{\mathcal{L}_R:(T_R,\mathcal{L}_R,p^+_R)\in\mathcal{R}_R\}$ \COMMENT{\textcolor{blue}{Minimum loss of the right Rashomon set}}
\FOR{each $(T_L,\mathcal{L}_L,p^+_L)\in\mathcal{R}_L$} 
  \IF{$\mathcal{L}_L+\mathcal{L}_R^{\min}>B$} \STATE \textbf{break} \ENDIF 
  \COMMENT{\textcolor{blue}{Even using the minimum loss of the right Rset breaks the budget}}
  \FOR{each $(T_R,\mathcal{L}_R,p^+_R)\in\mathcal{R}_R$}
    \STATE $\mathcal{L}\gets\mathcal{L}_L+\mathcal{L}_R+\mu\cdot\mathbf{1}[T_L,T_R\text{ internal}]$
    \IF{$\mathcal{L}>B$} \STATE \textbf{break} \ENDIF 
    \STATE $p^+_{\max}\gets\max(p^+_L,p^+_R)$ \COMMENT{\textcolor{blue}{Overall purest leaf (in terms of positive class proportion) seen so far}}
    \IF{$T_L$ leaf, $T_R$ internal, $p^+_L < p^+_R$} \STATE \textbf{continue} \ENDIF
    \IF{$T_L$ internal, $T_R$ leaf, $p^+_R < p^+_L$} \STATE \textbf{continue} \ENDIF

    \STATE $T\gets\textsc{Node}(j,T_L,T_R)$ \COMMENT{Split on feature $j$, with $T_L$ and $T_R$ as the new child subtrees in the Rashomon set}
    \STATE add $(T,\mathcal{L},p^+_{\max})$ to $\mathcal{R}$
  \ENDFOR
\ENDFOR
\STATE \textbf{return} $\mathcal{R}$
\end{algorithmic}
\end{algorithm}

\subsection{Falling Rule Lists Baseline: An extension to the algorithm of \citet{chaofan}}
In this section, we describe an approximate algorithm for finding the Rashomon set of falling rule lists. This is intended to be a simple extension to the algorithm of \citet{chaofan}, which performs Monte Carlo Tree Search to find the optimal falling rule list. We provide this as a baseline for comparison with our method, since there are no other existing methods to compute the Rashomon set of falling rule lists.

Given training data $D$, positive class weight $w$, regularization parameter $\lambda$, and tolerance $\eps$, our baseline algorithm proceeds in two phases. First, we use the algorithm of \citet{chaofan} to construct a near-optimal falling rule list $f_{\text{ref}}$ with objective $L(f_{\text{ref}}, D, w, \lambda)$ over $T_1$ iterations. This sets the baseline objective to compare with in the second phase. 
In the second phase, we run the algorithm for $T_2$ iterations to construct an approximate Rashomon set of FRLs. 
In each iteration, we start with an empty rule list prefix and sequentially sample antecedents.
When we sample an antecedent, we add it to the prefix if its improvement in accuracy compensates the increase in complexity.
After adding the antecedent, we check if it is possible to improve on the new prefix by adding any rule.
The precise condition for this check can be found in Theorem 4.6 of \citet{chaofan}.
Once the prefix satisfies this stopping condition, the final FRL $f$ is constructed by appending the default rule to the prefix. We add it to the approximate Rashomon set if it satisfies the reference Rashomon bound $L(f, D, w, \lambda) \leq L(f_{\text{ref}}, w, D, \lambda) + \eps$.
See Algorithm \ref{alg:rset_frl} for a formal description of this algorithm (which we call FRAME).\\

Note that FRAME can work with antecedents (features) of length $1$ (e.g., age $> 25$) or length $2$ (e.g., age $> 25 \land$ sex$=\text{female}$ as binary features). We experiment with both below.

\begin{algorithm}[ht]
\caption{\framealg \ - Falling Rule List Approximate Model Enumerator \citep[A simple extension of][]{chaofan}}
\label{alg:rset_frl}
\begin{algorithmic}[1]
    \REQUIRE Dataset $D$, mined antecedents $A$, positive weight $w$, regularization term $\lambda$, Rashomon set threshold $\eps$, first pass iterations $T_1$, second pass iterations $T_2$, probability of early stopping $p$
    
    \STATE Run FRL algorithm of \citet{chaofan} for $T_1$ iterations to obtain reference model $f^*$ and objective $L(f^*, D, w, \lambda)$
    \STATE Initialize an empty \texttt{FRLTrie} object \COMMENT{\textcolor{blue}{Data structure for storing FRLs}}
    
    \FOR{$t = 1$ to $T_2$}
        \STATE Initialize prefix $e \gets []$, $\alpha \gets 1$

        \COMMENT{\textcolor{blue}{Check if the prefix satisfies Theorem 4.6 in \citet{chaofan}}}
        \WHILE{it is possible to improve $e$'s regularized objective by adding a rule}
            \STATE $p' \gets \text{random}(0, 1); \text{if } p'>p, \textbf{Break}$ \COMMENT{\textcolor{blue}{Allow early stopping}}
            
            \STATE Initialize candidate set $S_t \gets \emptyset$
            
            \COMMENT{\textcolor{blue}{Find all feasible antecedents to extend the current prefix}}
            
            \FOR{each antecedent $a \in A$}
                \STATE $D_{\text{uncov}} \gets \{(\bx, y) \in D : a'(\bx) = 0 \ \forall a' \in e\}$ \COMMENT{Points remaining after current prefix}
                \STATE $e_a \gets \text{append}(e, a)$ \COMMENT{\textcolor{blue}{Prefix extended by adding $a$}}
                \STATE $D_a \gets \{(\bx, y) \in D_{\text{uncov}} : a(\bx) = 1\}$ \COMMENT{\textcolor{blue}{Uncaptured points now captured by $a$}}
                \STATE $\hat{p}_{|e|+1}^{e_a} \gets \frac{|\{(\bx, y) \in D_a : y = 1\}|}{|D_{\text{uncov}}|}$ \COMMENT{\textcolor{blue}{Positive proportion among newly captured points}}
                \COMMENT{\textcolor{blue}{Satisfies falling constraint and improves error}}
                
                \IF{$\frac{1}{1+w} < \hat{p}_{|e|+1}^{e_a} \leq \hat{p}_{|e|}^{e_a}$ \COMMENT{\textcolor{blue}{Compute lower bound from Theorem 4.6 of \citet{chaofan}}}}
                    
                    \STATE $L_{\text{lb}}(e_a, D, w, \lambda) \gets $ lower bound on the objective of prefix $e_a$
                    \IF{$L_{\text{lb}}(e_a, D, w, \lambda) < L(f^*, D, w, \lambda) + \eps$ \COMMENT{\textcolor{blue}{Rashomon bound}}} 
                    
                        \STATE Add $a$ to candidate set $S_t$
                    \ENDIF
                \ENDIF
            \ENDFOR
            \IF{$S_t \neq \emptyset$}
                \STATE $a \sim \textrm{softmax}(\{C(e,a,D,t) \ \forall \ a \in S_t\})$ \COMMENT{\textcolor{blue}{Sample next antecedent}}
                \STATE $e \gets \text{append}(e, a)$ \COMMENT{\textcolor{blue}{Append the antecedent to the prefix}}
            \ELSE
                \STATE \textbf{Break}
            \ENDIF
        \ENDWHILE
        \STATE $f \gets \text{append}(e, a_\emptyset)$ \COMMENT{Add the default rule to the prefix}
        \STATE Compute $L(f, D, w, \lambda)$
        \COMMENT{\textcolor{blue}{Rashomon bound}}
        \IF{$L(f, D, w, \lambda) < L(f^*, D, w, \lambda) + \eps$}
            \STATE Insert $f$ into \texttt{FRLTrie} \COMMENT{\textcolor{blue}{Add to Rashomon set $\mathcal{R}_t$}}
        \ENDIF
    \ENDFOR
    \STATE \textbf{return} \texttt{FRLTrie}
\end{algorithmic}
\end{algorithm}

For the experiments in Tables \ref{tab:ft_vs_frame_1.0} and \ref{tab:ft_vs_frame_1.0_max_len_2}, we set the Rashomon tolerance $\varepsilon = 0.02$. However, the Rashomon sets are defined with respect to different objectives \citep[][also penalizes the number of leaves in the rule list]{chaofan} making direct comparisons difficult. To make the methods somewhat comparable, we set the same minimum support $\alpha = 0.05$ and have the same depth budget of $5$ for both methods. We set the leaf penalty $\lambda = 0$ for FRAME, making it equivalent to $\mu = 1$ for \gravitree since the latter does not have an explicit leaf penalty. 
\begin{table}[H]
\centering
\scriptsize
\setlength{\tabcolsep}{2pt}
\begin{tabular}{l|cccc|cccc}
\toprule
Dataset & \multicolumn{4}{c}{\textbf{GraviTree}} & \multicolumn{4}{c}{\textbf{FRAME} (max len $1$)} \\
 & RSet & Test Loss & \makecell{Pos Decision Sparsity \\ $\beta(T, D^+)$} & FNR 
 & RSet & Test Loss & \makecell{Pos Decision Sparsity \\ $\beta(T, D^+)$} & FNR \\
\midrule
Adult & $2.0 \pm 0.0$ & {\boldmath $0.322 \pm 0.008$} & {\boldmath $1.65 \pm 0.10$} & $0.104 \pm 0.009$ & $18.6 \pm 3.6$ & $0.332 \pm 0.009$ & $2.21 \pm 0.07$ & {\boldmath $0.074 \pm 0.005$} \\
Aging & $13.8 \pm 5.3$ & $0.383 \pm 0.035$ & {\boldmath $2.29 \pm 0.30$} & $0.343 \pm 0.058$ & $35.0 \pm 9.9$ & {\boldmath $0.369 \pm 0.030$} & $2.35 \pm 0.22$ & {\boldmath $0.315 \pm 0.049$} \\
Bar & $2.0 \pm 0.0$ & {\boldmath $0.311 \pm 0.006$} & $1.91 \pm 0.05$ &$0.556 \pm 0.017$ & $1.0 \pm 0.0$ & $0.311 \pm 0.006$ & {\boldmath $1.86 \pm 0.02$} &  {\boldmath $0.556 \pm 0.017$} \\
Bar7 & $2.0 \pm 0.0$ & {\boldmath $0.311 \pm 0.006$} & $2.02 \pm 0.05$ & $0.556 \pm 0.017$ & $1.4 \pm 0.2$ & $0.314 \pm 0.006$ & {\boldmath $2.01 \pm 0.09$} & {\boldmath $0.549 \pm 0.017$} \\
BCW & $1.0 \pm 0.0$ & {\boldmath $0.081 \pm 0.009$} & $2.95 \pm 0.06$ & $0.221 \pm 0.025$ & $1.0 \pm 0.0$ & $0.081 \pm 0.009$ & {\boldmath $2.88 \pm 0.07$} & {\boldmath$0.221 \pm 0.025$} \\
Bike & $14.8 \pm 2.4$ & $0.319 \pm 0.005$ & {\boldmath $1.11 \pm 0.05$} & $0.188 \pm 0.018$ & $19.8 \pm 3.7$ & {\boldmath $0.316 \pm 0.006$} & $1.23 \pm 0.02$ & {\boldmath $0.146 \pm 0.015$} \\
Car Evaluation & $20.6 \pm 3.4$ & {\boldmath $0.411 \pm 0.010$} & {\boldmath $1.40 \pm 0.11$} & $0.302 \pm 0.019$ & $18.6 \pm 3.3$ & $0.416 \pm 0.009$ & $1.47 \pm 0.10$ & {\boldmath $0.259 \pm 0.014$} \\
Carryout Takeaway & $4.4 \pm 2.4$ & $0.397 \pm 0.034$ & {\boldmath $1.62 \pm 0.08$} & $0.369 \pm 0.057$ & $6.2 \pm 1.9$ & {\boldmath $0.382 \pm 0.013$} & $1.92 \pm 0.11$ & {\boldmath $0.334 \pm 0.017$} \\
Coffee House & $1.4 \pm 0.2$ & {\boldmath $0.335 \pm 0.005$} & $2.59 \pm 0.09$ & $0.363 \pm 0.004$ & $1.4 \pm 0.2$ & $0.337 \pm 0.005$ & {\boldmath $2.57 \pm 0.09$} & {\boldmath $0.341 \pm 0.015$} \\
COMPAS & $147 \pm 44$ & {\boldmath $0.350 \pm 0.006$} & $2.81 \pm 0.05$ & $0.361 \pm 0.010$ & $83.0 \pm 10.6$ & $0.351 \pm 0.005$ & {\boldmath $2.64 \pm 0.02$} & {\boldmath $0.349 \pm 0.020$} \\
FICO & $11.0 \pm 3.0$ & {\boldmath $0.306 \pm 0.004$} & {\boldmath $1.41 \pm 0.07$} & $0.370 \pm 0.007$ & $30.8 \pm 2.3$ & $0.307 \pm 0.004$ & $1.83 \pm 0.04$ & {\boldmath $0.356 \pm 0.006$} \\
Heart & $2.4 \pm 0.6$ & $0.269 \pm 0.005$ & $2.01 \pm 0.19$ & $0.223 \pm 0.024$ & $10.8 \pm 1.9$ & {\boldmath $0.263 \pm 0.009$} & {\boldmath $1.80 \pm 0.04$} & {\boldmath $0.197 \pm 0.022$} \\
MGH & $7.2 \pm 0.2$ & {\boldmath $0.514 \pm 0.037$} & {\boldmath $0.89 \pm 0.07$} & $0.145 \pm 0.062$ & $6.2 \pm 0.2$ & $0.517 \pm 0.019$ & $0.94 \pm 0.04$ & {\boldmath $0.105 \pm 0.034$} \\
Monks1 & $2.4 \pm 0.2$ & {\boldmath $0.272 \pm 0.048$} & $1.24 \pm 0.18$ & $0.434 \pm 0.050$ & $1.2 \pm 0.2$ & $0.276 \pm 0.046$ & {\boldmath $1.15 \pm 0.11$} & {\boldmath $0.431 \pm 0.052$} \\
Monks2 & $2.6 \pm 0.2$ & {\boldmath $0.394 \pm 0.008$} & {\boldmath $0.20 \pm 0.08$} & $0.979 \pm 0.021$ & $1.4 \pm 0.2$ & $0.400 \pm 0.012$ & $0.20 \pm 0.12$ & {\boldmath $0.969 \pm 0.031$} \\
Monks3 & $31.0 \pm 3.9$ & {\boldmath $0.359 \pm 0.011$} & $1.52 \pm 0.04$ & $0.395 \pm 0.040$ & $22.0 \pm 2.1$ & $0.359 \pm 0.013$ & {\boldmath $1.49 \pm 0.03$} & {\boldmath $0.374 \pm 0.040$} \\
Mushroom & $1.0 \pm 0.0$ & {\boldmath $0.053 \pm 0.002$} & $1.43 \pm 0.01$ & $0.030 \pm 0.002$ & $1.8 \pm 0.2$ & $0.053 \pm 0.002$ & {\boldmath $1.43 \pm 0.01$} & {\boldmath $0.030 \pm 0.002$} \\
NIJ Recidivism & $3.0 \pm 0.4$ & {\boldmath $0.310 \pm 0.004$} & {\boldmath $2.02 \pm 0.06$} & {\boldmath $0.190 \pm 0.005$} & $16.8 \pm 1.3$ & $0.311 \pm 0.004$ & $2.19 \pm 0.03$ & $0.193 \pm 0.005$ \\
Restaurant 20 & $2.0 \pm 0.0$ & {\boldmath $0.298 \pm 0.005$} & {\boldmath $1.48 \pm 0.06$} &  $0.249 \pm 0.005$ & $1.8 \pm 0.2$ & $0.298 \pm 0.005$ & $1.54 \pm 0.03$ & {\boldmath$0.249 \pm 0.005$} \\
Spambase & $4.4 \pm 0.7$ & {\boldmath $0.125 \pm 0.004$} & $2.30 \pm 0.07$ & $0.148 \pm 0.008$ & $5.2 \pm 0.8$ & $0.126 \pm 0.004$ & {\boldmath $2.29 \pm 0.08$} & {\boldmath $0.145 \pm 0.006$} \\
Student & $18.6 \pm 9.5$ & $0.257 \pm 0.036$ & {\boldmath $1.48 \pm 0.12$} & $0.158 \pm 0.054$ & $25.4 \pm 11.0$ & {\boldmath $0.252 \pm 0.022$} & $1.65 \pm 0.06$ & {\boldmath $0.136 \pm 0.032$} \\
Wine & $43.2 \pm 11.1$ & {\boldmath $0.322 \pm 0.007$} & $2.70 \pm 0.04$ & $0.160 \pm 0.013$ & $37.6 \pm 8.5$ & $0.327 \pm 0.007$ & {\boldmath $2.08 \pm 0.06$} & {\boldmath $0.149 \pm 0.011$} \\
\bottomrule
\end{tabular}
\caption{Comparison between Rashomon sets output by \gravitree with rule list mode ($\mu = 1$) and FRAME (max len $1$). For \gravitree, we set the min support $\alpha = 0.05$. For FRAME, we set the min support of a mined rule (i.e. the min proportion of points that need to satisfy the rule) to be $0.05$. Bold indicates one method outperforms the other (lower is better).}
\label{tab:ft_vs_frame_1.0}
\end{table}
\begin{table}[H]
\centering
\scriptsize
\setlength{\tabcolsep}{2pt}
\begin{tabular}{l|cccc|cccc}
\toprule
Dataset & \multicolumn{4}{c}{\textbf{GraviTree}} & \multicolumn{4}{c}{\textbf{FRAME} (max len $2$)} \\
 & RSet & Test Loss & \makecell{Pos Decision Sparsity \\ $\beta(T, D^+)$} & FNR 
 & RSet & Test Loss & \makecell{Pos Decision Sparsity \\ $\beta(T, D^+)$} & FNR \\
\midrule

Adult & $2.0 \pm 0.0$ & {\boldmath $0.322 \pm 0.008$} & {\boldmath $1.65 \pm 0.10$} & $0.104 \pm 0.009$ & $794 \pm 83.0$ & $0.322 \pm 0.005$ & $4.25 \pm 0.04$ & {\boldmath $0.093 \pm 0.002$} \\
Aging & $13.8 \pm 5.3$ & $0.383 \pm 0.035$ & {\boldmath $2.29 \pm 0.30$} & $0.343 \pm 0.058$ & $1351 \pm 206.2$ & {\boldmath $0.318 \pm 0.012$} & $4.87 \pm 0.42$ & {\boldmath $0.236 \pm 0.021$} \\
Bar & $2.0 \pm 0.0$ & $0.311 \pm 0.006$ & {\boldmath $1.91 \pm 0.05$} & $0.556 \pm 0.017$ & $416 \pm 29.0$ & {\boldmath $0.310 \pm 0.006$} & $6.07 \pm 0.18$ & {\boldmath $0.401 \pm 0.012$} \\
Bar7 & $2.0 \pm 0.0$ & {\boldmath $0.311 \pm 0.006$} & {\boldmath $2.02 \pm 0.05$} & $0.556 \pm 0.017$ & $71.6 \pm 19.8$ & $0.313 \pm 0.006$ & $3.95 \pm 0.24$ & {\boldmath $0.546 \pm 0.018$} \\
BCW& $1.0 \pm 0.0$ & {\boldmath $0.081 \pm 0.009$} & {\boldmath $2.95 \pm 0.06$} & $0.221 \pm 0.025$ & $743 \pm 31.9$ & $0.082 \pm 0.006$ & $4.48 \pm 0.10$ & {\boldmath $0.068 \pm 0.005$} \\
Bike& $14.8 \pm 2.4$ & $0.319 \pm 0.005$ & {\boldmath $1.11 \pm 0.05$} & $0.188 \pm 0.018$ & $1228 \pm 71.7$ & {\boldmath $0.286 \pm 0.003$} & $2.95 \pm 0.05$ & {\boldmath $0.080 \pm 0.002$} \\
Car Evaluation & $20.6 \pm 3.4$ & {\boldmath $0.411 \pm 0.010$} & {\boldmath $1.40 \pm 0.11$} & $0.302 \pm 0.019$ & $457 \pm 82.7$ & $0.425 \pm 0.004$ & $3.60 \pm 0.12$ & {\boldmath $0.119 \pm 0.005$} \\
Carryout Takeaway & $4.4 \pm 2.4$ & $0.397 \pm 0.034$ & {\boldmath $1.62 \pm 0.08$} & $0.369 \pm 0.057$ & $324 \pm 40.9$ & {\boldmath $0.350 \pm 0.010$} & $4.37 \pm 0.18$ & {\boldmath $0.270 \pm 0.015$} \\
Coffee House & $1.4 \pm 0.2$ & {\boldmath $0.335 \pm 0.005$} & {\boldmath $2.59 \pm 0.09$} & $0.363 \pm 0.004$ & $142 \pm 33.0$ & $0.344 \pm 0.006$ & $5.96 \pm 0.13$ & {\boldmath $0.320 \pm 0.017$} \\
COMPAS & $147 \pm 44.4$ & $0.350 \pm 0.006$ & {\boldmath $2.81 \pm 0.05$} & $0.361 \pm 0.010$ & $2159 \pm 127.7$ & {\boldmath $0.345 \pm 0.004$} & $5.31 \pm 0.03$ & {\boldmath $0.317 \pm 0.004$} \\
Heart & $2.4 \pm 0.6$ & $0.269 \pm 0.005$ & {\boldmath $2.01 \pm 0.19$} & $0.223 \pm 0.024$ & $1821 \pm 160.7$ & {\boldmath $0.241 \pm 0.006$} & $4.22 \pm 0.09$ & {\boldmath $0.153 \pm 0.009$} \\
FICO & $29.4 \pm 27.2$ & $0.316 \pm 0.005$ & {\boldmath $1.97 \pm 0.13$} & $0.109 \pm 0.011$ & $1516 \pm 74.9$ & {\boldmath $0.310 \pm 0.005$} & $3.84 \pm 0.04$ & {\boldmath $0.094 \pm 0.004$} \\
MGH & $7.2 \pm 0.2$ & $0.514 \pm 0.037$ & {\boldmath $0.89 \pm 0.07$} & $0.145 \pm 0.062$ & $60.6 \pm 8.2$ & {\boldmath $0.402 \pm 0.023$} & $2.52 \pm 0.10$ & {\boldmath $0.061 \pm 0.011$} \\
Monks1 & $2.4 \pm 0.2$ & {\boldmath $0.272 \pm 0.048$} & {\boldmath $1.24 \pm 0.18$} & $0.434 \pm 0.050$ & $230 \pm 46.6$ & $0.294 \pm 0.036$ & $4.70 \pm 0.22$ & {\boldmath $0.287 \pm 0.036$} \\
Monks2 & $2.6 \pm 0.2$ & {\boldmath $0.394 \pm 0.008$} & {\boldmath $0.20 \pm 0.08$} & $0.979 \pm 0.021$ & $15.0 \pm 3.0$ & $0.434 \pm 0.013$ & $2.59 \pm 0.18$ & {\boldmath $0.796 \pm 0.061$} \\
Monks3 & $31.0 \pm 3.9$ & {\boldmath $0.359 \pm 0.011$} & {\boldmath $1.52 \pm 0.04$} & $0.395 \pm 0.040$ & $409 \pm 133.4$ & $0.363 \pm 0.023$ & $3.78 \pm 0.20$ & {\boldmath $0.306 \pm 0.080$} \\
Mushroom & $1.0 \pm 0.0$ & $0.053 \pm 0.002$ & {\boldmath $1.43 \pm 0.01$} & $0.030 \pm 0.002$ & $587 \pm 89.1$ & {\boldmath $0.047 \pm 0.002$} & $3.91 \pm 0.04$ & {\boldmath $0.028 \pm 0.002$} \\
NIJ Recidivism & $3.0 \pm 0.4$ & {\boldmath $0.310 \pm 0.004$} & {\boldmath $2.02 \pm 0.06$} & $0.190 \pm 0.005$ & $1015 \pm 57.9$ & $0.311 \pm 0.003$ & $4.33 \pm 0.02$ & {\boldmath $0.121 \pm 0.004$} \\
Restaurant 20 & $2.0 \pm 0.0$ & {\boldmath $0.298 \pm 0.005$} & {\boldmath $1.48 \pm 0.06$} & {\boldmath $0.249 \pm 0.005$} & $246 \pm 45.1$ & $0.308 \pm 0.010$ & $3.58 \pm 0.04$ & $0.269 \pm 0.016$ \\
Spambase & $4.4 \pm 0.7$ & $0.125 \pm 0.004$ & {\boldmath $2.30 \pm 0.07$} & {\boldmath $0.148 \pm 0.008$} & $531 \pm 180.0$ & {\boldmath $0.123 \pm 0.008$} & $4.36 \pm 0.12$ & $0.152 \pm 0.010$ \\
Student & $18.6 \pm 9.5$ & $0.257 \pm 0.036$ & {\boldmath $1.48 \pm 0.12$} & $0.158 \pm 0.054$ & $1106 \pm 450.9$ & {\boldmath $0.237 \pm 0.022$} & $3.85 \pm 0.20$ & {\boldmath $0.132 \pm 0.014$} \\
Wine & $43.2 \pm 11.1$ & {\boldmath $0.322 \pm 0.007$} & {\boldmath $2.70 \pm 0.04$} & $0.160 \pm 0.027$ & $1210 \pm 163.4$ & $0.336 \pm 0.013$ & $4.03 \pm 0.08$ & {\boldmath $0.137 \pm 0.018$} \\
\bottomrule
\end{tabular}
\caption{Comparison between Rashomon sets output by \gravitree with rule list mode ($\mu = 1$) and FRAME (max len $2$). For \gravitree, we set the min support $\alpha = 0.05$. For FRAME, we set the min support of a mined rule (i.e. the min proportion of points that need to satisfy the rule) to be $0.05$. Bold indicates one method outperforms the other (lower is better).}
\label{tab:ft_vs_frame_1.0_max_len_2}
\end{table}

From Table \ref{tab:ft_vs_frame_1.0}, we can see that \gravitree generally has better overall test loss compared to \textsc{FRAME}. This is encouraging since FRAME is intended to be an approximate method while \gravitree is exact. However, \gravitree sometimes has worse decision sparsity and / or test loss. \\

From Table \ref{tab:ft_vs_frame_1.0_max_len_2}, we can see that \gravitree surprisingly has test loss comparable to FRAME with max len $2$, but no method uniformly dominates the other. \gravitree also shines in terms of decision sparsity on positive examples, but this is compensated by a larger test loss on positive examples (FNR).



\section{Ablation Study on Various Components of \gravitree}\label{app:ablation_studies}


\subsection{\gravitree with and without the falling constraint}
Our next experiment exposes how a Rashomon set of falling trees differs from a regular Rashomon set. We compared \gravitree with and without the falling constraint enabled. More precisely, this amounts to disabling the pruning conditions in lines $12$ and lines $29-33$ in Algorithm \ref{alg:opt_falling_rset}, lines $18-20$, $34-36$ in Algorithm \ref{alg:opt_falling_tree}.

We set the branching penalty $\mu = 0.01$ to enable modest branching capabilities and set $\alpha = 0.05, d = 5, \varepsilon = 0.01$. To enable a fair comparison, we also gave the unconstrained version the same overall Rashomon bound of $(1+\varepsilon)L^*_{\textrm{falling}}$. Table \ref{tab:gravitree_vs_reg_rset} shows the results in this setting. We see that \gravitree with the falling constraint generally has a much smaller Rashomon set size. While it obtains test losses competitive with the unconstrained set, both the decision sparsity on the positive examples $\beta(T,D^+)$ and the false negative rate are noticeably lower for the constrained version. This is crucial in cost-sensitive classification settings, where we might be willing to trade off the FPR for the FNR. With \gravitree, we can integrate this cost-sensitivity directly into the tree structure whilst also enjoying the benefits of a faster runtime compared to unconstrained Rashomon set search, as in Table \ref{tab:runtime_v_unconstrained}. 

\begin{table}[h]
\centering
\scriptsize
\setlength{\tabcolsep}{3pt}
\begin{tabular}{l|cccc|cccc}
\toprule
 & \multicolumn{4}{c}{\textbf{\gravitree}} & \multicolumn{4}{c}{\textbf{\gravitree (No Falling Constraint)}} \\
Dataset & RSet Size & Test Loss & \begin{tabular}{c}Sparsity \\$\beta(T,D^+)$\end{tabular} & FNR
 & RSet Size & Test Loss & \begin{tabular}{c}Sparsity \\$\beta(T,D^+)$\end{tabular} & FNR \\
\midrule
Adult & $1.0 \pm 0.0$ & {\boldmath $0.204 \pm 0.003$} & $3.85 \pm 0.21$ & $0.206 \pm 0.007$ & $167 \pm 49$ & $0.212 \pm 0.004$ & {\boldmath $3.40 \pm 0.03$} & {\boldmath $0.204 \pm 0.006$} \\
Aging & $57.0 \pm 80.8$ & {\boldmath $0.385 \pm 0.018$} & $3.83 \pm 0.09$ & {\boldmath $0.379 \pm 0.038$} & $2622 \pm 2300$ & $0.405 \pm 0.024$ & {\boldmath $3.83 \pm 0.07$} & $0.408 \pm 0.037$ \\
Bar & $8.8 \pm 2.6$ & $0.299 \pm 0.014$ & {\boldmath $2.97 \pm 0.05$} & {\boldmath $0.414 \pm 0.033$} & $270 \pm 64$ & {\boldmath $0.298 \pm 0.015$} & $3.64 \pm 0.07$ & $0.432 \pm 0.026$ \\
Bar7 & $4.6 \pm 2.2$ & $0.309 \pm 0.015$ & {\boldmath $2.46 \pm 0.44$} & {\boldmath $0.523 \pm 0.060$} & $92.4 \pm 61.4$ & {\boldmath $0.306 \pm 0.014$} & $3.48 \pm 0.21$ & $0.539 \pm 0.064$ \\
BCW & $1.0 \pm 0.0$ & {\boldmath $0.070 \pm 0.012$} & $4.07 \pm 0.37$ & $0.075 \pm 0.017$ & $184 \pm 83$ & $0.072 \pm 0.014$ & {\boldmath $2.74 \pm 0.15$} & {\boldmath $0.067 \pm 0.008$} \\
Car Evaluation & $1.0 \pm 0.0$ & $0.141 \pm 0.015$ & {\boldmath $2.09 \pm 0.19$} & {\boldmath $0.000 \pm 0.000$} & $17.6 \pm 12.4$ & {\boldmath $0.136 \pm 0.010$} & $2.33 \pm 0.18$ & $0.022 \pm 0.041$ \\
Carryout Takeaway & $12.4 \pm 8.4$ & {\boldmath $0.371 \pm 0.007$} & {\boldmath $3.14 \pm 0.23$} & {\boldmath $0.333 \pm 0.004$} & $1469 \pm 679$ & $0.401 \pm 0.012$ & $3.59 \pm 0.06$ & $0.402 \pm 0.016$ \\
Coffee House & $77.2 \pm 89.2$ & $0.300 \pm 0.009$ & {\boldmath $3.57 \pm 0.07$} & {\boldmath $0.253 \pm 0.010$} & $1216 \pm 1875$ & {\boldmath $0.299 \pm 0.010$} & $3.67 \pm 0.06$ & $0.262 \pm 0.013$ \\
COMPAS & $1248 \pm 861$ & $0.331 \pm 0.012$ & {\boldmath $2.99 \pm 0.05$} & {\boldmath $0.379 \pm 0.015$} & $7.4\mathrm{e}5 \pm 6.3\mathrm{e}5$ & {\boldmath $0.331 \pm 0.012$} & $3.55 \pm 0.03$ & $0.399 \pm 0.014$ \\
Heart & $1.0 \pm 0.0$ & {\boldmath $0.190 \pm 0.025$} & $3.70 \pm 0.25$ & {\boldmath $0.186 \pm 0.027$} & $1286 \pm 897$ & $0.212 \pm 0.021$ & {\boldmath $3.35 \pm 0.15$} & $0.233 \pm 0.027$ \\
FICO & $154 \pm 145$ & {\boldmath $0.296 \pm 0.010$} & {\boldmath $3.12 \pm 0.20$} & {\boldmath $0.351 \pm 0.029$} & $8.1\mathrm{e}5 \pm 3.4\mathrm{e}5$ & $0.299 \pm 0.009$ & $3.19 \pm 0.09$ & $0.361 \pm 0.017$ \\
MGH & $2.6 \pm 0.5$ & $0.061 \pm 0.005$ & {\boldmath $2.04 \pm 0.03$} & {\boldmath $0.025 \pm 0.009$} & $6.2 \pm 1.5$ & {\boldmath $0.058 \pm 0.006$} & $2.04 \pm 0.23$ & $0.034 \pm 0.011$ \\
Monks1 & $12.0 \pm 6.2$ & {\boldmath $0.178 \pm 0.110$} & {\boldmath $3.16 \pm 0.32$} & {\boldmath $0.151 \pm 0.134$} & $2642 \pm 3295$ & $0.193 \pm 0.101$ & $3.36 \pm 0.17$ & $0.167 \pm 0.123$ \\
Monks2 & $1.0 \pm 0.0$ & $0.441 \pm 0.085$ & {\boldmath $3.51 \pm 0.22$} & {\boldmath $0.615 \pm 0.049$} & $3327 \pm 2084$ & {\boldmath $0.411 \pm 0.083$} & $3.64 \pm 0.16$ & $0.642 \pm 0.104$ \\
Monks3 & $1.0 \pm 0.0$ & {\boldmath $0.096 \pm 0.090$} & $3.73 \pm 0.29$ & {\boldmath $0.150 \pm 0.122$} & $124 \pm 48$ & $0.101 \pm 0.081$ & {\boldmath $3.56 \pm 0.16$} & $0.153 \pm 0.116$ \\
Mushroom & $1.0 \pm 0.0$ & {\boldmath $0.030 \pm 0.002$} & $2.44 \pm 0.01$ & {\boldmath $0.034 \pm 0.005$} & $24.6 \pm 3.2$ & $0.031 \pm 0.002$ & {\boldmath $2.25 \pm 0.08$} & $0.035 \pm 0.005$ \\
NIJ Recidivism & $227 \pm 27$ & $0.305 \pm 0.003$ & {\boldmath $2.91 \pm 0.04$} & {\boldmath $0.126 \pm 0.009$} & $1.2\mathrm{e}4 \pm 2.4\mathrm{e}3$ & {\boldmath $0.304 \pm 0.002$} & $3.19 \pm 0.02$ & $0.177 \pm 0.005$ \\
Restaurant 20 & $3.4 \pm 1.4$ & {\boldmath $0.298 \pm 0.010$} & {\boldmath $1.71 \pm 0.20$} & {\boldmath $0.250 \pm 0.012$} & $949 \pm 668$ & $0.323 \pm 0.006$ & $3.12 \pm 0.28$ & $0.303 \pm 0.021$ \\
Spambase & $1.0 \pm 0.0$ & {\boldmath $0.105 \pm 0.011$} & $3.54 \pm 0.08$ & $0.189 \pm 0.028$ & $1.5\mathrm{e}4 \pm 6.2\mathrm{e}3$ & $0.112 \pm 0.005$ & {\boldmath $3.49 \pm 0.08$} & {\boldmath $0.180 \pm 0.018$} \\
Student & $1.0 \pm 0.0$ & {\boldmath $0.192 \pm 0.046$} & {\boldmath $3.64 \pm 0.23$} & {\boldmath $0.182 \pm 0.072$} & $1003 \pm 1027$ & $0.229 \pm 0.022$ & $3.70 \pm 0.08$ & $0.218 \pm 0.039$ \\
Wine & $530 \pm 520$ & {\boldmath $0.286 \pm 0.016$} & {\boldmath $3.39 \pm 0.08$} & {\boldmath $0.188 \pm 0.021$} & $1.1\mathrm{e}6 \pm 1.2\mathrm{e}6$ & $0.296 \pm 0.015$ & $3.68 \pm 0.06$ & $0.196 \pm 0.015$ \\
\bottomrule
\end{tabular}
\caption{Properties of the Rashomon set with vs without falling constraint under the objective function in Equation \ref{eqn:obj_function}. Parameters: $\mu = 0.01, \alpha = 0.05, d = 5, \varepsilon = 0.01$}
\label{tab:gravitree_vs_reg_rset}
\end{table}

\begin{table}[h]
\centering
\scriptsize
\setlength{\tabcolsep}{6pt}
\begin{tabular}{l|cc}
\toprule
Dataset & \textbf{\gravitree (s)} & \textbf{\gravitree (No Falling Constraint) (s)} \\
\midrule
Adult & {\boldmath $15.034 \pm 3.211$} & $17.019 \pm 3.683$ \\
Aging & {\boldmath $1.727 \pm 1.366$} & $2.138 \pm 1.590$ \\
Bar & {\boldmath $0.238 \pm 0.060$} & $0.265 \pm 0.066$ \\
Bar7 & {\boldmath $0.040 \pm 0.015$} & $0.047 \pm 0.019$ \\
BCW & {\boldmath $0.028 \pm 0.005$} & $0.036 \pm 0.007$ \\
Bike & {\boldmath $62.989 \pm 4.911$} & $111.129 \pm 9.216$ \\
Car Evaluation & {\boldmath $0.064 \pm 0.016$} & $0.078 \pm 0.020$ \\
Carryout Takeaway & {\boldmath $0.277 \pm 0.132$} & $0.394 \pm 0.206$ \\
Coffee House & {\boldmath $0.695 \pm 0.226$} & $0.804 \pm 0.291$ \\
COMPAS & {\boldmath $51.030 \pm 16.142$} & $77.292 \pm 34.417$ \\
FICO & {\boldmath $2302.470 \pm 950.036$} & $3784.807 \pm 1407.666$ \\
Heart & {\boldmath $1.065 \pm 0.633$} & $1.353 \pm 0.905$ \\
MGH & {\boldmath $0.766 \pm 0.517$} & $1.050 \pm 0.715$ \\
Monks1 & {\boldmath $0.015 \pm 0.006$} & $0.093 \pm 0.092$ \\
Monks2 & {\boldmath $0.017 \pm 0.007$} & $0.129 \pm 0.076$ \\
Monks3 & {\boldmath $0.005 \pm 0.003$} & $0.008 \pm 0.004$ \\
Mushroom & {\boldmath $0.037 \pm 0.006$} & $0.044 \pm 0.008$ \\
NIJ Recidivism & {\boldmath $7.375 \pm 0.706$} & $8.759 \pm 0.876$ \\
Restaurant 20 & {\boldmath $0.515 \pm 0.152$} & $0.840 \pm 0.290$ \\
Spambase & {\boldmath $59.422 \pm 3.401$} & $75.041 \pm 5.286$ \\
Student & {\boldmath $1.538 \pm 0.945$} & $2.302 \pm 1.492$ \\
Wine & {\boldmath $251.600 \pm 83.421$} & $343.263 \pm 130.793$ \\
\bottomrule
\end{tabular}
\caption{Comparison of runtime (seconds) for $\mu=0.01$, $\alpha=0.05$ with vs without the falling constraint enabled.}
\label{tab:runtime_v_unconstrained}
\end{table}

\subsection{Quantized Caching Ablation: Relative Rashomon Bound}
While our algorithms are exact, there is one subtlety in the way we cache subproblems that merits exploration. Specifically, the falling constraint can reduce the effectiveness of caching, since for a given subproblem, we also need to know the positive proportion of the last leaf adjacent to the current path. Thus, we typically have to store the tuple corresponding to $(s, d, p^+_{recent})$ where $s$ is the subproblem (aka the bitvector corresponding to the indices of the points that are in the current subproblem), $d$ is the depth budget, and $p^+_{recent}$ is the most recent probability observed on the current path. Because the latter is a floating point between $[0,1]$, it is unlikely that we can effectively cache this tuple for retrieval when similar subproblems are encountered. To mitigate this, we experiment with a caching strategy based on quantizing $p^+_{recent}$. Specifically, we define a quantization factor $q \in [0,1]$ (where a value closer to $1$ indicates greater quantization). We then cache $(s,d,\Big\lceil \frac{p^+_{recent}}{q}\Big\rceil)$ and retrieve this when we reach a subproblem with the same $s$ and $d$, and the same quantized $p^+_{recent}$.

In this experiment, for each value of $q$, we train a separate optimal tree using Algorithm \ref{alg:opt_falling_tree}. The Rashomon bound for each $q$ is therefore $(1+\eps)\times$ loss of best tree with that quantization.
\setlength{\tabcolsep}{5pt} 
\begin{small}
\begin{longtable}{llrrrr}
\label{tab:ablation_qcache}\\

\toprule
\textbf{Dataset} & \textbf{Config} & \textbf{Runtime (s)} & \textbf{R-set size} & \textbf{Avg. loss} & \textbf{Avg. sparsity} \\
\midrule
\endfirsthead

\toprule
\textbf{Dataset} & \textbf{Config} & \textbf{Runtime (s)} & \textbf{Rset size} & \textbf{Test Loss} & \textbf{Test decision sparsity} \\
\midrule
\endhead

\midrule \multicolumn{6}{r}{\emph{Continued on next page}}\\
\midrule
\endfoot

\bottomrule
\endlastfoot

\multirow{4}{*}{\texttt{NIJ Recidivism}}
 & QC=$0.001$ & 6.91 $\pm$ 0.57 & 227.4 $\pm$ 26.9 & \textbf{0.305 $\pm$ 0.003} & 2.91 $\pm$ 0.04 \\
 & QC=$0.01$ & 5.87 $\pm$ 0.45 & 227.4 $\pm$ 26.9 & \textbf{0.305 $\pm$ 0.003} & 2.91 $\pm$ 0.04 \\
 & QC=$0.1$ & 4.46 $\pm$ 0.33 & 225.8 $\pm$ 24.9 & \textbf{0.305 $\pm$ 0.003} & 2.91 $\pm$ 0.04 \\
 & QC=$1.0$ & \textbf{4.14 $\pm$ 0.47} & 225.6 $\pm$ 26.8 & \textbf{0.305 $\pm$ 0.003} & \textbf{2.92 $\pm$ 0.05} \\
\midrule
\multirow{4}{*}{\texttt{Adult}}
 & QC=$0.001$ & 15.31 $\pm$ 3.36 & 1.0 $\pm$ 0.0 & \textbf{0.204 $\pm$ 0.003} & \textbf{3.85 $\pm$ 0.21} \\
 & QC=$0.01$ & 13.24 $\pm$ 3.01 & 1.0 $\pm$ 0.0 & \textbf{0.204 $\pm$ 0.003} & \textbf{3.85 $\pm$ 0.21} \\
 & QC=$0.1$ & 10.86 $\pm$ 2.52 & 1.0 $\pm$ 0.0 & \textbf{0.204 $\pm$ 0.003} & \textbf{3.85 $\pm$ 0.21} \\
 & QC=$1.0$ & \textbf{6.51 $\pm$ 1.43} & 1.0 $\pm$ 0.0 & \textbf{0.204 $\pm$ 0.003} & \textbf{3.85 $\pm$ 0.21} \\
\midrule
\multirow{4}{*}{\texttt{Aging}}
 & QC=$0.001$ & 1.83 $\pm$ 1.50 & 57.0 $\pm$ 80.8 & \textbf{0.385 $\pm$ 0.018} & \textbf{3.83 $\pm$ 0.09} \\
 & QC=$0.01$ & 1.73 $\pm$ 1.42 & 57.0 $\pm$ 80.8 & \textbf{0.385 $\pm$ 0.018} & \textbf{3.83 $\pm$ 0.09} \\
 & QC=$0.1$ & 1.28 $\pm$ 0.87 & 65.8 $\pm$ 81.3 & 0.386 $\pm$ 0.018 & 3.81 $\pm$ 0.10 \\
 & QC=$1.0$ & \textbf{0.72 $\pm$ 0.43} & 65.8 $\pm$ 84.0 & 0.386 $\pm$ 0.016 & 3.81 $\pm$ 0.08 \\
\midrule
\multirow{4}{*}{\texttt{Bar7}}
 & QC=$0.001$ & 0.043 $\pm$ 0.017 & 4.6 $\pm$ 2.2 & \textbf{0.309 $\pm$ 0.015} & \textbf{2.46 $\pm$ 0.44} \\
 & QC=$0.01$ & 0.039 $\pm$ 0.015 & 4.6 $\pm$ 2.2 & \textbf{0.309 $\pm$ 0.015} & \textbf{2.46 $\pm$ 0.44} \\
 & QC=$0.1$ & 0.031 $\pm$ 0.012 & 4.6 $\pm$ 2.2 & \textbf{0.309 $\pm$ 0.015} & \textbf{2.46 $\pm$ 0.44} \\
 & QC=$1.0$ & \textbf{0.021 $\pm$ 0.009} & 4.2 $\pm$ 1.9 & \textbf{0.309 $\pm$ 0.015} & 2.42 $\pm$ 0.41 \\
\midrule
\multirow{4}{*}{\texttt{Bar}}
 & QC=$0.001$ & 0.23 $\pm$ 0.06 & 8.8 $\pm$ 2.6 & \textbf{0.299 $\pm$ 0.014} & \textbf{2.97 $\pm$ 0.05} \\
 & QC=$0.01$ & 0.23 $\pm$ 0.06 & 8.8 $\pm$ 2.6 & \textbf{0.299 $\pm$ 0.014} & \textbf{2.97 $\pm$ 0.05} \\
 & QC=$0.1$ & 0.18 $\pm$ 0.05 & 8.8 $\pm$ 2.6 & \textbf{0.299 $\pm$ 0.014} & \textbf{2.97 $\pm$ 0.05} \\
 & QC=$1.0$ & \textbf{0.11 $\pm$ 0.03} & 8.8 $\pm$ 2.6 & \textbf{0.299 $\pm$ 0.014} & 2.96 $\pm$ 0.04 \\
\midrule
\multirow{4}{*}{\texttt{BCW}}
 & QC=$0.001$ & 0.027 $\pm$ 0.005 & 1.0 $\pm$ 0.0 & \textbf{0.070 $\pm$ 0.012} & \textbf{4.07 $\pm$ 0.37} \\
 & QC=$0.01$ & 0.029 $\pm$ 0.007 & 1.0 $\pm$ 0.0 & \textbf{0.070 $\pm$ 0.012} & \textbf{4.07 $\pm$ 0.37} \\
 & QC=$0.1$ & 0.023 $\pm$ 0.004 & 1.0 $\pm$ 0.0 & \textbf{0.070 $\pm$ 0.012} & \textbf{4.07 $\pm$ 0.37} \\
 & QC=$1.0$ & \textbf{0.021 $\pm$ 0.004} & 1.0 $\pm$ 0.0 & \textbf{0.070 $\pm$ 0.012} & \textbf{4.07 $\pm$ 0.37} \\
\midrule
\multirow{4}{*}{\texttt{Bike}}
 & QC=$0.001$ & 64.28 $\pm$ 5.56 & 1.0 $\pm$ 0.0 & \textbf{0.152 $\pm$ 0.004} & \textbf{3.21 $\pm$ 0.12} \\
 & QC=$0.01$ & 63.08 $\pm$ 5.43 & 1.0 $\pm$ 0.0 & \textbf{0.152 $\pm$ 0.004} & \textbf{3.21 $\pm$ 0.12} \\
 & QC=$0.1$ & 60.25 $\pm$ 5.47 & 1.0 $\pm$ 0.0 & \textbf{0.152 $\pm$ 0.004} & \textbf{3.21 $\pm$ 0.12} \\
 & QC=$1.0$ & \textbf{47.92 $\pm$ 4.62} & 1.0 $\pm$ 0.0 & \textbf{0.152 $\pm$ 0.004} & \textbf{3.21 $\pm$ 0.12} \\
\midrule
\multirow{4}{*}{\texttt{Car Evaluation}}
 & QC=$0.001$ & 0.063 $\pm$ 0.017 & 1.0 $\pm$ 0.0 & \textbf{0.141 $\pm$ 0.015} & \textbf{2.09 $\pm$ 0.19} \\
 & QC=$0.01$ & 0.062 $\pm$ 0.017 & 1.0 $\pm$ 0.0 & \textbf{0.141 $\pm$ 0.015} & \textbf{2.09 $\pm$ 0.19} \\
 & QC=$0.1$ & 0.055 $\pm$ 0.015 & 1.0 $\pm$ 0.0 & \textbf{0.141 $\pm$ 0.015} & \textbf{2.09 $\pm$ 0.19} \\
 & QC=$1.0$ & \textbf{0.038 $\pm$ 0.009} & 1.0 $\pm$ 0.0 & \textbf{0.141 $\pm$ 0.015} & \textbf{2.09 $\pm$ 0.19} \\
\midrule
\multirow{4}{*}{\texttt{Carryout Takeaway}}
 & QC=$0.001$ & 0.28 $\pm$ 0.14 & 12.4 $\pm$ 8.4 & \textbf{0.371 $\pm$ 0.007} & \textbf{3.14 $\pm$ 0.23} \\
 & QC=$0.01$ & 0.26 $\pm$ 0.12 & 12.4 $\pm$ 8.4 & \textbf{0.371 $\pm$ 0.007} & \textbf{3.14 $\pm$ 0.23} \\
 & QC=$0.1$ & 0.21 $\pm$ 0.09 & 12.4 $\pm$ 8.4 & \textbf{0.371 $\pm$ 0.007} & \textbf{3.14 $\pm$ 0.23} \\
 & QC=$1.0$ & \textbf{0.16 $\pm$ 0.07} & 12.4 $\pm$ 8.4 & \textbf{0.371 $\pm$ 0.007} & \textbf{3.14 $\pm$ 0.23} \\
\midrule
\multirow{4}{*}{\texttt{Coffee House}}
 & QC=$0.001$ & 0.70 $\pm$ 0.22 & 77.2 $\pm$ 89.2 & \textbf{0.300 $\pm$ 0.009} & \textbf{3.57 $\pm$ 0.07} \\
 & QC=$0.01$ & 0.66 $\pm$ 0.21 & 77.2 $\pm$ 89.2 & \textbf{0.300 $\pm$ 0.009} & \textbf{3.57 $\pm$ 0.07} \\
 & QC=$0.1$ & 0.49 $\pm$ 0.15 & 77.2 $\pm$ 89.2 & \textbf{0.300 $\pm$ 0.009} & \textbf{3.57 $\pm$ 0.07} \\
 & QC=$1.0$ & \textbf{0.27 $\pm$ 0.09} & 77.2 $\pm$ 89.2 & \textbf{0.300 $\pm$ 0.009} & \textbf{3.57 $\pm$ 0.07} \\
\midrule
\multirow{4}{*}{\texttt{COMPAS}}
 & QC=$0.001$ & 54.91 $\pm$ 19.18 & 1248.4 $\pm$ 860.9 & \textbf{0.331 $\pm$ 0.012} & \textbf{2.99 $\pm$ 0.05} \\
 & QC=$0.01$ & 50.85 $\pm$ 18.07 & 1248.4 $\pm$ 860.9 & \textbf{0.331 $\pm$ 0.012} & \textbf{2.99 $\pm$ 0.05} \\
 & QC=$0.1$ & 41.99 $\pm$ 15.03 & 1230.2 $\pm$ 866.3 & \textbf{0.331 $\pm$ 0.012} & 2.98 $\pm$ 0.05 \\
 & QC=$1.0$ & \textbf{33.54 $\pm$ 12.97} & 1227.2 $\pm$ 861.4 & \textbf{0.331 $\pm$ 0.012} & 2.98 $\pm$ 0.05 \\
\midrule
\multirow{4}{*}{\texttt{Heart}}
 & QC=$0.001$ & 0.98 $\pm$ 0.60 & 1.0 $\pm$ 0.0 & \textbf{0.190 $\pm$ 0.025} & \textbf{3.70 $\pm$ 0.25} \\
 & QC=$0.01$ & 0.95 $\pm$ 0.59 & 1.0 $\pm$ 0.0 & \textbf{0.190 $\pm$ 0.025} & \textbf{3.70 $\pm$ 0.25} \\
 & QC=$0.1$ & 0.77 $\pm$ 0.46 & 1.0 $\pm$ 0.0 & \textbf{0.190 $\pm$ 0.025} & \textbf{3.70 $\pm$ 0.25} \\
 & QC=$1.0$ & \textbf{0.53 $\pm$ 0.32} & 1.0 $\pm$ 0.0 & \textbf{0.190 $\pm$ 0.025} & \textbf{3.70 $\pm$ 0.25} \\
\midrule
\multirow{4}{*}{\texttt{MGH}}
 & QC=$0.001$ & 0.60 $\pm$ 0.42 & 2.6 $\pm$ 0.5 & \textbf{0.061 $\pm$ 0.005} & \textbf{2.04 $\pm$ 0.03} \\
 & QC=$0.01$ & 0.57 $\pm$ 0.40 & 2.6 $\pm$ 0.5 & \textbf{0.061 $\pm$ 0.005} & \textbf{2.04 $\pm$ 0.03} \\
 & QC=$0.1$ & 0.45 $\pm$ 0.30 & 2.6 $\pm$ 0.5 & \textbf{0.061 $\pm$ 0.005} & \textbf{2.04 $\pm$ 0.03} \\
 & QC=$1.0$ & \textbf{0.26 $\pm$ 0.17} & 2.6 $\pm$ 0.5 & \textbf{0.061 $\pm$ 0.005} & \textbf{2.04 $\pm$ 0.03} \\
\midrule
\multirow{4}{*}{\texttt{Monks1}}
 & QC=$0.001$ & 0.012 $\pm$ 0.005 & 12.0 $\pm$ 6.2 & \textbf{0.178 $\pm$ 0.110} & \textbf{3.16 $\pm$ 0.32} \\
 & QC=$0.01$ & 0.012 $\pm$ 0.005 & 12.0 $\pm$ 6.2 & \textbf{0.178 $\pm$ 0.110} & \textbf{3.16 $\pm$ 0.32} \\
 & QC=$0.1$ & 0.012 $\pm$ 0.005 & 12.0 $\pm$ 6.2 & \textbf{0.178 $\pm$ 0.110} & \textbf{3.16 $\pm$ 0.32} \\
 & QC=$1.0$ & \textbf{0.010 $\pm$ 0.004} & 10.2 $\pm$ 5.0 & \textbf{0.178 $\pm$ 0.110} & 3.13 $\pm$ 0.32 \\
\midrule
\multirow{4}{*}{\texttt{Monks2}}
 & QC=$0.001$ & 0.014 $\pm$ 0.006 & 1.0 $\pm$ 0.0 & 0.441 $\pm$ 0.085 & 3.51 $\pm$ 0.22 \\
 & QC=$0.01$ & 0.014 $\pm$ 0.006 & 1.0 $\pm$ 0.0 & 0.441 $\pm$ 0.085 & 3.51 $\pm$ 0.22 \\
 & QC=$0.1$ & 0.013 $\pm$ 0.006 & 1.0 $\pm$ 0.0 & 0.441 $\pm$ 0.085 & 3.51 $\pm$ 0.22 \\
 & QC=$1.0$ & \textbf{0.012 $\pm$ 0.005} & 1.0 $\pm$ 0.0 & \textbf{0.429 $\pm$ 0.066} & \textbf{3.57 $\pm$ 0.35} \\
\midrule
\multirow{4}{*}{\texttt{Monks3}}
 & QC=$0.001$ & 0.004 $\pm$ 0.003 & 1.0 $\pm$ 0.0 & \textbf{0.096 $\pm$ 0.090} & \textbf{3.73 $\pm$ 0.29} \\
 & QC=$0.01$ & 0.005 $\pm$ 0.003 & 1.0 $\pm$ 0.0 & \textbf{0.096 $\pm$ 0.090} & \textbf{3.73 $\pm$ 0.29} \\
 & QC=$0.1$ & 0.004 $\pm$ 0.003 & 1.0 $\pm$ 0.0 & \textbf{0.096 $\pm$ 0.090} & \textbf{3.73 $\pm$ 0.29} \\
 & QC=$1.0$ & \textbf{0.003 $\pm$ 0.002} & 1.4 $\pm$ 0.8 & 0.128 $\pm$ 0.093 & 3.50 $\pm$ 0.52 \\
\midrule
\multirow{4}{*}{\texttt{Mushroom}}
 & QC=$0.001$ & 0.037 $\pm$ 0.006 & 1.0 $\pm$ 0.0 & \textbf{0.030 $\pm$ 0.002} & \textbf{2.44 $\pm$ 0.01} \\
 & QC=$0.01$ & 0.037 $\pm$ 0.006 & 1.0 $\pm$ 0.0 & \textbf{0.030 $\pm$ 0.002} & \textbf{2.44 $\pm$ 0.01} \\
 & QC=$0.1$ & 0.035 $\pm$ 0.006 & 1.0 $\pm$ 0.0 & \textbf{0.030 $\pm$ 0.002} & \textbf{2.44 $\pm$ 0.01} \\
 & QC=$1.0$ & \textbf{0.024 $\pm$ 0.004} & 1.0 $\pm$ 0.0 & \textbf{0.030 $\pm$ 0.002} & \textbf{2.44 $\pm$ 0.01} \\
\midrule
\multirow{4}{*}{\texttt{Restaurant 20}}
 & QC=$0.001$ & 0.49 $\pm$ 0.17 & 3.4 $\pm$ 1.4 & \textbf{0.298 $\pm$ 0.010} & \textbf{1.71 $\pm$ 0.20} \\
 & QC=$0.01$ & 0.46 $\pm$ 0.16 & 3.4 $\pm$ 1.4 & \textbf{0.298 $\pm$ 0.010} & \textbf{1.71 $\pm$ 0.20} \\
 & QC=$0.1$ & 0.39 $\pm$ 0.15 & 3.4 $\pm$ 1.4 & \textbf{0.298 $\pm$ 0.010} & \textbf{1.71 $\pm$ 0.20} \\
 & QC=$1.0$ & \textbf{0.31 $\pm$ 0.12} & 3.4 $\pm$ 1.4 & \textbf{0.298 $\pm$ 0.010} & \textbf{1.71 $\pm$ 0.20} \\
\midrule
\multirow{4}{*}{\texttt{Spambase}}
 & QC=$0.001$ & 56.41 $\pm$ 3.26 & 1.0 $\pm$ 0.0 & \textbf{0.105 $\pm$ 0.011} & 3.54 $\pm$ 0.08 \\
 & QC=$0.01$ & 52.39 $\pm$ 3.03 & 1.0 $\pm$ 0.0 & \textbf{0.105 $\pm$ 0.011} & 3.54 $\pm$ 0.08 \\
 & QC=$0.1$ & 36.46 $\pm$ 2.13 & 1.0 $\pm$ 0.0 & \textbf{0.105 $\pm$ 0.011} & 3.54 $\pm$ 0.08 \\
 & QC=$1.0$ & \textbf{20.41 $\pm$ 2.09} & 1.0 $\pm$ 0.0 & 0.112 $\pm$ 0.006 & \textbf{3.62 $\pm$ 0.06} \\
\midrule
\multirow{4}{*}{\texttt{Student}}
 & QC=$0.001$ & 1.36 $\pm$ 0.85 & 1.0 $\pm$ 0.0 & \textbf{0.192 $\pm$ 0.046} & \textbf{3.64 $\pm$ 0.23} \\
 & QC=$0.01$ & 1.32 $\pm$ 0.83 & 1.0 $\pm$ 0.0 & \textbf{0.192 $\pm$ 0.046} & \textbf{3.64 $\pm$ 0.23} \\
 & QC=$0.1$ & 1.08 $\pm$ 0.66 & 1.0 $\pm$ 0.0 & \textbf{0.192 $\pm$ 0.046} & \textbf{3.64 $\pm$ 0.23} \\
 & QC=$1.0$ & \textbf{0.65 $\pm$ 0.40} & 1.0 $\pm$ 0.0 & \textbf{0.192 $\pm$ 0.046} & \textbf{3.64 $\pm$ 0.23} \\
\midrule
\multirow{4}{*}{\texttt{Wine}}
 & QC=$0.001$ & 263.23 $\pm$ 108.02 & 530.4 $\pm$ 520.1 & \textbf{0.286 $\pm$ 0.016} & \textbf{3.39 $\pm$ 0.08} \\
 & QC=$0.01$ & 242.62 $\pm$ 98.18 & 530.4 $\pm$ 520.1 & \textbf{0.286 $\pm$ 0.016} & \textbf{3.39 $\pm$ 0.08} \\
 & QC=$0.1$ & 199.48 $\pm$ 81.09 & 530.4 $\pm$ 520.1 & \textbf{0.286 $\pm$ 0.016} & \textbf{3.39 $\pm$ 0.08} \\
 & QC=$1.0$ & \textbf{130.14 $\pm$ 48.85} & 536.2 $\pm$ 515.3 & \textbf{0.286 $\pm$ 0.016} & \textbf{3.39 $\pm$ 0.08} \\
\midrule
 
 \caption{Ablation on different quantizations for the cache given $\mu=0.01$. Values are mean $\pm$ std over $5$ train-test trials.}
\end{longtable}
\end{small}
In this regime, quantized caching affects the Rashomon set size in two opposing ways. 
\begin{itemize}
    \item It can result in solving the global optimal tree problem (Alg 2) to suboptimality. Since the Rashomon bound depends on the training objective of the optimal tree, quantized caching results in a larger Rashomon bound.
    \item It can result in child subproblems in the search graph induced by Alg 1 being solved to suboptimality. This can result in trees being left out of the Rashomon set, because the remaining budget decreases (e.g. lines $19$, $26$ in Alg \ref{alg:rset_frl}). The practical effect on the Rashomon set size depends on the relative strength of these forces.
\end{itemize}
This is the reason why the Rashomon set size is sometimes inconsistent as $q$ increases (e.g. in it goes down in COMPAS but goes up in Wine).
\subsection{Quantized Caching Ablation: Absolute Rashomon Bound}
In this experiment, we first train an optimal falling tree using Algorithm \ref{alg:opt_falling_tree} with no quantization. All runs of \gravitree with different values of $q$ are provided the same Rashomon corresponding to $(1+\eps)\times$ loss of best tree with no quantized caching.
\setlength{\tabcolsep}{5pt} 
\begin{small}
\begin{longtable}{llrrrr}
\label{tab:ablation_qcache}\\

\toprule
\textbf{Dataset} & \textbf{Config} & \textbf{Runtime (s)} & \textbf{R-set size} & \textbf{Avg. loss} & \textbf{Avg. sparsity} \\
\midrule
\endfirsthead

\toprule
\textbf{Dataset} & \textbf{Config} & \textbf{Runtime (s)} & \textbf{R-set size} & \textbf{Avg. loss} & \textbf{Avg. sparsity} \\
\midrule
\endhead

\midrule \multicolumn{6}{r}{\emph{Continued on next page}}\\
\midrule
\endfoot

\bottomrule
\endlastfoot
 \multirow{4}{*}{\texttt{NIJ Recidivism}} & QC=$0.001$ & 7.04 $\pm$ 0.60 & 227.4 $\pm$ 26.9 & \textbf{0.305 $\pm$ 0.003} & 2.91 $\pm$ 0.04 \\
  & QC=$0.01$ & 5.99 $\pm$ 0.46 & 227.4 $\pm$ 26.9 & \textbf{0.305 $\pm$ 0.003} & 2.91 $\pm$ 0.04 \\
  & QC=$0.1$ & 4.56 $\pm$ 0.34 & 225.8 $\pm$ 24.9 & \textbf{0.305 $\pm$ 0.003} & 2.91 $\pm$ 0.04 \\
  & QC=$1.0$ & \textbf{4.24 $\pm$ 0.49} & 225.6 $\pm$ 26.8 & \textbf{0.305 $\pm$ 0.003} & \textbf{2.92 $\pm$ 0.05} \\
\midrule
 \multirow{4}{*}{\texttt{Adult}} & QC=$0.001$ & 15.03 $\pm$ 3.45 & 1.0 $\pm$ 0.0 & \textbf{0.204 $\pm$ 0.003} & \textbf{3.85 $\pm$ 0.21} \\
  & QC=$0.01$ & 12.98 $\pm$ 3.04 & 1.0 $\pm$ 0.0 & \textbf{0.204 $\pm$ 0.003} & \textbf{3.85 $\pm$ 0.21} \\
  & QC=$0.1$ & 10.67 $\pm$ 2.57 & 1.0 $\pm$ 0.0 & \textbf{0.204 $\pm$ 0.003} & \textbf{3.85 $\pm$ 0.21} \\
  & QC=$1.0$ & \textbf{6.41 $\pm$ 1.45} & 1.0 $\pm$ 0.0 & \textbf{0.204 $\pm$ 0.003} & \textbf{3.85 $\pm$ 0.21} \\
\midrule
 \multirow{4}{*}{\texttt{Aging}} & QC=$0.001$ & 1.69 $\pm$ 1.37 & 57.0 $\pm$ 80.8 & 0.385 $\pm$ 0.018 & \textbf{3.83 $\pm$ 0.09} \\
  & QC=$0.01$ & 1.60 $\pm$ 1.32 & 57.0 $\pm$ 80.8 & 0.385 $\pm$ 0.018 & \textbf{3.83 $\pm$ 0.09} \\
  & QC=$0.1$ & 1.14 $\pm$ 0.84 & 49.0 $\pm$ 84.2 & 0.381 $\pm$ 0.014 & 3.77 $\pm$ 0.13 \\
  & QC=$1.0$ & \textbf{0.66 $\pm$ 0.41} & 41.0 $\pm$ 68.3 & \textbf{0.375 $\pm$ 0.017} & 3.81 $\pm$ 0.09 \\
\midrule
 \multirow{4}{*}{\texttt{Bar7}} & QC=$0.001$ & 0.039 $\pm$ 0.015 & 4.6 $\pm$ 2.2 & \textbf{0.309 $\pm$ 0.015} & \textbf{2.46 $\pm$ 0.44} \\
  & QC=$0.01$ & 0.036 $\pm$ 0.014 & 4.6 $\pm$ 2.2 & \textbf{0.309 $\pm$ 0.015} & \textbf{2.46 $\pm$ 0.44} \\
  & QC=$0.1$ & 0.028 $\pm$ 0.011 & 4.6 $\pm$ 2.2 & \textbf{0.309 $\pm$ 0.015} & \textbf{2.46 $\pm$ 0.44} \\
  & QC=$1.0$ & \textbf{0.019 $\pm$ 0.008} & 4.2 $\pm$ 1.9 & \textbf{0.309 $\pm$ 0.015} & 2.42 $\pm$ 0.41 \\
\midrule
 \multirow{4}{*}{\texttt{Bar}} & QC=$0.001$ & 0.24 $\pm$ 0.06 & 8.8 $\pm$ 2.6 & \textbf{0.299 $\pm$ 0.014} & \textbf{2.97 $\pm$ 0.05} \\
  & QC=$0.01$ & 0.23 $\pm$ 0.06 & 8.8 $\pm$ 2.6 & \textbf{0.299 $\pm$ 0.014} & \textbf{2.97 $\pm$ 0.05} \\
  & QC=$0.1$ & 0.18 $\pm$ 0.05 & 8.8 $\pm$ 2.6 & \textbf{0.299 $\pm$ 0.014} & \textbf{2.97 $\pm$ 0.05} \\
  & QC=$1.0$ & \textbf{0.11 $\pm$ 0.03} & 8.8 $\pm$ 2.6 & \textbf{0.299 $\pm$ 0.014} & 2.96 $\pm$ 0.04 \\
\midrule
 \multirow{4}{*}{\texttt{BCW}} & QC=$0.001$ & 0.028 $\pm$ 0.005 & 1.0 $\pm$ 0.0 & \textbf{0.070 $\pm$ 0.012} & \textbf{4.07 $\pm$ 0.37} \\
  & QC=$0.01$ & 0.027 $\pm$ 0.005 & 1.0 $\pm$ 0.0 & \textbf{0.070 $\pm$ 0.012} & \textbf{4.07 $\pm$ 0.37} \\
  & QC=$0.1$ & 0.024 $\pm$ 0.005 & 1.0 $\pm$ 0.0 & \textbf{0.070 $\pm$ 0.012} & \textbf{4.07 $\pm$ 0.37} \\
  & QC=$1.0$ & \textbf{0.021 $\pm$ 0.004} & 1.0 $\pm$ 0.0 & \textbf{0.070 $\pm$ 0.012} & \textbf{4.07 $\pm$ 0.37} \\
\midrule
 \multirow{4}{*}{\texttt{Bike}} & QC=$0.001$ & 64.07 $\pm$ 6.85 & 1.0 $\pm$ 0.0 & \textbf{0.152 $\pm$ 0.004} & \textbf{3.21 $\pm$ 0.12} \\
  & QC=$0.01$ & 63.36 $\pm$ 6.80 & 1.0 $\pm$ 0.0 & \textbf{0.152 $\pm$ 0.004} & \textbf{3.21 $\pm$ 0.12} \\
  & QC=$0.1$ & 60.58 $\pm$ 6.74 & 1.0 $\pm$ 0.0 & \textbf{0.152 $\pm$ 0.004} & \textbf{3.21 $\pm$ 0.12} \\
  & QC=$1.0$ & \textbf{47.78 $\pm$ 4.87} & 1.0 $\pm$ 0.0 & \textbf{0.152 $\pm$ 0.004} & \textbf{3.21 $\pm$ 0.12} \\
\midrule
 \multirow{4}{*}{\texttt{Car Evaluation}} & QC=$0.001$ & 0.063 $\pm$ 0.017 & 1.0 $\pm$ 0.0 & \textbf{0.141 $\pm$ 0.015} & \textbf{2.09 $\pm$ 0.19} \\
  & QC=$0.01$ & 0.061 $\pm$ 0.017 & 1.0 $\pm$ 0.0 & \textbf{0.141 $\pm$ 0.015} & \textbf{2.09 $\pm$ 0.19} \\
  & QC=$0.1$ & 0.054 $\pm$ 0.014 & 1.0 $\pm$ 0.0 & \textbf{0.141 $\pm$ 0.015} & \textbf{2.09 $\pm$ 0.19} \\
  & QC=$1.0$ & \textbf{0.037 $\pm$ 0.009} & 1.0 $\pm$ 0.0 & \textbf{0.141 $\pm$ 0.015} & \textbf{2.09 $\pm$ 0.19} \\
\midrule
 \multirow{4}{*}{\texttt{Carryout Takeaway}} & QC=$0.001$ & 0.27 $\pm$ 0.14 & 12.4 $\pm$ 8.4 & \textbf{0.371 $\pm$ 0.007} & \textbf{3.14 $\pm$ 0.23} \\
  & QC=$0.01$ & 0.25 $\pm$ 0.13 & 12.4 $\pm$ 8.4 & \textbf{0.371 $\pm$ 0.007} & \textbf{3.14 $\pm$ 0.23} \\
  & QC=$0.1$ & 0.21 $\pm$ 0.09 & 12.4 $\pm$ 8.4 & \textbf{0.371 $\pm$ 0.007} & \textbf{3.14 $\pm$ 0.23} \\
  & QC=$1.0$ & \textbf{0.16 $\pm$ 0.07} & 12.4 $\pm$ 8.4 & \textbf{0.371 $\pm$ 0.007} & \textbf{3.14 $\pm$ 0.23} \\
\midrule
 \multirow{4}{*}{\texttt{Coffee House}} & QC=$0.001$ & 0.70 $\pm$ 0.23 & 77.2 $\pm$ 89.2 & \textbf{0.300 $\pm$ 0.009} & \textbf{3.57 $\pm$ 0.07} \\
  & QC=$0.01$ & 0.66 $\pm$ 0.20 & 77.2 $\pm$ 89.2 & \textbf{0.300 $\pm$ 0.009} & \textbf{3.57 $\pm$ 0.07} \\
  & QC=$0.1$ & 0.49 $\pm$ 0.15 & 77.2 $\pm$ 89.2 & \textbf{0.300 $\pm$ 0.009} & \textbf{3.57 $\pm$ 0.07} \\
  & QC=$1.0$ & \textbf{0.27 $\pm$ 0.09} & 77.2 $\pm$ 89.2 & \textbf{0.300 $\pm$ 0.009} & \textbf{3.57 $\pm$ 0.07} \\
\midrule
 \multirow{4}{*}{\texttt{COMPAS}} & QC=$0.001$ & 54.19 $\pm$ 19.10 & 1248.4 $\pm$ 860.9 & \textbf{0.331 $\pm$ 0.012} & \textbf{2.99 $\pm$ 0.05} \\
  & QC=$0.01$ & 50.34 $\pm$ 17.99 & 1248.4 $\pm$ 860.9 & \textbf{0.331 $\pm$ 0.012} & \textbf{2.99 $\pm$ 0.05} \\
  & QC=$0.1$ & 41.61 $\pm$ 14.92 & 1230.2 $\pm$ 866.3 & \textbf{0.331 $\pm$ 0.012} & 2.98 $\pm$ 0.05 \\
  & QC=$1.0$ & \textbf{33.20 $\pm$ 12.95} & 1227.2 $\pm$ 861.4 & \textbf{0.331 $\pm$ 0.012} & 2.98 $\pm$ 0.05 \\
\midrule
 \multirow{4}{*}{\texttt{FICO}} & QC=$0.001$ & 2416.31 $\pm$ 905.35 & 154.4 $\pm$ 144.9 & \textbf{0.296 $\pm$ 0.010} & \textbf{3.12 $\pm$ 0.20} \\
  & QC=$0.01$ & 2194.87 $\pm$ 851.91 & 154.4 $\pm$ 144.9 & \textbf{0.296 $\pm$ 0.010} & \textbf{3.12 $\pm$ 0.20} \\
  & QC=$0.1$ & 1598.19 $\pm$ 520.68 & 154.4 $\pm$ 144.9 & \textbf{0.296 $\pm$ 0.010} & 3.11 $\pm$ 0.17 \\
  & QC=$1.0$ & \textbf{1501.52 $\pm$ 377.98} & 154.4 $\pm$ 144.9 & \textbf{0.296 $\pm$ 0.010} & 3.11 $\pm$ 0.17 \\
\midrule
 \multirow{4}{*}{\texttt{Heart}} & QC=$0.001$ & 1.03 $\pm$ 0.63 & 1.0 $\pm$ 0.0 & \textbf{0.190 $\pm$ 0.025} & \textbf{3.70 $\pm$ 0.25} \\
  & QC=$0.01$ & 1.00 $\pm$ 0.63 & 1.0 $\pm$ 0.0 & \textbf{0.190 $\pm$ 0.025} & \textbf{3.70 $\pm$ 0.25} \\
  & QC=$0.1$ & 0.80 $\pm$ 0.49 & 1.0 $\pm$ 0.0 & \textbf{0.190 $\pm$ 0.025} & \textbf{3.70 $\pm$ 0.25} \\
  & QC=$1.0$ & \textbf{0.55 $\pm$ 0.34} & 1.0 $\pm$ 0.0 & \textbf{0.190 $\pm$ 0.025} & \textbf{3.70 $\pm$ 0.25} \\
\midrule
 \multirow{4}{*}{\texttt{MGH}} & QC=$0.001$ & 0.79 $\pm$ 0.56 & 2.6 $\pm$ 0.5 & \textbf{0.061 $\pm$ 0.005} & \textbf{2.04 $\pm$ 0.03} \\
  & QC=$0.01$ & 0.75 $\pm$ 0.52 & 2.6 $\pm$ 0.5 & \textbf{0.061 $\pm$ 0.005} & \textbf{2.04 $\pm$ 0.03} \\
  & QC=$0.1$ & 0.59 $\pm$ 0.40 & 2.6 $\pm$ 0.5 & \textbf{0.061 $\pm$ 0.005} & \textbf{2.04 $\pm$ 0.03} \\
  & QC=$1.0$ & \textbf{0.34 $\pm$ 0.22} & 2.6 $\pm$ 0.5 & \textbf{0.061 $\pm$ 0.005} & \textbf{2.04 $\pm$ 0.03} \\
\midrule
 \multirow{4}{*}{\texttt{Monks1}} & QC=$0.001$ & 0.015 $\pm$ 0.006 & 12.0 $\pm$ 6.2 & \textbf{0.178 $\pm$ 0.110} & \textbf{3.16 $\pm$ 0.32} \\
  & QC=$0.01$ & 0.014 $\pm$ 0.005 & 12.0 $\pm$ 6.2 & \textbf{0.178 $\pm$ 0.110} & \textbf{3.16 $\pm$ 0.32} \\
  & QC=$0.1$ & 0.014 $\pm$ 0.005 & 12.0 $\pm$ 6.2 & \textbf{0.178 $\pm$ 0.110} & \textbf{3.16 $\pm$ 0.32} \\
  & QC=$1.0$ & \textbf{0.012 $\pm$ 0.004} & 10.2 $\pm$ 5.0 & \textbf{0.178 $\pm$ 0.110} & 3.13 $\pm$ 0.32 \\
\midrule
 \multirow{4}{*}{\texttt{Monks2}} & QC=$0.001$ & 0.016 $\pm$ 0.007 & 1.0 $\pm$ 0.0 & 0.441 $\pm$ 0.085 & 3.51 $\pm$ 0.22 \\
  & QC=$0.01$ & 0.016 $\pm$ 0.007 & 1.0 $\pm$ 0.0 & 0.441 $\pm$ 0.085 & 3.51 $\pm$ 0.22 \\
  & QC=$0.1$ & 0.015 $\pm$ 0.006 & 1.0 $\pm$ 0.0 & 0.441 $\pm$ 0.085 & 3.51 $\pm$ 0.22 \\
  & QC=$1.0$ & \textbf{0.014 $\pm$ 0.007} & 1.0 $\pm$ 0.0 & \textbf{0.429 $\pm$ 0.066} & \textbf{3.57 $\pm$ 0.35} \\
\midrule
 \multirow{4}{*}{\texttt{Monks3}} & QC=$0.001$ & 0.005 $\pm$ 0.003 & 1.0 $\pm$ 0.0 & \textbf{0.096 $\pm$ 0.090} & \textbf{3.73 $\pm$ 0.29} \\
  & QC=$0.01$ & 0.005 $\pm$ 0.003 & 1.0 $\pm$ 0.0 & \textbf{0.096 $\pm$ 0.090} & \textbf{3.73 $\pm$ 0.29} \\
  & QC=$0.1$ & 0.005 $\pm$ 0.003 & 1.0 $\pm$ 0.0 & \textbf{0.096 $\pm$ 0.090} & \textbf{3.73 $\pm$ 0.29} \\
  & QC=$1.0$ & \textbf{0.003 $\pm$ 0.001} & 1.0 $\pm$ 0.0 & 0.128 $\pm$ 0.093 & 3.62 $\pm$ 0.35 \\
\midrule
 \multirow{4}{*}{\texttt{Mushroom}} & QC=$0.001$ & 0.037 $\pm$ 0.006 & 1.0 $\pm$ 0.0 & \textbf{0.030 $\pm$ 0.002} & \textbf{2.44 $\pm$ 0.01} \\
  & QC=$0.01$ & 0.037 $\pm$ 0.006 & 1.0 $\pm$ 0.0 & \textbf{0.030 $\pm$ 0.002} & \textbf{2.44 $\pm$ 0.01} \\
  & QC=$0.1$ & 0.034 $\pm$ 0.006 & 1.0 $\pm$ 0.0 & \textbf{0.030 $\pm$ 0.002} & \textbf{2.44 $\pm$ 0.01} \\
  & QC=$1.0$ & \textbf{0.024 $\pm$ 0.004} & 1.0 $\pm$ 0.0 & \textbf{0.030 $\pm$ 0.002} & \textbf{2.44 $\pm$ 0.01} \\
\midrule
 \multirow{4}{*}{\texttt{Restaurant 20}} & QC=$0.001$ & 0.50 $\pm$ 0.17 & 3.4 $\pm$ 1.4 & \textbf{0.298 $\pm$ 0.010} & \textbf{1.71 $\pm$ 0.20} \\
  & QC=$0.01$ & 0.47 $\pm$ 0.16 & 3.4 $\pm$ 1.4 & \textbf{0.298 $\pm$ 0.010} & \textbf{1.71 $\pm$ 0.20} \\
  & QC=$0.1$ & 0.40 $\pm$ 0.15 & 3.4 $\pm$ 1.4 & \textbf{0.298 $\pm$ 0.010} & \textbf{1.71 $\pm$ 0.20} \\
  & QC=$1.0$ & \textbf{0.31 $\pm$ 0.13} & 3.4 $\pm$ 1.4 & \textbf{0.298 $\pm$ 0.010} & \textbf{1.71 $\pm$ 0.20} \\
\midrule
 \multirow{4}{*}{\texttt{Spambase}} & QC=$0.001$ & 59.44 $\pm$ 3.56 & 1.0 $\pm$ 0.0 & \textbf{0.105 $\pm$ 0.011} & 3.54 $\pm$ 0.08 \\
  & QC=$0.01$ & 55.30 $\pm$ 3.51 & 1.0 $\pm$ 0.0 & \textbf{0.105 $\pm$ 0.011} & 3.54 $\pm$ 0.08 \\
  & QC=$0.1$ & 38.48 $\pm$ 2.32 & 1.0 $\pm$ 0.0 & \textbf{0.105 $\pm$ 0.011} & 3.54 $\pm$ 0.08 \\
  & QC=$1.0$ & \textbf{20.44 $\pm$ 1.69} & 1.0 $\pm$ 0.0 & 0.112 $\pm$ 0.006 & \textbf{3.62 $\pm$ 0.06} \\
\midrule
 \multirow{4}{*}{\texttt{Student}} & QC=$0.001$ & 1.46 $\pm$ 0.91 & 1.0 $\pm$ 0.0 & \textbf{0.192 $\pm$ 0.046} & \textbf{3.64 $\pm$ 0.23} \\
  & QC=$0.01$ & 1.40 $\pm$ 0.87 & 1.0 $\pm$ 0.0 & \textbf{0.192 $\pm$ 0.046} & \textbf{3.64 $\pm$ 0.23} \\
  & QC=$0.1$ & 1.13 $\pm$ 0.69 & 1.0 $\pm$ 0.0 & \textbf{0.192 $\pm$ 0.046} & \textbf{3.64 $\pm$ 0.23} \\
  & QC=$1.0$ & \textbf{0.68 $\pm$ 0.41} & 1.0 $\pm$ 0.0 & \textbf{0.192 $\pm$ 0.046} & \textbf{3.64 $\pm$ 0.23} \\
\midrule
 \multirow{4}{*}{\texttt{Wine}} & QC=$0.001$ & 282.50 $\pm$ 114.86 & 530.4 $\pm$ 520.1 & \textbf{0.286 $\pm$ 0.016} & \textbf{3.39 $\pm$ 0.08} \\
  & QC=$0.01$ & 259.02 $\pm$ 103.83 & 530.4 $\pm$ 520.1 & \textbf{0.286 $\pm$ 0.016} & \textbf{3.39 $\pm$ 0.08} \\
  & QC=$0.1$ & 213.41 $\pm$ 85.69 & 530.4 $\pm$ 520.1 & \textbf{0.286 $\pm$ 0.016} & \textbf{3.39 $\pm$ 0.08} \\
  & QC=$1.0$ & \textbf{139.75 $\pm$ 51.93} & 529.0 $\pm$ 519.0 & \textbf{0.286 $\pm$ 0.016} & \textbf{3.39 $\pm$ 0.09} \\
   \caption{Ablation on different quantizations for the cache given $\mu=0.01$. Values are mean $\pm$ std over $5$ train-test trials.}
\end{longtable}

\end{small}



\section{Comparison with Contemporary Rashomon Set Approaches}\label{app:contemporary_rset_approaches}

\subsection{Rationale for hyperparameter choices in the main paper}
Table \ref{tab:treefarms_vs_gravitree} shows that the Rashomon set output by \gravitree can achieve a favourable tradeoff between the overall test loss, test sparsity, and the test loss on positive examples compared to TreeFARMS, an unconstrained, exact method for finding the Rashomon set.\\

Since both Rashomon sets are with respect to different objectives, making direct comparisons difficult, we discuss here the rationale for our specific hyperparameter choices, in the hope that they seem as fair as possible.
\begin{itemize}
    \item For \gravitree, we set $\mu = 0.01$, $\alpha = 0.05$, $d = 5$, and $\varepsilon = 0.01$. We empirically observed that $\mu = 0.01$ leads to modest branching, ensuring that trees remain expressive without being siloed into rule lists (e.g. see the middle tree in Figure \ref{fig:four_trees}). 
    \item For TreeFARMS, we set depth $d=5$. Technically, TreeFARMS has an off by one depth budget relative to ours. We correct for this by inputting $d = 6$ as an argument in these methods, but the trees output are actually of depth $5$ (where depth $0$ is the root). 
    \item We needed to determine a value for $\lambda$ that is somewhat comparable to $\mu$ and $\alpha$. Intuitively, $\mu$ represents the minimum reduction in loss a subtree must incur relative to a leaf in order to be accepted into the tree. Likewise, the $\lambda$ penalty in TreeFARMS represents the minimum loss reduction that must occur in order to make a new split (i.e., add a new leaf). Since branching out can result in \textbf{several} additional leaves, this requires $\mu$ to be somewhat larger than $\lambda$ to
    ensure that both model classes have similar expressivity. 
    \item We refer to the same rationale for SORTeD \cite{arslan2025sorted}. For both TreeFARMS and SORTeD, we perform experiments with $\mu = 0.01$ and $2$ $\lambda$ values ($0.005$ and $0.002$), as we may not know which values of $\mu$ and $\lambda$ result in trees with the most similar sparsity levels.
\end{itemize}

\subsection{TreeFARMS \cite{treefarms}}
As we saw in Table \ref{tab:treefarms_vs_gravitree}, \gravitree is comparable in overall test loss compared to TreeFARMS, but compensates for that with comparable or improved sparsity and false negative rate (FNR). We observe similar trends for $\lambda = 0.002$ as well (we included the results for $\lambda = 0.005$ from the main paper).
\begin{table}[H]
    \centering
    \label{tab:gravitree-vs-tf_loss_l.005}
    \begin{tabular}{lcccc}
        \toprule
        & \multicolumn{2}{c}{\textbf{Rashomon Set Size}} & \multicolumn{2}{c}{\textbf{Test Loss ($\downarrow$)}} \\
        \cmidrule(r){2-3} \cmidrule(l){4-5}
        \textbf{Dataset} & \textbf{\gravitree} & \textbf{TreeFARMS} & \textbf{\gravitree} & \textbf{TreeFARMS} \\
        \midrule
        NIJ Recidivism & $16.2 \pm 7.0$ & $4141.8 \pm 1266.5$ & $0.308 \pm 0.007$ & $\mathbf{0.305 \pm 0.002}$ \\
        Adult & $1.0 \pm 0.0$ & $120.4 \pm 81.3$ & $0.293 \pm 0.005$ & $\mathbf{0.240 \pm 0.010}$ \\
        Aging & $7.8 \pm 7.5$ & $1720.8 \pm 991.7$ & $0.387 \pm 0.044$ & $\mathbf{0.378 \pm 0.019}$ \\
        Bar7 & $2.0 \pm 1.7$ & $49.2 \pm 42.7$ & $0.311 \pm 0.013$ & $\mathbf{0.292 \pm 0.007}$ \\
        Bar & $7.6 \pm 2.3$ & $136.2 \pm 56.9$ & $0.298 \pm 0.015$ & $\mathbf{0.285 \pm 0.007}$ \\
        BCW & $1.0 \pm 0.0$ & $32.4 \pm 28.6$ & $0.081 \pm 0.017$ & $\mathbf{0.071 \pm 0.009}$ \\
        Bike & $1.0 \pm 0.0$ & $357.8 \pm 194.7$ & $0.175 \pm 0.006$ & $\mathbf{0.148 \pm 0.006}$ \\
        Car Evaluation & $1.0 \pm 0.0$ & $2.0 \pm 0.0$ & $0.242 \pm 0.016$ & $\mathbf{0.141 \pm 0.015}$ \\
        Carryout Takeaway & $1.8 \pm 1.6$ & $42.6 \pm 33.2$ & $\mathbf{0.360 \pm 0.026}$ & $0.394 \pm 0.022$ \\
        Coffee House & $14.4 \pm 18.5$ & $292.8 \pm 91.6$ & $0.299 \pm 0.007$ & $\mathbf{0.296 \pm 0.008}$ \\
        Compas & $249.0 \pm 220.4$ & $426.0 \pm 199.9$ & $\mathbf{0.331 \pm 0.012}$ & $\mathbf{0.331 \pm 0.012}$ \\
        FICO & $2.8 \pm 2.2$ & $154.4 \pm 56.3$ & $0.301 \pm 0.008$ & $\mathbf{0.296 \pm 0.009}$ \\
        Heart & $12.4 \pm 10.8$ & $10304.2 \pm 9487.4$ & $0.223 \pm 0.033$ & $\mathbf{0.203 \pm 0.032}$ \\
        HELOC & $1.0 \pm 0.0$ & $130.8 \pm 67.1$ & $\mathbf{0.287 \pm 0.017}$ & $0.288 \pm 0.013$ \\
        MGH & $1.4 \pm 0.5$ & $5.4 \pm 1.4$ & $0.061 \pm 0.005$ & $\mathbf{0.031 \pm 0.006}$ \\
        Monks1 & $8.4 \pm 7.8$ & $6.0 \pm 5.1$ & $\mathbf{0.173 \pm 0.088}$ & $\mathbf{0.173 \pm 0.087}$ \\
        Monks2 & $444.8 \pm 469.6$ & $105.6 \pm 121.5$ & $0.501 \pm 0.070$ & $\mathbf{0.438 \pm 0.091}$ \\
        Monks3 & $3.6 \pm 3.1$ & $45.2 \pm 39.1$ & $\mathbf{0.133 \pm 0.071}$ & $0.158 \pm 0.070$ \\
        Mushroom & $1.0 \pm 0.0$ & $18.0 \pm 0.0$ & $0.037 \pm 0.003$ & $\mathbf{0.007 \pm 0.002}$ \\
        Restaurant 20 & $2.4 \pm 1.1$ & $23.2 \pm 17.3$ & $\mathbf{0.298 \pm 0.010}$ & $0.313 \pm 0.011$ \\
        Spambase & $5.8 \pm 4.2$ & $292.8 \pm 91.6$ & $0.126 \pm 0.006$ & $\mathbf{0.102 \pm 0.004}$ \\
        Student & $1.4 \pm 0.5$ & $23.2 \pm 17.3$ & $0.267 \pm 0.062$ & $\mathbf{0.202 \pm 0.037}$ \\
        Wine & $34.0 \pm 25.6$ & $380.6 \pm 211.7$ & $0.291 \pm 0.017$ & $\mathbf{0.277 \pm 0.008}$ \\
        \bottomrule
    \end{tabular}
    \caption{Rashomon Set Size and Global Objective Loss. $\lambda = 0.005$, $\varepsilon = 0.01$, $\mu = 0.01$, $\alpha = 0.05$}
\end{table}

\begin{table}[H]
    \centering
    \label{tab:pos_metrics_l.005}
    \small
    \begin{tabular}{lcccc}
        \toprule
        & \multicolumn{2}{c}{\textbf{Sparsity (Pos) ($\downarrow$)}} & \multicolumn{2}{c}{\textbf{FNR ($\downarrow$)}} \\
        \cmidrule(r){2-3} \cmidrule(l){4-5}
        \textbf{Dataset} & \textbf{\gravitree} & \textbf{TreeFARMS} & \textbf{\gravitree} & \textbf{TreeFARMS} \\
        \midrule
        NIJ Recidivism & 2.66 $\pm$ 0.16 & \textbf{2.28 $\pm$ 0.16} & \textbf{0.139 $\pm$ 0.034} & 0.151 $\pm$ 0.005 \\
        Adult & \textbf{2.49 $\pm$ 0.42} & 2.64 $\pm$ 0.05 & \textbf{0.082 $\pm$ 0.017} & 0.159 $\pm$ 0.017 \\
        Aging & \textbf{3.31 $\pm$ 0.08} & 3.62 $\pm$ 0.11 & \textbf{0.380 $\pm$ 0.074} & 0.383 $\pm$ 0.038 \\
        Bar7 & \textbf{2.14 $\pm$ 0.25} & 3.03 $\pm$ 0.10 & 0.544 $\pm$ 0.044 & \textbf{0.497 $\pm$ 0.037} \\
        Bar & \textbf{2.78 $\pm$ 0.07} & 3.19 $\pm$ 0.04 & 0.412 $\pm$ 0.035 & \textbf{0.397 $\pm$ 0.018} \\
        BCW & 2.88 $\pm$ 0.13 & \textbf{2.87 $\pm$ 0.22} & 0.221 $\pm$ 0.050 & \textbf{0.076 $\pm$ 0.010} \\
        Bike  & \textbf{3.01 $\pm$ 0.14} & 3.38 $\pm$ 0.09 & \textbf{0.084 $\pm$ 0.015} & 0.095 $\pm$ 0.009 \\
        Car Evaluation & \textbf{2.00 $\pm$ 0.00} & 2.00 $\pm$ 0.00 & \textbf{0.000 $\pm$ 0.000} & 0.000 $\pm$ 0.000 \\
        Carryout Takeaway & \textbf{2.61 $\pm$ 0.38} & 3.07 $\pm$ 0.08 & \textbf{0.324 $\pm$ 0.027} & 0.388 $\pm$ 0.034 \\
        Coffee House & 3.31 $\pm$ 0.06 & \textbf{3.29 $\pm$ 0.05} & \textbf{0.247 $\pm$ 0.010} & 0.247 $\pm$ 0.013 \\
        Compas & 2.74 $\pm$ 0.05 & \textbf{2.58 $\pm$ 0.04} & \textbf{0.382 $\pm$ 0.016} & 0.391 $\pm$ 0.014 \\
        FICO & \textbf{1.89 $\pm$ 0.71} & 1.98 $\pm$ 0.02 & 0.380 $\pm$ 0.017 & \textbf{0.356 $\pm$ 0.019} \\
        Heart & \textbf{2.70 $\pm$ 0.35} & 3.26 $\pm$ 0.31 & \textbf{0.198 $\pm$ 0.031} & 0.237 $\pm$ 0.039 \\
        HELOC & 2.98 $\pm$ 0.11 & \textbf{2.25 $\pm$ 0.08} & \textbf{0.139 $\pm$ 0.016} & 0.146 $\pm$ 0.014 \\
        MGH & \textbf{2.00 $\pm$ 0.00} & 3.05 $\pm$ 0.03 & \textbf{0.023 $\pm$ 0.009} & 0.024 $\pm$ 0.009 \\
        Monks1 & 2.56 $\pm$ 0.38 & \textbf{2.19 $\pm$ 0.27} & \textbf{0.135 $\pm$ 0.143} & 0.144 $\pm$ 0.156 \\
        Monks2 & \textbf{3.09 $\pm$ 0.22} & 3.69 $\pm$ 0.11 & 0.641 $\pm$ 0.190 & \textbf{0.626 $\pm$ 0.125} \\
        Monks3 & \textbf{2.97 $\pm$ 0.35} & 3.25 $\pm$ 0.14 & \textbf{0.067 $\pm$ 0.097} & 0.221 $\pm$ 0.096 \\
        Mushroom & \textbf{2.02 $\pm$ 0.04} & 3.63 $\pm$ 0.00 & 0.023 $\pm$ 0.002 & \textbf{0.014 $\pm$ 0.004} \\
        Restaurant 20 & \textbf{1.51 $\pm$ 0.10} & 2.18 $\pm$ 0.29 & \textbf{0.249 $\pm$ 0.011} & 0.280 $\pm$ 0.015 \\
        Spambase & \textbf{2.43 $\pm$ 0.05} & 3.33 $\pm$ 0.28 & 0.181 $\pm$ 0.022 & \textbf{0.154 $\pm$ 0.032} \\
        Student & \textbf{2.88 $\pm$ 0.47} & 3.86 $\pm$ 0.04 & 0.199 $\pm$ 0.070 & \textbf{0.191 $\pm$ 0.040} \\
        Wine & \textbf{2.58 $\pm$ 0.03} & 2.59 $\pm$ 0.16 & \textbf{0.169 $\pm$ 0.017} & 0.203 $\pm$ 0.021 \\
        \bottomrule
    \end{tabular}
    \caption{Positive Class Performance: Average Sparsity and Loss. $\lambda = 0.005$, $\varepsilon = 0.01$, $\mu = 0.01$, $\alpha = 0.05$}
\end{table}

\begin{table}[H]
    \centering
    \label{tab:runtime_tf_l.005}
    \begin{tabular}{lcc}
        \toprule
        \textbf{Dataset} & \textbf{\gravitree Runtime (s) ($\downarrow$)} & \textbf{TreeFARMS Runtime (s) ($\downarrow$)} \\
        \midrule
        NIJ Recidivism & \textbf{1.01 $\pm$ 0.04} & 3.03 $\pm$ 0.10 \\
        Adult & \textbf{3.45 $\pm$ 0.54} & 6.95 $\pm$ 1.23 \\
        Aging & \textbf{0.28 $\pm$ 0.18} & 2.47 $\pm$ 1.42 \\
        Bar7 & \textbf{0.01 $\pm$ 0.00} & 0.06 $\pm$ 0.02 \\
        Bar & \textbf{0.07 $\pm$ 0.02} & 0.39 $\pm$ 0.12 \\
        BCW & \textbf{0.01 $\pm$ 0.00} & 0.10 $\pm$ 0.02 \\
        Bike & \textbf{6.11 $\pm$ 0.76} & 31.07 $\pm$ 4.48 \\
        Car Evaluation & \textbf{0.03 $\pm$ 0.01} & 0.12 $\pm$ 0.03 \\
        Carryout Takeaway & \textbf{0.05 $\pm$ 0.03} & 0.51 $\pm$ 0.23 \\
        Coffee House & \textbf{0.21 $\pm$ 0.05} & 0.65 $\pm$ 0.22 \\
        Compas & \textbf{5.55 $\pm$ 1.34} & 11.62 $\pm$ 2.76 \\
        FICO & \textbf{45.61 $\pm$ 9.76} & 223.45 $\pm$ 34.35 \\
        Heart & \textbf{0.21 $\pm$ 0.09} & 4.64 $\pm$ 2.31 \\
        MGH & \textbf{0.18 $\pm$ 0.11} & 0.49 $\pm$ 0.29 \\
        Monks1 & \textbf{0.01 $\pm$ 0.01} & 0.10 $\pm$ 0.07 \\
        Monks2 & \textbf{0.03 $\pm$ 0.03} & 0.10 $\pm$ 0.03 \\
        Monks3 & \textbf{0.00 $\pm$ 0.00} & 0.03 $\pm$ 0.02 \\
        Mushroom & \textbf{0.02 $\pm$ 0.00} & 0.10 $\pm$ 0.01 \\
        Restaurant 20 & \textbf{0.08 $\pm$ 0.02} & 0.64 $\pm$ 0.14 \\
        Spambase & \textbf{3.57 $\pm$ 0.31} & 30.96 $\pm$ 1.98 \\
        Student & \textbf{0.22 $\pm$ 0.11} & 2.46 $\pm$ 1.60 \\
        Wine & \textbf{15.79 $\pm$ 4.04} & 100.71 $\pm$ 35.98 \\
        \bottomrule
    \end{tabular}
    \caption{Computational Efficiency: \gravitree vs. TreeFARMS Runtime. $\lambda = 0.005$, $\varepsilon = 0.01$, $\mu = 0.01$, $\alpha = 0.05$}
\end{table}

\begin{table}[H]
    \centering
    \label{tab:gravitree-vs-tf_loss_l.002}
    \begin{tabular}{lcccc}
        \toprule
        & \multicolumn{2}{c}{\textbf{Rashomon Set Size}} & \multicolumn{2}{c}{\textbf{Test Loss ($\downarrow$)}} \\
        \cmidrule(r){2-3} \cmidrule(l){4-5}
        \textbf{Dataset} & \textbf{\gravitree} & \textbf{TreeFARMS} & \textbf{\gravitree} & \textbf{TF} \\
        \midrule
        NIJ Recidivism & $16.2 \pm 8.3$ & $1517.4 \pm 424.6$ & 0.308 $\pm$ 0.007 & \textbf{0.299 $\pm$ 0.003} \\
        Adult  & $1.0 \pm 0.0$ & $5310.4 \pm 2987.5$ & 0.293 $\pm$ 0.005 & \textbf{0.226 $\pm$ 0.017} \\
        Aging & $7.8 \pm 9.7$ & $7113.6 \pm 2255.7$ & \textbf{0.387 $\pm$ 0.044} & 0.396 $\pm$ 0.024 \\
        Bar7 & $2.0 \pm 1.4$ & $368.4 \pm 110.5$ & 0.311 $\pm$ 0.013 & \textbf{0.285 $\pm$ 0.009} \\
        Bar & $7.6 \pm 2.4$ & $1027.2 \pm 322.9$ & 0.298 $\pm$ 0.015 & \textbf{0.280 $\pm$ 0.008} \\
        BCW & $1.0 \pm 0.0$ & $122.4 \pm 160.6$ & 0.081 $\pm$ 0.017 & \textbf{0.079 $\pm$ 0.006} \\
        Bike & $1.0 \pm 0.0$ & $441.0 \pm 148.2$ & 0.175 $\pm$ 0.006 & \textbf{0.142 $\pm$ 0.003} \\
        Car Evaluation & $1.0 \pm 0.0$ & $2.8 \pm 1.0$ & 0.242 $\pm$ 0.016 & \textbf{0.136 $\pm$ 0.012} \\
        Carryout Takeaway & $1.8 \pm 1.7$ & $2318.0 \pm 549.6$ & \textbf{0.360 $\pm$ 0.026} & 0.382 $\pm$ 0.019 \\
        Coffee House & $14.4 \pm 15.9$ & $263.6 \pm 148.4$ & 0.299 $\pm$ 0.007 & \textbf{0.291 $\pm$ 0.010} \\
        Compas & $249.0 \pm 165.6$ & $29662.2 \pm 3444.7$ & 0.331 $\pm$ 0.012 & \textbf{0.327 $\pm$ 0.015} \\
        FICO & $2.8 \pm 1.3$ & $5145.2 \pm 285.8$ & 0.301 $\pm$ 0.008 & \textbf{0.294 $\pm$ 0.011} \\
        Heart & $12.4 \pm 11.4$ & $209.6 \pm 322.3$ & 0.223 $\pm$ 0.033 & \textbf{0.209 $\pm$ 0.042} \\
        HELOC & $1.0 \pm 0.0$ & $12522.4 \pm 2892.3$ & 0.287 $\pm$ 0.017 & \textbf{0.287 $\pm$ 0.015} \\
        MGH & $1.4 \pm 0.5$ & $4.0 \pm 0.0$ & 0.061 $\pm$ 0.005 & \textbf{0.032 $\pm$ 0.010} \\
        Monks1 & $8.4 \pm 6.2$ & $16.0 \pm 14.4$ & \textbf{0.173 $\pm$ 0.088} & 0.184 $\pm$ 0.088 \\
        Monks2 & $444.8 \pm 546.0$ & $106.6 \pm 102.4$ & 0.501 $\pm$ 0.070 & \textbf{0.443 $\pm$ 0.099} \\
        Monks3 & $3.6 \pm 3.2$ & $31.2 \pm 17.6$ & \textbf{0.133 $\pm$ 0.071} & 0.157 $\pm$ 0.070 \\
        Mushroom & $1.0 \pm 0.0$ & $20.0 \pm 0.0$ & 0.037 $\pm$ 0.003 & \textbf{0.004 $\pm$ 0.001} \\
        Restaurant 20 & $2.4 \pm 1.0$ & $2828.6 \pm 430.9$ & \textbf{0.298 $\pm$ 0.010} & 0.325 $\pm$ 0.014 \\
        Spambase & $5.8 \pm 3.2$ & $1003.6 \pm 489.0$ & 0.126 $\pm$ 0.006 & \textbf{0.097 $\pm$ 0.005} \\
        Student & $1.4 \pm 0.5$ & $1627.2 \pm 1411.3$ & 0.267 $\pm$ 0.062 & \textbf{0.208 $\pm$ 0.037} \\
        Wine & $34.0 \pm 22.5$ & $2367.4 \pm 223.5$ & 0.291 $\pm$ 0.017 & \textbf{0.276 $\pm$ 0.009} \\
        \bottomrule
    \end{tabular}
    \caption{Rashomon Set Size and Test Loss. $\lambda = 0.002$, $\varepsilon = 0.01$, $\mu = 0.01$, $\alpha = 0.05$}
\end{table}

\begin{table}[H]
    \centering
    \label{tab:pos_metrics_l.002}
    \small
    \begin{tabular}{lcccc}
        \toprule
        & \multicolumn{2}{c}{\textbf{Sparsity (Pos) ($\downarrow$)}} & \multicolumn{2}{c}{\textbf{FNR ($\downarrow$)}} \\
        \cmidrule(r){2-3} \cmidrule(l){4-5}
        \textbf{Dataset} & \textbf{\gravitree} & \textbf{TreeFARMS} & \textbf{\gravitree} & \textbf{TreeFARMS} \\
        \midrule
        NIJ Recidivism & \textbf{2.66 $\pm$ 0.16} & 2.76 $\pm$ 0.01 & \textbf{0.139 $\pm$ 0.034} & 0.161 $\pm$ 0.006 \\
        Adult & \textbf{2.49 $\pm$ 0.42} & 2.91 $\pm$ 0.10 & \textbf{0.082 $\pm$ 0.017} & 0.168 $\pm$ 0.031 \\
        Aging & \textbf{3.31 $\pm$ 0.08} & 3.90 $\pm$ 0.07 & \textbf{0.380 $\pm$ 0.074} & 0.403 $\pm$ 0.048 \\
        Bar7 & \textbf{2.14 $\pm$ 0.25} & 3.34 $\pm$ 0.08 & 0.544 $\pm$ 0.044 & \textbf{0.477 $\pm$ 0.046} \\
        Bar & \textbf{2.78 $\pm$ 0.07} & 3.57 $\pm$ 0.05 & 0.412 $\pm$ 0.035 & \textbf{0.391 $\pm$ 0.014} \\
        BCW & 2.88 $\pm$ 0.13 & \textbf{3.07 $\pm$ 0.10} & 0.221 $\pm$ 0.050 & \textbf{0.101 $\pm$ 0.030} \\
        Bike & \textbf{3.01 $\pm$ 0.14} & 3.60 $\pm$ 0.19 & \textbf{0.084 $\pm$ 0.015} & 0.091 $\pm$ 0.009 \\
        Car Evaluation & \textbf{2.00 $\pm$ 0.00} & 2.28 $\pm$ 0.34 & \textbf{0.000 $\pm$ 0.000} & 0.005 $\pm$ 0.006 \\
        Carryout Takeaway & \textbf{2.61 $\pm$ 0.38} & 3.74 $\pm$ 0.13 & \textbf{0.324 $\pm$ 0.027} & 0.372 $\pm$ 0.028 \\
        Coffee House & \textbf{3.31 $\pm$ 0.06} & 3.49 $\pm$ 0.04 & 0.247 $\pm$ 0.010 & \textbf{0.247 $\pm$ 0.013} \\
        Compas & \textbf{2.74 $\pm$ 0.05} & 3.05 $\pm$ 0.10 & \textbf{0.382 $\pm$ 0.016} & 0.392 $\pm$ 0.017 \\
        FICO & \textbf{1.89 $\pm$ 0.71} & 2.87 $\pm$ 0.18 & 0.380 $\pm$ 0.017 & \textbf{0.355 $\pm$ 0.015} \\
        Heart & \textbf{2.70 $\pm$ 0.35} & 3.64 $\pm$ 0.17 & \textbf{0.198 $\pm$ 0.031} & 0.252 $\pm$ 0.042 \\
        MGH & \textbf{2.00 $\pm$ 0.00} & 3.06 $\pm$ 0.03 & \textbf{0.023 $\pm$ 0.009} & 0.023 $\pm$ 0.010 \\
        Monks1 & \textbf{2.56 $\pm$ 0.38} & 3.32 $\pm$ 0.15 & \textbf{0.135 $\pm$ 0.143} & 0.198 $\pm$ 0.104 \\
        Monks2 & \textbf{3.09 $\pm$ 0.22} & 3.78 $\pm$ 0.09 & 0.641 $\pm$ 0.190 & \textbf{0.576 $\pm$ 0.205} \\
        Monks3 & \textbf{2.97 $\pm$ 0.35} & 3.35 $\pm$ 0.12 & \textbf{0.067 $\pm$ 0.097} & 0.250 $\pm$ 0.115 \\
        Mushroom & \textbf{2.02 $\pm$ 0.04} & 3.58 $\pm$ 0.00 & 0.023 $\pm$ 0.002 & \textbf{0.008 $\pm$ 0.003} \\
        Restaurant 20 & \textbf{1.51 $\pm$ 0.10} & 3.23 $\pm$ 0.20 & \textbf{0.249 $\pm$ 0.011} & 0.312 $\pm$ 0.027 \\
        Spambase & \textbf{2.43 $\pm$ 0.05} & 3.39 $\pm$ 0.19 & 0.181 $\pm$ 0.022 & \textbf{0.178 $\pm$ 0.010} \\
        Student & \textbf{2.88 $\pm$ 0.47} & 3.93 $\pm$ 0.04 & 0.199 $\pm$ 0.070 & \textbf{0.184 $\pm$ 0.045} \\
        Wine & \textbf{2.58 $\pm$ 0.03} & 3.57 $\pm$ 0.22 & \textbf{0.169 $\pm$ 0.017} & 0.205 $\pm$ 0.026 \\
        \bottomrule
    \end{tabular}
    \caption{Positive Class Performance: Average Sparsity and Loss. $\lambda = 0.002$, $\varepsilon = 0.01$, $\mu = 0.01$, $\alpha = 0.05$}
\end{table}

\begin{table}[H]
    \centering
    \label{tab:runtime_tf_l.002}
    \begin{tabular}{lcc}
        \toprule
        \textbf{Dataset} & \textbf{\gravitree Runtime (s) ($\downarrow$)} & \textbf{TreeFARMS Runtime (s) ($\downarrow$)} \\
        \midrule
        NIJ Recidivism & \textbf{1.03 $\pm$ 0.04} & 3.28 $\pm$ 0.16 \\
        Adult & \textbf{3.44 $\pm$ 0.56} & 8.12 $\pm$ 1.30 \\
        Aging & \textbf{0.28 $\pm$ 0.18} & 2.54 $\pm$ 1.37 \\
        Bar7 & \textbf{0.01 $\pm$ 0.00} & 0.07 $\pm$ 0.02 \\
        Bar & \textbf{0.07 $\pm$ 0.02} & 0.42 $\pm$ 0.13 \\
        BCW & \textbf{0.01 $\pm$ 0.00} & 0.11 $\pm$ 0.02 \\
        Bike & \textbf{6.15 $\pm$ 0.74} & 32.08 $\pm$ 4.73 \\
        Car Evaluation & \textbf{0.03 $\pm$ 0.01} & 0.14 $\pm$ 0.04 \\
        Carryout Takeaway & \textbf{0.05 $\pm$ 0.03} & 0.53 $\pm$ 0.23 \\
        Coffee House & \textbf{0.22 $\pm$ 0.04} & 0.67 $\pm$ 0.23 \\
        Compas & \textbf{5.54 $\pm$ 1.36} & 12.52 $\pm$ 2.85 \\
        FICO & \textbf{45.63 $\pm$ 9.81} & 243.81 $\pm$ 27.33 \\
        Heart & \textbf{0.23 $\pm$ 0.10} & 5.12 $\pm$ 2.35 \\
        MGH & \textbf{0.18 $\pm$ 0.10} & 0.95 $\pm$ 0.64 \\
        Monks1 & \textbf{0.01 $\pm$ 0.01} & 0.10 $\pm$ 0.07 \\
        Monks2 & \textbf{0.03 $\pm$ 0.03} & 0.11 $\pm$ 0.04 \\
        Monks3 & \textbf{0.00 $\pm$ 0.00} & 0.03 $\pm$ 0.02 \\
        Mushroom & \textbf{0.02 $\pm$ 0.00} & 0.12 $\pm$ 0.01 \\
        Restaurant 20 & \textbf{0.07 $\pm$ 0.02} & 0.64 $\pm$ 0.13 \\
        Spambase & \textbf{3.57 $\pm$ 0.31} & 32.24 $\pm$ 1.64 \\
        Student & \textbf{0.20 $\pm$ 0.10} & 2.38 $\pm$ 1.49 \\
        Wine & \textbf{15.44 $\pm$ 4.04} & 102.80 $\pm$ 37.97 \\
        \bottomrule
    \end{tabular}
    \caption{Computational Efficiency: \gravitree vs. TreeFARMS Runtime. $\lambda = 0.002$, $\varepsilon = 0.01$, $\mu = 0.01$, $\alpha = 0.05$}
\end{table}

\subsection{SORTeD \cite{arslan2025sorted}} 
We report similar results for the SORTeD algorithm as TreeFARMS. Note that SORTeD optimizes the same objective function as TreeFARMS. In that spirit, we perform experiments with the same hyperparameters as above, i.e:
\begin{itemize}
    \item For \gravitree, we set $\mu = 0.01, \alpha = 0.05, \varepsilon = 0.01$, depth $= 5$. 
    \item For SORTeD, we set $\lambda \in \{0.002, 0.005\}, \varepsilon = 0.01$ depth $= 5$ 
\end{itemize}

\begin{table}[H]
    \centering
    \label{tab:runtime_vs_sortd_l.002}
    \footnotesize
    \begin{tabular}{lcc}
        \toprule
        \textbf{Dataset} & \textbf{\gravitree Runtime (s) ($\downarrow$)} & \textbf{SORTeD Runtime (s) ($\downarrow$)} \\
        \midrule
        NIJ Recidivism & $1.49 \pm 0.07$ & $\mathbf{0.23 \pm 0.03}$ \\
        Adult & $5.01 \pm 0.89$ & $\mathbf{1.45 \pm 0.08}$ \\
        Aging & $0.38 \pm 0.22$ & $\mathbf{0.07 \pm 0.01}$ \\
        Bar7 & $\mathbf{0.02 \pm 0.01}$ & $0.03 \pm 0.00$ \\
        Bar & $0.07 \pm 0.02$ & $\mathbf{0.04 \pm 0.01}$ \\
        BCW & $\mathbf{0.01 \pm 0.00}$ & $0.02 \pm 0.00$ \\
        Bike & $8.94 \pm 0.99$ & $\mathbf{1.05 \pm 0.05}$ \\
        Car Evaluation & $0.03 \pm 0.01$ & $\mathbf{0.02 \pm 0.00}$ \\
        Carryout Takeaway & $0.07 \pm 0.03$ & $\mathbf{0.04 \pm 0.01}$ \\
        Coffee House & $0.18 \pm 0.04$ & $\mathbf{0.06 \pm 0.01}$ \\
        Compas & $5.63 \pm 1.50$ & $\mathbf{0.70 \pm 0.11}$ \\
        FICO & $103.51 \pm 34.09$ & $\mathbf{4.94 \pm 0.90}$ \\
        Heart & $0.20 \pm 0.09$ & $\mathbf{0.05 \pm 0.01}$ \\
        HELOC & $5.94 \pm 0.98$ & $\mathbf{0.39 \pm 0.01}$ \\
        MGH & $0.26 \pm 0.14$ & $\mathbf{0.08 \pm 0.02}$ \\
        Monks1 & $\mathbf{0.01 \pm 0.00}$ & $0.01 \pm 0.00$ \\
        Monks2 & $\mathbf{0.01 \pm 0.00}$ & $0.01 \pm 0.00$ \\
        Monks3 & $\mathbf{0.00 \pm 0.00}$ & $0.01 \pm 0.00$ \\
        Mushroom & $\mathbf{0.03 \pm 0.00}$ & $0.04 \pm 0.00$ \\
        Restaurant 20 & $0.13 \pm 0.03$ & $\mathbf{0.04 \pm 0.01}$ \\
        Spambase & $6.93 \pm 0.70$ & $\mathbf{0.39 \pm 0.03}$ \\
        Student & $0.31 \pm 0.15$ & $\mathbf{0.05 \pm 0.01}$ \\
        Wine & $21.03 \pm 5.70$ & $\mathbf{1.52 \pm 0.26}$ \\
        \bottomrule
    \end{tabular}
    \caption{Computational Efficiency: \gravitree vs. SORTeD Runtimes. $\lambda = 0.002$, $\varepsilon = 0.01$, $\mu = 0.01$, $\alpha = 0.05$}
\end{table}

\begin{table}[H]
    \centering
    \label{tab:gravitree-vs-sortd_l.002}
    \footnotesize
    \begin{tabular}{lcccc}
        \toprule
        & \multicolumn{2}{c}{\textbf{Rashomon Set Size}} & \multicolumn{2}{c}{\textbf{Test Loss ($\downarrow$)}} \\
        \cmidrule(r){2-3} \cmidrule(l){4-5}
        \textbf{Dataset} & \textbf{\gravitree} & \textbf{SORTeD} & \textbf{\gravitree} & \textbf{SORTeD} \\
        \midrule
        NIJ Recidivism & $94.8 \pm 22.0$ & $833.0 \pm 270.0$ & $0.305 \pm 0.003$ & $\mathbf{0.298 \pm 0.003}$ \\
        Adult & $2.0 \pm 1.3$ & $282.8 \pm 141.6$ & $0.227 \pm 0.031$ & $\mathbf{0.222 \pm 0.016}$ \\
        Aging & $16.6 \pm 20.3$ & $222.2 \pm 70.5$ & $\mathbf{0.388 \pm 0.043}$ & $0.389 \pm 0.023$ \\
        Bar7 & $4.6 \pm 2.2$ & $797.6 \pm 239.5$ & $0.309 \pm 0.015$ & $\mathbf{0.284 \pm 0.010}$ \\
        Bar & $8.4 \pm 2.3$ & $928.8 \pm 292.1$ & $0.298 \pm 0.015$ & $\mathbf{0.281 \pm 0.008}$ \\
        BCW & $1.0 \pm 0.0$ & $21.8 \pm 28.6$ & $\mathbf{0.067 \pm 0.007}$ & $0.081 \pm 0.008$ \\
        Bike & $1.0 \pm 0.0$ & $122.6 \pm 41.2$ & $0.156 \pm 0.007$ & $\mathbf{0.141 \pm 0.003}$ \\
        Car Evaluation & $1.0 \pm 0.0$ & $2.8 \pm 1.0$ & $0.141 \pm 0.015$ & $\mathbf{0.136 \pm 0.012}$ \\
        Carryout Takeaway & $8.8 \pm 5.7$ & $870.8 \pm 206.1$ & $\mathbf{0.370 \pm 0.008}$ & $0.380 \pm 0.016$ \\
        Coffee House & $43.6 \pm 54.9$ & $617.0 \pm 347.7$ & $0.300 \pm 0.009$ & $\mathbf{0.289 \pm 0.010}$ \\
        Compas & $594.0 \pm 388.6$ & $1143.4 \pm 132.8$ & $0.331 \pm 0.012$ & $\mathbf{0.328 \pm 0.015}$ \\
        FICO & $251.8 \pm 201.4$ & $1077.4 \pm 59.8$ & $0.296 \pm 0.010$ & $\mathbf{0.294 \pm 0.010}$ \\
        Heart & $1.8 \pm 1.6$ & $200.8 \pm 308.8$ & $\mathbf{0.200 \pm 0.024}$ & $0.204 \pm 0.046$ \\
        HELOC & $85.8 \pm 60.5$ & $1316.0 \pm 303.6$ & $0.293 \pm 0.014$ & $\mathbf{0.286 \pm 0.015}$ \\
        MGH & $2.6 \pm 0.5$ & $4.0 \pm 0.0$ & $0.061 \pm 0.005$ & $\mathbf{0.032 \pm 0.010}$ \\
        Monks1 & $12.6 \pm 10.6$ & $31.8 \pm 28.7$ & $\mathbf{0.177 \pm 0.089}$ & $0.185 \pm 0.089$ \\
        Monks2 & $1.0 \pm 0.0$ & $569.8 \pm 547.3$ & $\mathbf{0.418 \pm 0.051}$ & $0.433 \pm 0.099$ \\
        Monks3 & $11.2 \pm 17.9$ & $31.2 \pm 17.6$ & $\mathbf{0.141 \pm 0.085}$ & $0.157 \pm 0.070$ \\
        Mushroom & $1.0 \pm 0.0$ & $20.0 \pm 0.0$ & $0.030 \pm 0.002$ & $\mathbf{0.004 \pm 0.001}$ \\
        Restaurant 20 & $3.4 \pm 1.4$ & $1133.4 \pm 172.4$ & $\mathbf{0.298 \pm 0.010}$ & $0.326 \pm 0.015$ \\
        Spambase & $2.6 \pm 3.2$ & $105.8 \pm 51.6$ & $0.114 \pm 0.005$ & $\mathbf{0.097 \pm 0.004}$ \\
        Student & $1.0 \pm 0.0$ & $48.8 \pm 42.3$ & $\mathbf{0.206 \pm 0.023}$ & $0.208 \pm 0.037$ \\
        Wine & $86.0 \pm 56.6$ & $1060.2 \pm 100.0$ & $0.287 \pm 0.016$ & $\mathbf{0.276 \pm 0.009}$ \\
        \bottomrule
    \end{tabular}
    \caption{Rashomon Set Size and Global Objective Loss. $\lambda = 0.002$, $\varepsilon = 0.01$, $\mu = 0.01$, $\alpha = 0.05$}
\end{table}

\begin{table}[H]
    \centering
    \label{tab:gravitree-vs-sortd-positive_l.002}
    \small
    \begin{tabular}{lcccc}
        \toprule
        & \multicolumn{2}{c}{\textbf{Sparsity (Pos) ($\downarrow$)}} & \multicolumn{2}{c}{\textbf{FNR ($\downarrow$)}} \\
        \cmidrule(r){2-3} \cmidrule(l){4-5}
        \textbf{Dataset} & \textbf{\gravitree} & \textbf{SORTeD} & \textbf{\gravitree} & \textbf{SORTeD} \\
        \midrule
        NIJ Recidivism & $\mathbf{2.67 \pm 0.07}$ & $2.77 \pm 0.03$ & $\mathbf{0.130 \pm 0.012}$ & $0.161 \pm 0.008$ \\
        Adult & $\mathbf{2.73 \pm 0.18}$ & $3.03 \pm 0.10$ & $0.185 \pm 0.046$ & $\mathbf{0.174 \pm 0.030}$ \\
        Aging & $\mathbf{3.35 \pm 0.15}$ & $3.92 \pm 0.06$ & $\mathbf{0.384 \pm 0.070}$ & $0.394 \pm 0.044$ \\
        Bar7 & $\mathbf{2.31 \pm 0.36}$ & $3.32 \pm 0.08$ & $0.523 \pm 0.060$ & $\mathbf{0.477 \pm 0.046}$ \\
        Bar & $\mathbf{2.80 \pm 0.07}$ & $3.54 \pm 0.07$ & $0.414 \pm 0.033$ & $\mathbf{0.394 \pm 0.017}$ \\
        BCW & $\mathbf{2.92 \pm 0.26}$ & $3.01 \pm 0.22$ & $\mathbf{0.067 \pm 0.016}$ & $0.106 \pm 0.032$ \\
        Bike & $\mathbf{3.27 \pm 0.13}$ & $3.50 \pm 0.14$ & $\mathbf{0.076 \pm 0.008}$ & $0.087 \pm 0.006$ \\
        Car Evaluation & $\mathbf{2.09 \pm 0.18}$ & $2.27 \pm 0.34$ & $\mathbf{0.000 \pm 0.000}$ & $0.005 \pm 0.006$ \\
        Carryout Takeaway & $\mathbf{2.98 \pm 0.20}$ & $3.74 \pm 0.13$ & $\mathbf{0.334 \pm 0.004}$ & $0.369 \pm 0.026$ \\
        Coffee House & $\mathbf{3.31 \pm 0.08}$ & $3.55 \pm 0.03$ & $0.253 \pm 0.012$ & $\mathbf{0.237 \pm 0.010}$ \\
        Compas & $\mathbf{2.74 \pm 0.03}$ & $2.92 \pm 0.11$ & $\mathbf{0.383 \pm 0.015}$ & $0.393 \pm 0.021$ \\
        FICO & $\mathbf{2.70 \pm 0.11}$ & $2.74 \pm 0.18$ & $0.362 \pm 0.021$ & $\mathbf{0.349 \pm 0.018}$ \\
        Heart & $\mathbf{3.15 \pm 0.20}$ & $3.65 \pm 0.17$ & $\mathbf{0.201 \pm 0.028}$ & $0.250 \pm 0.043$ \\
        HELOC & $\mathbf{2.90 \pm 0.06}$ & $3.25 \pm 0.15$ & $\mathbf{0.141 \pm 0.015}$ & $0.147 \pm 0.019$ \\
        MGH & $\mathbf{2.01 \pm 0.01}$ & $3.06 \pm 0.03$ & $0.025 \pm 0.009$ & $\mathbf{0.023 \pm 0.010}$ \\
        Monks1 & $\mathbf{2.59 \pm 0.36}$ & $3.30 \pm 0.14$ & $\mathbf{0.149 \pm 0.137}$ & $0.196 \pm 0.103$ \\
        Monks2 & $\mathbf{3.51 \pm 0.32}$ & $3.82 \pm 0.09$ & $\mathbf{0.508 \pm 0.125}$ & $0.574 \pm 0.206$ \\
        Monks3 & $\mathbf{3.14 \pm 0.28}$ & $3.35 \pm 0.12$ & $\mathbf{0.164 \pm 0.114}$ & $0.250 \pm 0.115$ \\
        Mushroom & $\mathbf{2.51 \pm 0.01}$ & $3.58 \pm 0.00$ & $0.034 \pm 0.005$ & $\mathbf{0.008 \pm 0.003}$ \\
        Restaurant 20 & $\mathbf{1.65 \pm 0.20}$ & $3.18 \pm 0.24$ & $\mathbf{0.250 \pm 0.012}$ & $0.313 \pm 0.029$ \\
        Spambase & $\mathbf{2.60 \pm 0.24}$ & $3.34 \pm 0.24$ & $\mathbf{0.177 \pm 0.018}$ & $0.178 \pm 0.011$ \\
        Student & $\mathbf{3.48 \pm 0.17}$ & $3.92 \pm 0.04$ & $0.200 \pm 0.041$ & $\mathbf{0.185 \pm 0.043}$ \\
        Wine & $\mathbf{2.79 \pm 0.12}$ & $3.59 \pm 0.26$ & $\mathbf{0.182 \pm 0.018}$ & $0.208 \pm 0.028$ \\
        \bottomrule
    \end{tabular}
    \caption{Positive Class Performance: Sparsity and Loss. $\lambda = 0.002$, $\varepsilon = 0.01$, $\mu = 0.01$, $\alpha = 0.05$}
\end{table}

\begin{table}[H]
    \centering
    \label{tab:runtime_vs_sortd_l.005}
    \footnotesize
    \begin{tabular}{lcc}
        \toprule
        \textbf{Dataset} & \textbf{\gravitree Runtime (s) ($\downarrow$)} & \textbf{SORTeD Runtime (s) ($\downarrow$)} \\
        \midrule
        NIJ Recidivism & $1.52 \pm 0.09$ & $\mathbf{0.21 \pm 0.01}$ \\
        Adult & $4.99 \pm 0.92$ & $\mathbf{1.32 \pm 0.08}$ \\
        Aging & $0.37 \pm 0.21$ & $\mathbf{0.07 \pm 0.01}$ \\
        Bar7 & $\mathbf{0.02 \pm 0.01}$ & $\mathbf{0.02 \pm 0.00}$ \\
        Bar & $0.07 \pm 0.02$ & $\mathbf{0.03 \pm 0.00}$ \\
        BCW & $\mathbf{0.01 \pm 0.00}$ & $0.02 \pm 0.00$ \\
        Bike & $9.54 \pm 1.04$ & $\mathbf{1.23 \pm 0.07}$ \\
        Car Evaluation & $0.03 \pm 0.01$ & $\mathbf{0.02 \pm 0.00}$ \\
        Carryout Takeaway & $0.07 \pm 0.03$ & $\mathbf{0.04 \pm 0.01}$ \\
        Coffee House & $0.18 \pm 0.04$ & $\mathbf{0.06 \pm 0.01}$ \\
        Compas & $5.88 \pm 1.50$ & $\mathbf{0.70 \pm 0.13}$ \\
        FICO & $96.60 \pm 31.00$ & $\mathbf{4.15 \pm 0.75}$ \\
        Heart & $0.19 \pm 0.09$ & $\mathbf{0.05 \pm 0.01}$ \\
        HELOC & $6.00 \pm 0.94$ & $\mathbf{0.50 \pm 0.02}$ \\
        MGH & $0.31 \pm 0.17$ & $\mathbf{0.12 \pm 0.04}$ \\
        Monks1 & $\mathbf{0.01 \pm 0.00}$ & $0.01 \pm 0.00$ \\
        Monks2 & $\mathbf{0.01 \pm 0.00}$ & $0.01 \pm 0.00$ \\
        Monks3 & $\mathbf{0.00 \pm 0.00}$ & $0.01 \pm 0.00$ \\
        Mushroom & $\mathbf{0.03 \pm 0.00}$ & $0.04 \pm 0.00$ \\
        Restaurant 20 & $0.15 \pm 0.03$ & $\mathbf{0.04 \pm 0.01}$ \\
        Spambase & $6.84 \pm 0.64$ & $\mathbf{0.38 \pm 0.02}$ \\
        Student & $0.30 \pm 0.14$ & $\mathbf{0.05 \pm 0.01}$ \\
        Wine & $20.92 \pm 5.61$ & $\mathbf{1.58 \pm 0.26}$ \\
        \bottomrule
    \end{tabular}
    \caption{Computational Efficiency: \gravitree vs. SORTeD Runtimes. $\lambda = 0.005$, $\varepsilon = 0.01$, $\mu = 0.01$, $\alpha = 0.05$}
\end{table}

\begin{table}[H]
    \centering
    \label{tab:gravitree-vs-sortd_l.005}
    \footnotesize
    \begin{tabular}{lcccc}
        \toprule
        & \multicolumn{2}{c}{\textbf{Rashomon Set Size}} & \multicolumn{2}{c}{\textbf{Test Loss ($\downarrow$)}} \\
        \cmidrule(r){2-3} \cmidrule(l){4-5}
        \textbf{Dataset} & \textbf{\gravitree} & \textbf{SORTeD} & \textbf{\gravitree} & \textbf{SORTeD} \\
        \midrule
        NIJ Recidivism & $94.8 \pm 22.0$ & $87.0 \pm 17.9$ & $\mathbf{0.305 \pm 0.003}$ & $0.306 \pm 0.002$ \\
        Adult & $2.0 \pm 1.3$ & $32.0 \pm 22.9$ & $\mathbf{0.227 \pm 0.031}$ & $0.233 \pm 0.017$ \\
        Aging & $16.6 \pm 20.3$ & $223.2 \pm 101.8$ & $0.388 \pm 0.043$ & $\mathbf{0.380 \pm 0.029}$ \\
        Bar7 & $4.6 \pm 2.2$ & $47.6 \pm 21.4$ & $0.309 \pm 0.015$ & $\mathbf{0.292 \pm 0.007}$ \\
        Bar & $8.4 \pm 2.3$ & $136.2 \pm 56.9$ & $0.298 \pm 0.015$ & $\mathbf{0.285 \pm 0.007}$ \\
        BCW & $1.0 \pm 0.0$ & $25.8 \pm 18.4$ & $\mathbf{0.067 \pm 0.007}$ & $0.070 \pm 0.008$ \\
        Bike & $1.0 \pm 0.0$ & $65.2 \pm 23.7$ & $0.156 \pm 0.007$ & $\mathbf{0.148 \pm 0.005}$ \\
        Car Evaluation & $1.0 \pm 0.0$ & $2.0 \pm 0.0$ & $\mathbf{0.141 \pm 0.015}$ & $\mathbf{0.141 \pm 0.015}$ \\
        Carryout Takeaway & $8.8 \pm 5.7$ & $42.6 \pm 33.2$ & $\mathbf{0.370 \pm 0.008}$ & $0.394 \pm 0.022$ \\
        Coffee House & $43.6 \pm 54.9$ & $292.8 \pm 91.6$ & $0.300 \pm 0.009$ & $\mathbf{0.296 \pm 0.008}$ \\
        Compas & $594.0 \pm 388.6$ & $426.0 \pm 199.9$ & $\mathbf{0.331 \pm 0.012}$ & $\mathbf{0.331 \pm 0.012}$ \\
        FICO & $251.8 \pm 201.4$ & $154.4 \pm 56.3$ & $\mathbf{0.296 \pm 0.010}$ & $\mathbf{0.296 \pm 0.009}$ \\
        Heart & $1.8 \pm 1.6$ & $31.8 \pm 24.1$ & $0.200 \pm 0.024$ & $\mathbf{0.193 \pm 0.046}$ \\
        HELOC & $85.8 \pm 60.5$ & $130.8 \pm 67.1$ & $0.293 \pm 0.014$ & $\mathbf{0.288 \pm 0.013}$ \\
        MGH & $2.6 \pm 0.5$ & $5.4 \pm 1.4$ & $0.061 \pm 0.005$ & $\mathbf{0.031 \pm 0.006}$ \\
        Monks1 & $12.6 \pm 10.6$ & $6.0 \pm 5.1$ & $0.177 \pm 0.089$ & $\mathbf{0.173 \pm 0.087}$ \\
        Monks2 & $1.0 \pm 0.0$ & $105.6 \pm 121.5$ & $\mathbf{0.418 \pm 0.051}$ & $0.438 \pm 0.091$ \\
        Monks3 & $11.2 \pm 17.9$ & $45.2 \pm 39.1$ & $\mathbf{0.141 \pm 0.085}$ & $0.158 \pm 0.070$ \\
        Mushroom & $1.0 \pm 0.0$ & $18.0 \pm 0.0$ & $0.030 \pm 0.002$ & $\mathbf{0.007 \pm 0.002}$ \\
        Restaurant 20 & $3.4 \pm 1.4$ & $23.2 \pm 17.3$ & $\mathbf{0.298 \pm 0.010}$ & $0.313 \pm 0.011$ \\
        Spambase & $2.6 \pm 3.2$ & $7.0 \pm 6.6$ & $0.114 \pm 0.005$ & $\mathbf{0.103 \pm 0.003}$ \\
        Student & $1.0 \pm 0.0$ & $3.2 \pm 1.9$ & $0.206 \pm 0.023$ & $\mathbf{0.203 \pm 0.037}$ \\
        Wine & $86.0 \pm 56.6$ & $243.0 \pm 94.3$ & $0.287 \pm 0.016$ & $\mathbf{0.277 \pm 0.008}$ \\
        \bottomrule
    \end{tabular}
    \caption{Rashomon Set Size and Global Objective Loss. $\lambda = 0.005$, $\varepsilon = 0.01$, $\mu = 0.01$, $\alpha = 0.05$}
\end{table}

\begin{table}[H]
    \centering
    \label{tab:gravitree-vs-sortd-positive_l.005}
    \small
    \begin{tabular}{lcccc}
        \toprule
        & \multicolumn{2}{c}{\textbf{Sparsity (Pos) ($\downarrow$)}} & \multicolumn{2}{c}{\textbf{FNR ($\downarrow$)}} \\
        \cmidrule(r){2-3} \cmidrule(l){4-5}
        \textbf{Dataset} & \textbf{\gravitree} & \textbf{SORTeD} & \textbf{\gravitree} & \textbf{SORTeD} \\
        \midrule
        NIJ Recidivism & $2.67 \pm 0.07$ & $\mathbf{1.87 \pm 0.02}$ & $\mathbf{0.130 \pm 0.012}$ & $0.144 \pm 0.003$ \\
        Adult & $2.73 \pm 0.18$ & $\mathbf{2.71 \pm 0.08}$ & $0.185 \pm 0.046$ & $\mathbf{0.169 \pm 0.033}$ \\
        Aging & $\mathbf{3.35 \pm 0.15}$ & $3.62 \pm 0.08$ & $\mathbf{0.384 \pm 0.070}$ & $0.386 \pm 0.046$ \\
        Bar7 & $\mathbf{2.31 \pm 0.36}$ & $3.02 \pm 0.09$ & $0.523 \pm 0.060$ & $\mathbf{0.496 \pm 0.038}$ \\
        Bar & $\mathbf{2.80 \pm 0.07}$ & $3.19 \pm 0.04$ & $0.414 \pm 0.033$ & $\mathbf{0.397 \pm 0.018}$ \\
        BCW & $\mathbf{2.92 \pm 0.26}$ & $2.94 \pm 0.23$ & $\mathbf{0.067 \pm 0.016}$ & $0.075 \pm 0.010$ \\
        Bike & $\mathbf{3.27 \pm 0.13}$ & $3.36 \pm 0.10$ & $\mathbf{0.076 \pm 0.008}$ & $0.098 \pm 0.010$ \\
        Car Evaluation & $2.09 \pm 0.18$ & $\mathbf{2.00 \pm 0.00}$ & $\mathbf{0.000 \pm 0.000}$ & $\mathbf{0.000 \pm 0.000}$ \\
        Carryout Takeaway & $\mathbf{2.98 \pm 0.20}$ & $3.07 \pm 0.08$ & $\mathbf{0.334 \pm 0.004}$ & $0.388 \pm 0.034$ \\
        Coffee House & $3.31 \pm 0.08$ & $\mathbf{3.29 \pm 0.05}$ & $0.253 \pm 0.012$ & $\mathbf{0.247 \pm 0.013}$ \\
        Compas & $2.74 \pm 0.03$ & $\mathbf{2.58 \pm 0.04}$ & $\mathbf{0.383 \pm 0.015}$ & $0.391 \pm 0.014$ \\
        FICO & $2.70 \pm 0.11$ & $\mathbf{1.98 \pm 0.02}$ & $0.362 \pm 0.021$ & $\mathbf{0.356 \pm 0.019}$ \\
        Heart & $3.15 \pm 0.20$ & $\mathbf{2.99 \pm 0.27}$ & $\mathbf{0.201 \pm 0.028}$ & $0.211 \pm 0.064$ \\
        HELOC & $2.90 \pm 0.06$ & $\mathbf{2.25 \pm 0.08}$ & $\mathbf{0.141 \pm 0.015}$ & $0.146 \pm 0.014$ \\
        MGH & $\mathbf{2.01 \pm 0.01}$ & $3.05 \pm 0.03$ & $0.025 \pm 0.009$ & $\mathbf{0.024 \pm 0.009}$ \\
        Monks1 & $2.59 \pm 0.36$ & $\mathbf{2.19 \pm 0.27}$ & $0.149 \pm 0.137$ & $\mathbf{0.144 \pm 0.156}$ \\
        Monks2 & $\mathbf{3.51 \pm 0.32}$ & $3.69 \pm 0.11$ & $\mathbf{0.508 \pm 0.125}$ & $0.626 \pm 0.125$ \\
        Monks3 & $\mathbf{3.14 \pm 0.28}$ & $3.25 \pm 0.14$ & $\mathbf{0.164 \pm 0.114}$ & $0.221 \pm 0.096$ \\
        Mushroom & $\mathbf{2.51 \pm 0.01}$ & $3.63 \pm 0.01$ & $0.034 \pm 0.005$ & $\mathbf{0.014 \pm 0.004}$ \\
        Restaurant 20 & $\mathbf{1.65 \pm 0.20}$ & $2.18 \pm 0.29$ & $\mathbf{0.250 \pm 0.012}$ & $0.280 \pm 0.015$ \\
        Spambase & $\mathbf{2.60 \pm 0.24}$ & $3.21 \pm 0.40$ & $0.177 \pm 0.018$ & $\mathbf{0.161 \pm 0.039}$ \\
        Student & $\mathbf{3.48 \pm 0.17}$ & $3.85 \pm 0.06$ & $0.200 \pm 0.041$ & $\mathbf{0.190 \pm 0.040}$ \\
        Wine & $2.79 \pm 0.12$ & $\mathbf{2.58 \pm 0.16}$ & $\mathbf{0.182 \pm 0.018}$ & $0.203 \pm 0.021$ \\
        \bottomrule
    \end{tabular}
    \caption{Positive Class Performance: Sparsity and Loss. $\lambda = 0.005$, $\varepsilon = 0.01$, $\mu = 0.01$, $\alpha = 0.05$}
\end{table}

\section{Description of Datasets and Methods}\label{app:dataset_descriptions}

Table \ref{table:datasets} provides a brief overview of the datasets used in this paper, including summary statistics and any dataset-specific processing steps in addition to the standard procedures described below.\\

All categorical features were one-hot encoded, and all continuous features were discretized using the GOSDT+Guesses strategy introduced by \cite{gosdt_guesses}.
We also applied the GOSDT+Guesses strategy to perform feature selection for binary features (using num estimators $= 100$). Unless missing entries could be straightforwardly filled without imputation, any rows containing missing data were discarded. All reported statistics correspond to the datasets after every processing step, including feature selection.\\

In cases of substantial class imbalance, we upsampled the minority class so that the resulting models would not be trivial.
Table \ref{table:datasets} shows the dataset sizes in the original, numerical, processed form as well as an approximate number of binarized features obtained when applying \cite{gosdt_guesses} on $5$ random train test splits of the datasets.

\begin{table}[H]
    \centering
    \caption{Summary count statistics of all datasets after preprocessing}
    \label{table:datasets}
    \begin{tabular}{c|c|c|c|c}
        \textbf{Data Name} & \textbf{\# samples} & \textbf{\# features} & \textbf{\# binarized features} & \textbf{Processing notes}\\
        \hline
        MGH & 5000 & 28 & $12.6 \pm 2.1$ & \\
        NIJ Recidivism Challenge & 16264 & 14 & $13.8 \pm 0.4$ & Fill missing prison offenses with `Unknown'\\
        COMPAS (Recidivism) & 6907 & 7 & $27.4 \pm 1.7$ & Same as \citep{treefarms, semenova2023path}\\
        FICO (Credit) & 10459 & 23 & $41.6 \pm 2.9$ & Same as \citep{treefarms, semenova2023path}\\
        monks1 & 124 & 11 & $8.2 \pm 1.3$ & Same as \citep{treefarms, semenova2023path}\\
        monks2 & 169 & 11 & $8.4 \pm 0.9$ & Same as \citep{treefarms, semenova2023path}\\
        monks3 & 122 & 11 & $6.4 \pm 1.1$ & Same as \citep{treefarms, semenova2023path}\\
        Breast Cancer Wisconsin & 699 & 10 & $9.8 \pm 0.4$ & Same as \citep{treefarms, semenova2023path}\\
        Car Evaluation & 1728 & 15 & $8.6 \pm 0.5$ & Same as \citep{treefarms, semenova2023path}\\
        bar & 1913 & 15 & $11.6 \pm 0.9$ & Same as \citep{treefarms, semenova2023path}\\
        bar7 & 1913 & 14 & $8.6 \pm 0.5$ & Same as \citep{treefarms, semenova2023path}\\
        Carryout Takeaway & 2280 & 15 & $10.8 \pm 1.1$ & Same as \citep{treefarms, semenova2023path}\\
        Coffee House & 3816 & 15 & $12.2 \pm 0.8$ & Same as \citep{treefarms, semenova2023path}\\
        Restaurant 20 & 2653 & 15 & $12.4 \pm 0.5$ & Same as \citep{treefarms, semenova2023path}\\
        Heart & 297 & 24 & $19.6 \pm 2.4$ & \\
        Mushroom & 8124 & 11 & $10.0 \pm 0.0$ & \\
        Spambase & 4601 & 67 & $29.6 \pm 0.5$ & \\
        Wine & 6497 & 61 & $34.8 \pm 2.9$ & \\
        Student & 649 & 31 & $19.8 \pm 2.6$ & \\
        Aging & 714 & 57 & $25.0 \pm 1.9$ & \\
        Adult & 48842 & 14 & $13.4 \pm 0.5$ & \\
        Bike & 17379 & 104 & $26.0 \pm 1.0$ &
    \end{tabular}
\end{table}

\begin{table}[H]
    \centering
    \caption{Licensing and Data Source Information for all Datasets}
    \label{table:dataset-licenses-citations}
    \begin{tabular}{c|c|c}
        \textbf{Data Name} & \textbf{License} & \textbf{Citation}\\
        MGH & N/A & N/A \\
        NIJ Recidivism Challenge & Publicly Available & \cite{nij_challenge} \\
        COMPAS (Recidivism)  & Publicly Available & \cite{compas} \\
        FICO (Credit)  & \href{https://community.fico.com/s/explainable-machine-learning-challenge?tabset-158d9=2}{FICO} & \cite{fico_explainable} \\
        monks1 & CC BY 4.0 & \cite{miscmonksproblems70}\\
        monks2 & CC BY 4.0 & \cite{miscmonksproblems70}\\
        monks3 & CC BY 4.0 & \cite{miscmonksproblems70}\\
        Breast Cancer Wisconsin & CC BY 4.0 & \cite{misc_breast_cancer_wisconsin_(original)_15} \\
        Car Evaluation & CC BY 4.0 & \cite{misc_car_evaluation_19}\\
        bar & CC BY 4.0 & \cite{misc_in-vehicle_coupon_recommendation_603}\\
        bar7 & CC BY 4.0 & \cite{misc_in-vehicle_coupon_recommendation_603}\\
        Carryout Takeaway & CC BY 4.0 & \cite{misc_in-vehicle_coupon_recommendation_603}\\
        Coffee House & CC BY 4.0 & \cite{misc_in-vehicle_coupon_recommendation_603}\\
        Restaurant 20 & CC BY 4.0 & \cite{misc_in-vehicle_coupon_recommendation_603} \\
        Heart & Publicly Available & \cite{heart_disease_45}  \\
        Mushroom & Publicly Available & \cite{mushroom73}  \\
        Spambase & Publicly Available & \cite{spambase_94}  \\
        Wine  & Publicly Available & \cite{openml_wine_47041}  \\
        Student & Publicly Available & \cite{student_performance_320} \\
        Aging & Publicly Available & \cite{nationalpollonhealthyagingnpha936} \\
        Adult &Publicly Available & \cite{adult_2}  \\
        Bike & Publicly Available & \cite{bikesharing275} 
    \end{tabular}
\end{table}


\end{document}